\newcommand{\tosee}[1]{#1}
\newcommand{\prodspace}{\mathcal{X}\times A \times \mathcal{Y}}
\newcommand{\poisoned}{\mathfrak{M}}
\newcommand{\ObjVec}{\mathfrak{V}}
\newcommand{\Bin}{\operatorname{Bin}}
\DeclareMathOperator*{\argmin}{argmin}
\newcommand{\myparagraph}[1]{\paragraph{#1}\ }
\DeclareRobustCommand\onedot{\futurelet\@let@token\@onedot}
\def\@onedot{\ifx\@let@token.\else.\null\fi\xspace}
\def\iid{{i.i.d}\onedot}
\def\aka{{a.k.a}\onedot}
\def\etc{{etc}\onedot}
\begin{document}

\title{Fairness-Aware PAC Learning from Corrupted Data}

\author{\name Nikola Konstantinov$^1$ \email nikolahristov.konstantinov@inf.ethz.ch \\
       \addr Post-Doctoral Fellow, ETH AI Center\\
       Universitätstrasse 6\\
       8092 Zürich, Switzerland
       \AND
       \name Christoph H. Lampert \email chl@ist.ac.at \\
       \addr Institute of Science and Technology Austria (ISTA)\\
       Am Campus 1\\
       3400 Klosterneuburg, Austria}

\editor{Pradeep Ravikumar}
\maketitle

\setcounter{footnote}{1}

\begin{abstract}
Addressing fairness concerns about machine learning models is a crucial step towards their long-term adoption in real-world automated systems. While many approaches have been developed for training fair models from data, little is known about the robustness of these methods to data corruption. In this work we consider fairness-aware learning under worst-case data manipulations. We show that an adversary can in some situations force any learner to return an overly biased classifier, regardless of the sample size and with or without degrading accuracy, and that the strength of the excess bias increases for learning problems with underrepresented protected groups in the data. We also prove that our hardness results are tight up to constant factors. To this end, we study two natural learning algorithms that optimize for both accuracy and fairness and show that these algorithms enjoy guarantees that are order-optimal in terms of the corruption ratio and the protected groups frequencies in the large data limit.\footnotetext{Nikola Konstantinov conducted the work on this paper while being at the Institute of Science and Technology Austria (ISTA). A preliminary version of the work contained in this article appeared in the AFCR@NeurIPS workshop \citep{konstantinov2021impossibility}.}
\end{abstract}

\begin{keywords}
Fairness, robustness, data poisoning, trustworthy machine learning, PAC learning
\end{keywords}

\section{Introduction}
\label{sec:introduction}

Recent years have seen machine learning models greatly advancing the state-of-art performance of automated systems on many real-world tasks. As learned models become increasingly adopted in high-stake decision making, various fairness concerns arise. Indeed, it is now widely recognized that without addressing fairness issues during training, machine learning models can exhibit discriminatory behavior at prediction time \citep{barocas-hardt-narayanan}. Designing principled methods for certifying the fairness of a model is therefore key for increasing the trust in these methods among the general public. 

To this end many ways of measuring and optimizing the fairness of learned models have been developed. The problem is perhaps best studied in the context of group fairness in classification, where the decisions of a binary classifier have to be nondiscriminatory with respect to a certain protected attribute, such as gender or race \citep{barocas-hardt-narayanan}. This is typically done by formulating a desirable fairness property for the task at hand and then optimizing for this property, alongside with accuracy, be it via a data preprocessing step, a modification of the training procedure, or by post-processing of a learned classifier on held-out data \citep{mehrabi2019survey}. The underlying assumption is that by ensuring that the fairness property holds exactly or approximately based on the available data, one obtains a classifier whose decisions will also be fair at prediction time.

A major drawback of this framework is that for many real-world applications the training and validation data available are often times unreliable and biased \citep{biggio2018wild, mehrabi2019survey}. For example, demographic data collected via surveys or online polls is often difficult and expensive to verify. More generally any human-generated data is likely to contain various historical biases. Datasets collected via crowdsourcing or web crawing are also prone to both unwittingly created errors and conscious or even adversarially created biases.

These issues naturally raise concerns about the current practice of training and certifying fair models on such datasets. In fact, recent work has demonstrated empirically that strong poisoning attacks can negatively impact the fairness of \textit{specific learners} based on loss minimization \tosee{\citep{solans2020poisoning, chang2020adversarial, mehrabi2020exacerbating}}. At the same time, little is known about the fundamental limits of fairness-aware learning from corrupted data. Previous work has only partially addressed the problem by studying weak data corruption models, for example by making specific label/attribute noise assumptions. However, these assumptions do not cover all possible (often unknown) problems that real-world data can possess. More generally, in order to avoid a cat-and-mouse game of designing defenses and attacks for fair machine learning models, one would need to be able to \textit{certify fairness} as a property that holds when training under arbitrary, even adversarial, manipulations of the training data \citep{kearns1993learning}.

\myparagraph{Contributions} In our work, we address the aforementioned issues by studying the effect of arbitrary data corruptions on fair learning algorithms. Specifically, we explore the fundamental limits of fairness-aware PAC learning within the classic \textit{malicious adversary model} of \cite{valiant1985learning}, where the adversary can replace a fraction of the data points with arbitrary data, with full knowledge of the learning algorithm, the data distribution and the remaining samples. We focus on binary classification with two popular group fairness constraints - demographic parity \citep{calders2009building} and equal opportunity \citep{hardt2016equality}.

First we show that learning under this adversarial model is provably impossible in a PAC sense - there is \textit{no learning algorithm that can ensure convergence with high probability to a point on the accuracy-fairness Pareto front} on the set of all finite hypothesis spaces, even in the limit of infinite training data. Furthermore, the irreducible excess gap in the fairness measures we study is inversely proportional to the frequency of the rarer of the two protected attributes groups. This makes the robust learning problem especially hard when one of the protected subgroups in the data is underrepresented. These hardness results hold for \textit{any learning algorithm} based on a corrupted dataset, including pre-, in- and post-processing methods in particular.

Perhaps an even more concerning result from a practical perspective is that the adversary can also ensure that any learning algorithm will output a classifier that is \textit{optimal in terms of accuracy, but exhibits a large amount of unfairness}. The bias of such a classifier might go unnoticed for a long time in production systems, especially in applications where sensitive attributes are not revealed to the system at prediction time for privacy reasons.

We also show that our hardness results are tight up to constant factors, in terms of the corruption ratio and the protected group frequencies, by proving matching upper bounds. To this end we study the performance of two natural types of learning algorithms under the malicious adversary model. We show that both algorithms achieve order-optimal performance in the infinite data regime, \textit{thereby providing tight upper and lower bounds on the irreducible error of fairness-aware statistical learning under adversarial data corruption.} 

We conclude with a discussion on the implications of our hardness results, emphasizing the need for developing and studying further data corruption models for fairness-aware learning, as well as on the importance of strict data collection practices in the context of fair machine learning.

\section{Related work}
\label{sec:related-work}

To the best of our knowledge, we are the first to investigate the information-theoretic limits of fairness-aware learning against a malicious adversary. There is, however, related previous work on PAC learning analysis of fair algorithms, robust fair learning, and learning with poisoned training data, that we discuss in this section.

\myparagraph{Fairness in classification} Fairness-aware learning has been widely studied in the context of classification. We refer to \cite{mehrabi2019survey} for an exhaustive introduction to the field. In this paper we focus on two popular notions of group fairness - demographic parity \citep{calders2009building} and equal opportunity \citep{hardt2016equality}. On the methodological side, our upper bounds analysis employs a technique for proving concentration of estimates of conditional probabilities that has previously been used in the context of group fairness by \cite{woodworth2017learning} and \cite{agarwal2018reductions}. A number of hardness results for fair learning are also known. In particular, \cite{kleinberg2016inherent} prove the incompatability of three fairness notions for a broad class of learning problems and \cite{pmlr-v81-menon18a} quantify fundamental trade-offs between fairness and accuracy. Both of these works, however, focus on learning with \iid clean data.

\myparagraph{Fairness and data corruption}
Most relevant for our setup are a number of recent works that empirically study attacks and defenses on fair learners under adversarial data poisoning. In particular, \cite{solans2020poisoning}, \cite{chang2020adversarial} and \cite{mehrabi2020exacerbating} consider practical, gradient-based poisoning attacks against machine learning algorithms. All of these works demonstrate empirically that poisoned data can severely damage the performance of fair learners that are based on empirical loss minimization. In our work we go beyond this by proving a set of hardness results that hold for \textit{arbitrary learning algorithms}. On the defense side, \cite{roh2020fr} design and empirically study an adversarial training approach for dealing with data corruption when training fair models. Their defense is shown to be effective against specific poisoning attacks that aim to reduce the model accuracy. In contrast, for our upper bounds we are interested in learners that provably work against any poisoning attack, including those that can target the fairness properties of the model as well.

Among works focusing on weaker adversarial models, a particularly popular topic is the one of fair learning with noisy or adversarially perturbed protected attributes \citep{lamy2019noise, awasthi2020equalized, wang2020robust, celis2021fair, mehrotra2020mitigating, celis2021fair2}. Under the explicit assumption that the corruption does not affect the inputs and the labels, these works propose algorithms that can recover a fair model despite the data corruption. A related, but conceptually different topic is the one of fair learning without demographic information \citep{hashimoto2018fairness, kallus2019assessing, mozannar2020fair, lahoti2020fairness}. Another commonly assumed type of corruption is label noise, which is shown to be overcomable under various assumptions by \cite{de2018learning}, \cite{jiang2020identifying}, \cite{wang2020fair} and \cite{fogliato2020fairness}. The concurrent work of \cite{jo2022breaking} studies the hardness of fairness-aware learning with adversarial corruptions of both the labels and the protected attributes (but not the input variables), also allowing for the adversary to choose the points it can manipulate. However, they focus on studying adversarial strategies for enforcing a fixed target model, while we focus on understanding the statistical limits on the performance of the learner in terms of both fairness and accuracy.

A distributionally robust approach for certifying fairness is taken by \cite{taskesen2020distributionally}, under the assumption that the real data distribution falls within a Wasserstein ball centered at the empirical data distribution. In \cite{ignatiev2020towards} a formal methods framework for certifying fairness through unawareness, even in the presence of a specific type of data bias that targets their desired fairness measure, is provided. The vulnerability of fair learning algorithms to specific types of data corruption has also been demonstrated on real-world data by \cite{calders2013unbiased} and \cite{kallus2018residual}.

An orthogonal line of work shows that imposing fairness constraints can neutralize the effects of corrupted data, under specific assumptions on the type of bias present \citep{blum2020recovering}. Also related are the works of \cite{tae2019data} and \cite{li2020tilted} who propose procedures for data cleaning/outlier detection, without a specific adversarial model, that in particular improve fairness performance.

\myparagraph{Learning against an adversary} Learning from corrupted training data is a field with long history, where both the theoretical and the practical aspects of attacking and defending  ML models have been widely studied \citep{angluin1988learning, kearns1993learning, cesa1999sample, bshouty2002pac, biggio2012poisoning, charikar2017learning, steinhardt2017certified, chen2017targeted, diakonikolas2019sever}. In this work we study fair learning within the so-called malicious adversary model, introduced by \cite{valiant1985learning}. The fundamental limits of classic PAC learning in this context have been extensively explored by \cite{kearns1993learning} and \cite{cesa1999sample}. Our paper adds an additional dimension to this line of work, where fairness is considered alongside with accuracy as an objective for the learner.

\section{Preliminaries}
\label{sec:setup}

In this section we formalize the problem of fairness-aware learning against a malicious adversary, by giving precise definitions of the learning objectives and the studied data corruption model.

\subsection{Fairness-aware learning} 
\label{sec:fairness-aware-learning}
Throughout the paper we adopt the following standard group fairness classification framework. We consider a product space $\prodspace$, where $\mathcal{X}$ is an input space, $\mathcal{Y} = \{0,1\}$ is a binary label space and $A = \{0,1\}$ is a set corresponding to a binary protected attribute (for example, being part of a majority/minority group). We assume that there is an unknown true data distribution $\mathbb{P}\in\mathcal{P}(\prodspace)$ from which the clean data is sampled. Denote by $\mathcal{H} \subseteq \{h: \mathcal{X} \to \mathcal{Y}\}$ the hypothesis space of all classifiers to be considered.

\myparagraph{PAC learning} Adopting a statistical PAC learning setup, we are interested in designing learning procedures that find a classifier based on training examples. Formally, a (statistical) fairness-aware learner $\mathcal{L}:\cup_{n\in\mathbb{N}}(\prodspace)^n \to \mathcal{H}$ is a function that takes a labeled dataset of an arbitrary size and outputs a hypothesis. Note that we consider learning in the purely statistical sense here, focusing on \textit{any} procedure that outputs a hypothesis, regardless of its computational complexity, and seeking learners that are sample-efficient instead.

In a clean data setup, the learner is trained on a dataset $S^c = \{(x^c_i, a^c_i, y^c_i)\}_{i=1}^n$ sampled \iid from $\mathbb{P}$ and outputs a hypothesis $h \coloneqq \mathcal{L}(S^c)$. The performance of a learner can be measured via the expected $0/1$ loss (\aka the risk) with respect to the distribution $\mathbb{P}$
\begin{equation}
\label{eqn:zero-one-loss}
\mathcal{R}(h, \mathbb{P}) = \mathbb{P}(h(X) \neq Y).
\end{equation}

\myparagraph{Group fairness in classification} In (group) fairness-aware learning, an additional desirable property of the classifier $h = \mathcal{L}(S^c)$ is that its decisions are fair in the sense that it does not exhibit discrimination with respect to one of the protected subgroups in the population. Many different formal notions of group fairness have previously been proposed in the literature. The problem of selecting the ``right'' fairness measure is in general application-dependent and beyond the scope of this work. 

Here we focus on the two arguably most widely adopted measures. The first one, \textit{demographic parity} \citep{calders2009building}, requires that the decisions of the classifier are independent of the protected attribute, that is
\begin{equation}
\label{eqn:demog_par}
\mathbb{P}(h(X) = 1| A = 0) = \mathbb{P}(h(X) = 1| A = 1).
\end{equation}
The second one, \textit{equal opportunity} \citep{hardt2016equality}, states that the true positive rates of the classifier should be equal across the protected groups, that is
\begin{equation}
\label{eqn:equal_opp}
\mathbb{P}(h(X) = 1| A = 0, Y = 1) = \mathbb{P}(h(X) = 1| A = 1, Y = 1).
\end{equation}
In this definition, an implicit assumption is that $Y = 1$ corresponds to a beneficial outcome (for example, an applicant receiving a job), so that this fairness notion only considers instances where the correct outcome should be advantageous.

In practice, it is rarely the case that a classifier achieves perfect fairness. Therefore, we will instead be interested in controlling the \textit{amount of unfairness} that $h$ possesses, measured via corresponding fairness deviation measures $\mathcal{D}(h)$ \citep{woodworth2017learning, menon2018cost, williamson2019fairness}. Here we adopt the \textit{mean difference score} measure of \cite{calders2010three} and \cite{menon2018cost} for demographic parity
\begin{align}
\label{eqn:demog-parity-deviation}
\mathcal{D}^{par}(h, \mathbb{P}) = \Big|\mathbb{P}(h(X) = 1| A = 0) - \mathbb{P}(h(X) = 1| A = 1)\Big| 
\end{align}
and its analog for equal opportunity
\begin{align}
\label{eqn:equal-opp-deviation}
\mathcal{D}^{opp}(h, \mathbb{P}) = \Big|\mathbb{P}(h(X) = 1| A = 0, Y = 1) - \mathbb{P}(h(X) = 1| A = 1, Y = 1)\Big|.
\end{align}
To avoid degenerate cases for these measures, we assume throughout the paper that $P_ a = \mathbb{P}(A = a) > 0$ and $P_{1a} = \mathbb{P}(Y = 1, A = a) > 0$ for both $a\in\{0,1\}$. \tosee{For the rest of the paper, whenever we are interested in demographic parity fairness, we assume without loss of generality that $A = 0$ is the minority class, so that $P_0 \leq \frac{1}{2} \leq P_1$. Similarly, whenever the fairness notion is equal opportunity, we assume without loss of generality that $P_{10} \leq P_{11}$ (note that we do not make any assumption about $P_0$ and $P_1$ in this case).}

Whenever the underlying distribution is clear from the context, we will drop the dependence of $\mathcal{R}(h, \mathbb{P})$ and $\mathcal{D}(h, \mathbb{P})$ on $\mathbb{P}$ and simply write $\mathcal{R}(h)$ and $\mathcal{D}(h)$.

\subsection{Learning against an adversary}
\label{sec:learning-against-an-adversary}
As argued in the introduction, machine learning models are often trained on unreliable datasets, where some of the points might be corrupted by noise, human biases and/or malicious agents. To model arbitrary manipulations of the data, we assume the presence of an adversary that can modify a certain fraction of the dataset and study fair learning in this context. In addition to not being partial to a specific type of data corruption, this worst-case approach has the advantage of providing a \textit{certificate for fairness}: if a system can work against a strong adversarial model, it will be effective under \textit{any circumstances that are covered by the model}.  

Formally, a \textit{fairness-aware adversary} is any procedure for manipulating a dataset, that is a \textit{possibly randomized function} $\mathcal{A}: \cup_{n\in \mathbb{N}} (\prodspace)^n \to \cup_{n\in \mathbb{N}}(\prodspace)^n$ that takes in a clean dataset $S^c = \{(x^c_i, a^c_i, y^c_i)\}_{i=1}^n$ sampled \iid from $\mathbb{P}$ and outputs a new, corrupted, dataset $S^p = \{(x^p_i, a^p_i, y^p_i)\}_{i=1}^n$ \textit{of the same size}. Depending on the type of restrictions that are imposed on the adversary, various adversarial models can be obtained.

In this work we adopt the powerful \textit{malicious adversary model}, first introduced by \cite{valiant1985learning} and extensively studied by \cite{kearns1993learning} and \cite{cesa1999sample} \footnote{\tosee{Strictly speaking, the nasty noise model of \cite{bshouty2002pac, diakonikolas2019robust}, in which the adversary can even choose the marked points, is even stronger than the adversary model we consider here. However, we opted for studying the malicious noise model, because this weaker adversary is already sufficient for showing strong impossibility results on learning. We refer to Section \ref{sec:conclusion} for further discussion on the choice of adversarial model.}}. The formal data generating procedure is as follows:
\begin{itemize}
\item An \iid \textit{clean dataset} $S^c = \{(x^c_i, a^c_i, y^c_i)\}_{i=1}^n$ is sampled from $\mathbb{P}$.
\item Each index/point $i\in\{1, 2, \ldots, n\}$ is \textit{marked} independently with probability $\alpha$, for a fixed constant $\alpha\in [0,0.5)$. \tosee{Denote all marked indexes by $\poisoned \subseteq [n]$.}
\item The \textit{malicious adversary} computes, in a possibly randomized manner, a corrupted dataset $S^p = \{(x^p_i, a^p_i, y^p_i)\}_{i=1}^n \in (\prodspace)^n$, with the only restriction that $(x^p_i, a^p_i, y^p_i) = (x^c_i, a^c_i, y^c_i)$ for all $i\not\in\poisoned$. That is, the adversary can replace all marked data points in an arbitrary manner, with \textit{no assumptions whatsoever} about the points $(x^p_i, a^p_i, y^p_i)$ for $i\in\poisoned$.
\item The corrupted dataset $S^p$ is then passed on to the learner, who computes $\mathcal{L}(S^p)$.
\end{itemize}
For a fixed $\alpha\in [0, 0.5)$, we say that $\mathcal{A}$ is a malicious adversary of power $\alpha$. Note that the number of marked points is $|\poisoned| \sim Bin(n, \alpha)$. 

Since no assumptions are made on the corrupted data points, they can, in particular, depend on the learner $\mathcal{L}$, the data distribution $\mathbb{P}$, the clean data $S^c$ and all other parameters of the learning problem. That is, the adversary acts with full knowledge of the learning setup and without any computational constraints, which is in lines with our worst-case approach. Note that this is in contrast to the learner $\mathcal{L}$ that can only access the data points in $S^p$. We refer to Section \ref{sec:limits_of_learning} for a more formal treatment. 

\subsection{Multi-objective learning}
\label{sec:multi-objective-adv-learning}
Our goal is to study the performance of the classifier $\mathcal{L}(S^p)$ \textit{learned on the corrupted data}, both in terms of its expected loss $\mathcal{R}(\mathcal{L}(S^p), \mathbb{P})$ and its fairness deviation $\mathcal{D}(\mathcal{L}(S^p), \mathbb{P})$ \textit{on the clean (test) distribution $\mathbb{P}$}. We will be interested in the probabilities of these quantities being large or small, under the randomness of the sampling of $S^p$ - that is the randomness of the clean data, the marked points and the adversary.

Note that it is not a priori clear how to trade-off the two metrics and that this is likely to be application-dependent. Therefore it is also unclear how to evaluate the quality of a hypothesis. In our work we study two possible ways to do so.

\myparagraph{Weighted objective} One approach is to assume that a (application dependent) trade-off parameter $\lambda \geq 0$ is predetermined, so that the learner has to approximately minimize
\begin{align}
L_{\lambda}(h) = \mathcal{R}(h) + \lambda \mathcal{D}(h).
\end{align}
\tosee{The value of $\lambda$ will likely be application-dependent and to be determined by the entity issuing the learner. There are various legal and ethical considerations that may apply when determining a desired accuracy-fairness trade-off \citep{barocas-hardt-narayanan}. Therefore, we leave $\lambda$ as an arbitrary, but fixed parameter, similarly to \cite{menon2018cost}, and we assume that $\lambda$ is known by both the learner and the adversary.}

For a given value of $\lambda$, the quality of the hypothesis $\mathcal{L}(h^S)$ can be directly measured via $L_{\lambda}(\mathcal{L}(h^S)) - \min_{h\in\mathcal{H}}L_{\lambda}(h)$. We will use $L_{\lambda}^{par}$ and $L_{\lambda}^{opp}$ to denote the weighted objectives with $\mathcal{D}^{par}$ and $\mathcal{D}^{opp}$ respectively.

\myparagraph{Element-wise comparisons} Alternatively, one may want to consider the two objectives independently. Given a classifier $h\in\mathcal{H}$, denote by $\ObjVec(h) = \left(\mathcal{R}(h), \mathcal{D}(h)\right)$ the vector consisting of the values of the two objectives. Note that $\ObjVec$ does not, in general, induce a total order on $\mathcal{H}$. Instead we can only compare two classifiers $h_1, h_2\in\mathcal{H}$ if, say, $h_1$ dominates $h_2$ in the sense that both $\mathcal{R}(h_1) \leq \mathcal{R}(h_2)$ and $\mathcal{D}(h_1) \leq \mathcal{D}(h_2)$. We denote this relation by $\ObjVec(h_1) \preceq \ObjVec(h_2)$. As we will see, these component-wise comparisons are still useful for understanding the limits of learning against an adversary.

Since $\mathcal{R}(h)$ and $\mathcal{D}(h)$ are two independent objectives, there is no clear notion of an ``optimal'' classifier under the $\preceq$ relation in general. Therefore, when studying the pairwise objective $ \ObjVec(h)$, we will assume there exists a classifier that is optimal both in terms of fairness and accuracy. Then this classifier is optimal also under the $\preceq$ relation and hence the quality of any other hypotheses can be measured against it. 

Specifically, \tosee{for our analysis of the $\ObjVec$ objective in Section \ref{sec:comp-wise-upper-bounds}}, we assume that there exists a $h^* \in \mathcal{H}$, such that $h^* \in \argmin_{h\in\mathcal{H}}\mathcal{R}(h)$ and $h^* \in \argmin_{h\in\mathcal{H}}\mathcal{D}(h)$, so that $\ObjVec(h^*) \preceq \ObjVec(h)$ for all $h\in\mathcal{H}$. Then the quality of $\mathcal{L}(S^p)$ can be measured as the $\mathbb{R}^2$ vector
\begin{align}
\mathbf{L}(\mathcal{L}(S^p)) = \ObjVec(\mathcal{L}(S^p)) - \ObjVec(h^*).
\end{align}
As with the weighted objective, we use $\mathbf{L}^{par}(\mathcal{L}(S^p))$ and $\mathbf{L}^{opp}(\mathcal{L}(S^p))$ to denote the loss vector when demographic parity and equal opportunity are used respectively.

One particular situation that we will study in which a component-wise optimal classifier $h^*$ exists, is within the realizable PAC learning model with equal opportunity fairness. Indeed, whenever a classifier $h^* \in \mathcal{H}$ satisfies $\mathbb{P}(h^*(X) = Y) = 1$, we have that both $\mathcal{R}(h^*) = 0$ and $\mathcal{D}^{opp}(h^*) = 0$ and so $\mathbf{L}^{opp}(\mathcal{L}(S^p)) = \ObjVec^{opp}(\mathcal{L}(S^p))$. \tosee{More generally, the existence of $h^*$ is plausible whenever the equal opportunity fairness notion is considered, since it is known that this fairness notion generally aligns well with accuracy \citep{hardt2016equality}. We expect that our analysis of the $\ObjVec$ objective can be extended to situations where $h^*$ is only $\epsilon$-approximately optimal in both objectives, which would cover more real-world situations, and we deem that an interesting direction for future work.}

\subsection{The limits of fairness-aware learning against an adversary}
\label{sec:limits_of_learning}

\myparagraph{Lower and upper bounds analysis} Over the next sections we will be showing lower and upper bounds on $L_{\lambda}(\mathcal{L}(S^p))$ and $\mathbf{L}(\mathcal{L}(S^p))$, that is, the risk and the fairness deviation measure achieved by the learner when trained on the corrupted data. \textit{Our lower bounds} can be thought of as hardness results that describe a limit on how well the learner can perform against the adversary. These are based on explicit constructions of hard learning problems and adversaries that demonstrate these limitations. \textit{Our upper bounds} complement the hardness results by constructing learners that recover a classifier with guarantees on fairness and accuracy that match the lower bounds, for a wide range of learning problems and adversaries.

Crucial in these results is the ordering of the quantifiers. These matter not only for the comparison between the upper and the lower bounds, but also for the sake of formalizing the powers of the adversary and the learner. Recall that the learner only operates with knowledge of the corrupted dataset. At the same time, the adversary is assumed to know not only the clean data, but also the target distribution and the learner. Therefore, our lower bounds are structured as follows:
\begin{center}
\textit{For any learner $\mathcal{L}$ there exists a distribution $\mathbb{P}$ and an adversary $\mathcal{A}$,\\ such that with constant probability \ldots} 
\end{center} 

Note in particular that the adversary can be chosen after the learner is constructed and together with the distribution and it can therefore be tailored to their choice. At the same time, our upper bounds read as:
\begin{center}
\textit{There exists a learner $\mathcal{L}$, such that for any distribution $\mathbb{P}$, any adversary $\mathcal{A}$ and any $\delta\in (0,1)$, with probability at least $1-\delta$ \ldots} 
\end{center} 

Since the learner is fixed before the distribution and the adversary are, it has to work for any such pair. 

We note that all probability statements refer to the randomness in the full generation process of the dataset $S^p$, that is the randomness of the clean data, the marked points and the adversary. For a fixed clean data distribution $\mathbb{P}$ and a fixed adversary $\mathcal{A}$, we denote the distribution of $S^p$ as $\mathbb{P}^{\mathcal{A}}$. 

\myparagraph{Role of the hypothesis space} Learnability in our setup can be studied either as a property of any fixed hypothesis space, or as a property of a class of hypothesis spaces, for example the hypothesis spaces of finite size or finite VC dimension. However, one can easily see that for certain hypothesis spaces fairness can be satisfied trivially. For example, whenever $\mathcal{H}$ contains a classifier that is constant on the whole input space (that is, always predicts $1$ or always predicts $0$), a learner that returns this constant classifier, regardless of the observed data, will always be perfectly fair with respect to both fairness notions, under any distribution and against any adversary. We therefore opt to study the learnability of \textit{classes of hypothesis spaces.} 

In particular, our hardness results demonstrate the \textit{existence of a finite hypothesis space}, such that a certain amount of excess inaccuracy and/or unfairness is unavoidable. Therefore, no learner can achieve better guarantees on the class of all finite hypothesis spaces, even in the infinite training data limit. This is contrast to, for example, classic PAC learning with clean data, where the ERM algorithm is a PAC learner for all finite hypothesis spaces and more generally all spaces of finite VC dimension \citep{shalev2014understanding}. 

On the other hand, the learners we construct for the upper bounds are shown to work for \textit{any hypothesis space} that is finite or of finite VC dimension, in all cases matching the lower bounds.

\myparagraph{Parameters of the learning problem} Our bounds will depend explicitly on the corruption ratio $\alpha$ and on the smaller of the protected class frequencies $P_0 = \mathbb{P}(A = 0)$ (for demographic parity) or on $P_{10} = \mathbb{P}(Y = 1, A = 0) \leq \mathbb{P}(Y = 1, A = 1)$ (for equal opportunity). To understand the limits of fairness-aware learning against a malicious adversary, we will analyze our bounds for small values of $\alpha$ and $P_0$ or $P_{10}$. Intuitively, the smaller the corruption rate $\alpha$ is, the easier it is for the learner to recover an accurate and fair hypothesis. On the other hand, a small value for $P_0$ or $P_{10}$ implies that one of the subgroups is underrepresented in the population, and so intuitively the adversary can hide a lot of information about this group and thus prevent the learner from finding a fair hypothesis. 

As we will see, this intuition is reflected in our bounds, which give a tool for understanding the effect of these quantities on the hardness of the learning problem. Comparing the lower bounds, which hold regardless of the sample size $n$, to the upper bounds in the limit of $n\to \infty$ allows us to reason about the absolute limits of fairness-aware learning against a malicious adversary. Indeed, in this large data limit, we find that our upper and lower bounds match in terms of their dependence on $\alpha$ and $P_0$ or $P_{10}$ up to constant factors. We note that designing algorithms that achieve \textit{sample-optimal} guarantees in our context is beyond the scope of this work. However, we will also be interested in the \textit{statistical rates of convergence} of the studied learners to the irreducible gap certified by the lower bounds. We refer to Section \ref{sec:comp-wise-upper-bounds} for a formal treatment.

\section{Lower bounds}
\label{sec:lower-bounds}

We now present a series of hardness results that demonstrate that fair learning in the presence of a malicious adversary is provably impossible in a PAC learning sense. \textbf{Complete proofs of all results in this section can be found in Appendix \ref{sec:appendix-lower-bounds-proofs}.}

\subsection{Pareto lower bounds}
\label{sec:pareto-lower-bounds}
We begin by presenting two hardness results that intuitively show that for some hypothesis spaces $\mathcal{H}$ the adversary can prevent any learner from reaching the Pareto front of the accuracy-fairness optimization problem. We first demonstrate this for demographic parity:

\begin{theorem}
\label{thm:lower-bound-demog-par-with-p}
Let $0 \leq \alpha < 0.5, 0 < P_0 \leq 0.5$. For any input set $\mathcal{X}$ with at least four distinct points, there exists a finite hypothesis space $\mathcal{H}$, such that for any learning algorithm $\mathcal{L}:\cup_{n\in\mathbb{N}}(\prodspace)^n \to \mathcal{H}$, there exists a distribution $\mathbb{P}$ \textit{for which $\mathbb{P}(A = 0) = P_0$}, a malicious adversary $\mathcal{A}$ of power $\alpha$ and a hypothesis $h^* \in \mathcal{H}$, such that with probability at least $0.5$
\begin{equation*}
\mathcal{R}(\mathcal{L}(S^p), \mathbb{P}) - \mathcal{R}(h^*, \mathbb{P}) \geq \min\left\{\frac{\alpha}{1-\alpha}, 2P_0P_1\right\}
\end{equation*}
 and 
\begin{align*}
\mathcal{D}^{par}(\mathcal{L}(S^p), \mathbb{P}) - \mathcal{D}^{par}(h^*, \mathbb{P}) \geq \min\left\{\frac{\alpha}{2P_0P_1(1-\alpha)}, 1\right\}.
\end{align*}
\end{theorem}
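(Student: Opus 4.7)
The plan is a classical two-problem indistinguishability argument in the style of Kearns and Li for the malicious adversary model. The goal is to exhibit a single finite hypothesis space $\mathcal{H}$ together with two clean distributions $\mathbb{P}_1,\mathbb{P}_2$ (both having $\mathbb{P}(A=0)=P_0$) and two adversaries $\mathcal{A}_1,\mathcal{A}_2$ of power $\alpha$ such that the induced distributions $\mathbb{P}_1^{\mathcal{A}_1}$ and $\mathbb{P}_2^{\mathcal{A}_2}$ on the corrupted dataset are identical, while the optimal classifiers in $\mathcal{H}$ under $\mathbb{P}_1$ and $\mathbb{P}_2$ lie far apart both in risk and in $\mathcal{D}^{par}$. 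Because the two corrupted-data distributions agree, the output $\mathcal{L}(S^p)$ of any learner has the same law under either pairing, so for at least one of the two choices of $j$ the event that $\mathcal{L}(S^p)$ equals the wrong optimum occurs with probability at least $1/2$, and that event realizes both gaps claimed in the statement.

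For the construction, I would use four points $x_1,x_2,x_3,x_4$ with $x_1,x_2$ assigned to the minority group $A=0$ and $x_3,x_4$ to the majority $A=1$. The hypothesis space $\mathcal{H}$ would contain two classifiers $h_1,h_2$ that agree on $\{x_3,x_4\}$ but differ on $\{x_1,x_2\}$, so that the fairness and accuracy comparisons between candidates are entirely driven by the minority group. The two distributions $\mathbb{P}_1,\mathbb{P}_2$ would share the marginals $\mathbb{P}(A=a)=P_a$ and the entire conditional law on $A=1$; they differ only in how the minority-group mass is split between $x_1$ and $x_2$ and how labels are assigned, chosen so that $h_j$ simultaneously minimizes the risk and achieves $\mathcal{D}^{par}=0$ under $\mathbb{P}_j$, while the other candidate is maximally wrong on the minority group. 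A direct computation will show that the excess risk of $h_{3-j}$ under $\mathbb{P}_j$ is at most $2P_0P_1$ and its excess demographic-parity gap is at most $1$; these saturate the second branch of each $\min$ in the statement.

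The key quantitative step is to tune the within-minority split so that the conditional total variation on $A=0$ is exactly $\alpha/(P_0(1-\alpha))$ (capped by $1$), which yields a joint total variation of $\alpha/(1-\alpha)$ between $\mathbb{P}_1$ and $\mathbb{P}_2$. At exactly this TV the standard Kearns--Li coupling produces adversaries $\mathcal{A}_1,\mathcal{A}_2$ that, on each point independently, either leave it unchanged (probability $1-\alpha$) or rewrite it according to the common reweighting distribution on marked points, so that $\mathbb{P}_j^{\mathcal{A}_j}$ reduces to the $n$-fold product of one shared point distribution. I would verify that $|\poisoned|\sim\Bin(n,\alpha)$ marking is consistent with the coupling, which is immediate since each coordinate is marked independently and the per-coordinate mixture argument goes through coordinate-wise. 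Rescaling the excess-deviation computation of the previous step by the TV fraction then turns the bound $1$ into $\alpha/(2P_0P_1(1-\alpha))$ and $2P_0P_1$ into $\alpha/(1-\alpha)$, matching the first branches of the two $\min$ expressions.

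The main obstacle I anticipate is the distribution design: finding a four-point construction in which $h_j$ is simultaneously the unique risk and fairness minimizer under $\mathbb{P}_j$ and in which the excess risk and excess $\mathcal{D}^{par}$ of $h_{3-j}$ scale with the conditional $A=0$ total variation in a way that yields precisely the constants $2P_0P_1$ and $1/(2P_0P_1)$ in the two branches. This forces careful balancing of the label probabilities on $\{x_1,x_2\}$ so that both objectives are extremized by the same hypothesis and so the scaling by the adversary-induced TV passes cleanly through the conditional-probability definitions of $\mathcal{R}$ and $\mathcal{D}^{par}$. Once the distributions are in place, the indistinguishability conclusion and the probability-$1/2$ statement follow by a routine coupling and averaging argument over the choice of $j$.
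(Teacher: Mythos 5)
Your high-level strategy is the right one and matches the paper: the method of induced distributions (two clean distributions made indistinguishable by a malicious adversary, forcing any learner to fail with probability $\geq 1/2$ on one of them), the Kearns--Li per-coordinate coupling for an adversary of power $\alpha$, and the final averaging argument over the two scenarios.

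However, the specific construction you outline has a structural gap. You fix the protected attribute per point ($x_1,x_2$ always $A=0$; $x_3,x_4$ always $A=1$), let the hypotheses agree on all $A=1$ points, and make $\mathbb{P}_1,\mathbb{P}_2$ share the conditional law on $A=1$. Then the entire total variation between $\mathbb{P}_1$ and $\mathbb{P}_2$ lives in the $A=0$ slice, so the joint TV is at most $P_0$. But the risk-difference function $\mathbbm{1}\{h_2(X)\neq Y\}-\mathbbm{1}\{h_1(X)\neq Y\}$ is bounded in $[-1,1]$, and to have $\mathcal{R}(h_2,\mathbb{P}_1)-\mathcal{R}(h_1,\mathbb{P}_1)\geq\eta$ while $\mathcal{R}(h_1,\mathbb{P}_2)-\mathcal{R}(h_2,\mathbb{P}_2)\geq\eta$ forces the joint TV to be at least $\eta$. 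Whenever $P_0<\eta\leq 2P_0P_1$ (a non-empty regime for every $P_0<1/2$, since $2P_0P_1=2P_0(1-P_0)>P_0$), your construction therefore cannot achieve the first branch of the $\min$ in the risk bound. The paper sidesteps this exactly by letting the protected attribute of the two ``differing'' points depend on the distribution: $x_3$ carries $a=i$ and $x_4$ carries $a=\lnot i$ under $\mathbb{P}_i$. The $\eta$-mass on which the two clean distributions differ is thus shared between both groups, and the adversary corrupts marked points into the four confusing triples $(x_3,\cdot,\cdot),(x_4,\cdot,\cdot)$, yielding identical corrupted distributions with joint TV equal to $\eta$ throughout the range $\eta\leq 2P_0P_1$.

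Even in the subregime $\eta\leq P_0$ where the TV budget is not immediately violated, your design has to reconcile two competing demands: the labels on $A=0$ must shift so that $h_j$ is the risk minimizer under $\mathbb{P}_j$, and the $X$-marginal on $A=0$ must shift so that $h_j$ is the fairness minimizer under $\mathbb{P}_j$. A natural symmetric parameterization gives TV $= \max(\Delta p, 2s)$ with $2s=\eta/P_0$ and $\Delta p=\eta/(2P_0P_1)$, which is consistent, but the label-noise parameter $s=\eta/(2P_0)$ must satisfy $s\leq 1/2$, again requiring $\eta\leq P_0$. So your four-point family appears to be confined to a strict subset of the parameter range the theorem covers, whereas the paper's construction (with the role of the protected attribute built into the differing points) handles the whole regime and then treats the second branch of the $\min$ by simply having the adversary hold back and use power $\alpha_1<\alpha$ with $\alpha_1/(1-\alpha_1)=2P_0P_1$.
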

The proof of this theorem (as well as of the other hardness results presented in this section) is based on the so-called \textit{method of induced distributions}, pioneered by \citep{kearns1993learning}. The idea is to construct two distributions that are sufficiently different, so that different classifiers perform well on each, yet can be made indistinguishable after the modifications of the adversary. Then no fixed learner with access only to the corrupted data can be ``correct'' with high probability on both distributions and so any learner will incur an excessively high loss and exhibit excessively high unfairness on at least one of them, regardless of the amount of available data.

Here we provide a sketch proof of Theorem \ref{thm:lower-bound-demog-par-with-p}, to illustrate the type of construction used. A complete proof can be found in Appendix \ref{sec:appendix-lower-bounds-proofs}.

\begin{proof}
\textbf{(Sketch)} Let $\eta = \frac{\alpha}{1-\alpha}$, so that $\alpha = \frac{\eta}{1+\eta}$. We assume here that $\eta = \frac{\alpha}{1-\alpha} \leq 2P_0(1-P_0)$, with the other case following from a similar construction, but with an adversary that uses a smaller value of $\alpha$ (so that it leaves some of the data points at its disposal untouched).

Take four distinct points $\{x_1, x_2, x_3, x_4\}\in\mathcal{X}$. We consider two distributions $\mathbb{P}_0$ and $\mathbb{P}_1$, where each $\mathbb{P}_i$ is defined as
\begin{equation*}
  \mathbb{P}_{i}(x,a,y) =
    \begin{cases}
      1 - P_0 - \eta/2 & \text{if $x = x_1, a = 1, y = 1$}\\
      P_0 - \eta/2 & \text{if $x = x_2, a = 0, y = 0$}\\
      \eta/2  & \text{if $x = x_3, a = i, y = \lnot i$}\\
      \eta/2  & \text{if $x = x_4, a = \lnot i, y = i$}\\
      0 & \text{otherwise}
    \end{cases}       
\end{equation*}
Note that these are valid distributions, since $\eta \leq 2P_0(1-P_0) \leq 2P_0 \leq 2(1-P_0)$ by assumption and also that $P_0 = \mathbb{P}_i(A = 0)$ for both $i\in\{0,1\}$.  Consider the hypothesis space $\mathcal{H} = \{h_0, h_1\}$, with $$h_0(x_1) = 1 \quad h_0(x_2) = 0 \quad h_0(x_3) = 1 \quad h_0(x_4) = 0$$ and
$$h_1(x_1) = 1 \quad h_1(x_2) = 0 \quad h_1(x_3) = 0 \quad h_1(x_4) = 1.$$
The point of this construction is as follows: there are only two points, $x_3$ and $x_4$, where the two distributions differ. This is also where the classifiers differ and, in fact, each classifier $h_i$ is better performing on the distribution $\mathbb{P}_i$, in both accuracy and fairness, than the other classifier.

Indeed, it is easy to verify that 
\begin{equation}
\label{eqn:lower_bound_proof_loss}
L(h_{\lnot i}, \mathbb{P}_i) - L(h_i, \mathbb{P}_i) = \eta, \quad \text{for both } i = 0,1.
\end{equation}
Moreover, 
\begin{equation}
\label{eqn:lower_bound_proof_fairness}
\mathcal{D}^{par}(h_{\lnot i}, \mathbb{P}_i) - \mathcal{D}^{par}(h_i, \mathbb{P}_i) = \frac{\eta}{2P_0 (1-P_0)}, \quad \text{for both } i = 0,1.
\end{equation}
Now what the adversary does is to use all of the marked data to insert points with inputs $x_3$ and $x_4$, but with flipped labels and protected attributes. Then, since the points with inputs $x_3$ and $x_4$ in the original data are sufficiently rare, the adversary manages to hide which one of the two distributions was the original one. 

Specifically, consider a (randomized) malicious adversary $\mathcal{A}_{i}$ of power $\alpha$, that given a clean distribution $\mathbb{P}_{i}$, changes every marked point to $(x_3,\lnot i, i)$ with probability $0.5$ and to $(x_4,i, \lnot i)$ otherwise. Under a distribution $\mathbb{P}_{i}$ and an adversary $\mathcal{A}_{i}$, the probability of seeing a point $(x_3, i, \lnot i)$ is $\frac{\eta}{2} (1-\alpha) = \frac{\eta}{2} \frac{1}{1 + \eta} = \alpha/2$, which is equal to the probability of seeing a point $(x_3, \lnot i, i)$. Therefore, denoting the probability distribution of the corrupted dataset, under a clean distribution $\mathbb{P}_{i}$ and an adversary $\mathcal{A}_{i}$, by $\mathbb{P}'_{i}$, one can verify that $\mathbb{P}'_{0} = \mathbb{P}'_{1}$, so the two initial distributions $\mathbb{P}_0$ and $\mathbb{P}_1$ become indistinguishable under the adversarial manipulation.

The proof concludes by formalizing the observation that any fixed learner $\mathcal{L}: \cup_{n\in \mathbb{N}} (\prodspace)^n \to \{h_0, h_1\}$ will perform poorly on at least one of the distribution-adversary pairs $(\mathbb{P}_i, \mathcal{A}_i)$, since the resulting corrupted data distributions are the same, but the optimal classifiers differ.
\end{proof}

\myparagraph{Discussion} Our hardness result implies that no learner can guarantee reaching a point on the Pareto front in a PAC learning sense, even for a simple family of hypothesis spaces, namely the finite ones.
This is because the adversary can force the learner to return a hypothesis that is \textit{a constant away from optimality} is both objectives, with a \textit{non-vanishing probability}\footnote{\tosee{The constant $0.5$ for the probability of the adversary succeeding is perfectly sufficient for proving the impossibility of PAC learnability. A more refined analysis may yield an even larger constant, although we have not explored this further.}}.
To prove the theorem we explicitly construct a hypothesis space that is not learnable against the malicious adversary. As discussed in Section \ref{sec:limits_of_learning}, a constructive proof is necessary here, because fairness can be trivially satisfied on some hypothesis spaces, for example those that contain a constant classifier, which is fair under any distribution and against any adversary.

We now analyze the bounds and their behavior for small values of $\alpha$ and $P_0$. First assume that $\frac{\alpha}{1 - \alpha} < 2P_{0}P_1$, which in particular is the case whenever  $2\alpha < P_0$. Then under the conditions of the theorem, with probability at least $0.5$\footnote{We use the $\Omega$-notation for lower bounds on the growth rates of functions.}
\begin{align}
\label{eqn:lower-bound-demog-parity-loss-rates}
\mathcal{R}(\mathcal{L}(S^p)) - \mathcal{R}(h^*) \geq \Omega\left(\alpha\right)
\end{align}
and
\begin{align}
\label{eqn:lower-bound-demog-parity-rates}
\mathcal{D}^{par}(\mathcal{L}(S^p)) - \mathcal{D}^{par}(h^*) \geq \Omega\left(\frac{\alpha}{P_0}\right).
\end{align}
The lower bound on the excess loss (\ref{eqn:lower-bound-demog-parity-loss-rates}) is known to hold for any hypothesis space as shown by \cite{kearns1993learning}. What Theorem \ref{thm:lower-bound-demog-par-with-p} adds to this classic result is that for certain hypothesis spaces: 1) the learner can at the same time be forced to produce an excessively unfair classifier; 2) the fairness deviation measure $\mathcal{D}^{par}$ can be increased by $\Omega(\alpha/P_0)$. Note that \textit{these results hold regardless of the sample size $n$}.

Equations (\ref{eqn:lower-bound-demog-parity-loss-rates}) and (\ref{eqn:lower-bound-demog-parity-rates}) immediately imply the following lower bounds on $L_{\lambda}$ and $\mathbf{L}^{par}$:
\begin{equation}
\label{eqn:lower-bounds-demog-par-lambda-weighted}
L_{\lambda}^{par}(L(S^p)) - \min_{h\in\mathcal{H}}\mathcal{L}_{\lambda}^{par}(h) \geq \Omega\left(\alpha + \lambda \frac{\alpha}{P_0}\right).
\end{equation}
\begin{equation}
\label{eqn:lower-bounds-demog-par-component-wise}
\mathbf{L}^{par}(\mathcal{L}(S^p)) \succeq \left(\Omega\left(\alpha\right),\Omega\left(\frac{\alpha}{P_{0}}\right)\right)
\end{equation}
In the second case, when $\frac{\alpha}{1-\alpha} \geq 2P_0P_1$, the adversary can force a constant increase in the loss and make the classifier completely unfair, so that $\mathcal{D}^{par}(\mathcal{L}(S^p)) = 1$.  These observations, combined with the rates from the first case, indicate that unless $\alpha = o(P_0)$, the adversary can ensure that the resulting model's demographic parity deviation measure is constant. In particular, \textit{if one of the protected groups is rare, even very small levels of data corruption can lead to a biased model}.

Next we show a similar result for equal opportunity.
\begin{theorem}
\label{thm:lower-bound-equal-opp-with-p}
Let $0 \leq \alpha < 0.5$ and $P_{10} \leq P_{11} < 1$ be such that $P_{10} +  P_{11} < 1$. For any input set $\mathcal{X}$ with at least five distinct points, there exists a finite hypothesis space $\mathcal{H}$, such that for any learning algorithm $\mathcal{L}:\cup_{n\in\mathbb{N}}(\prodspace)^n \to \mathcal{H}$, there exists a distribution $\mathbb{P}$ \textit{for which $\mathbb{P}(A = a, Y = 1) = P_{1a}$ for $a\in\{0,1\}$}, a malicious adversary $\mathcal{A}$ of power $\alpha$ and a hypothesis $h^* \in \mathcal{H}$, such that with probability at least $0.5$ $$\mathcal{R}(\mathcal{L}(S^p), \mathbb{P}) - \mathcal{R}(h^*, \mathbb{P}) \geq \min\left\{\frac{\alpha}{1-\alpha}, 2P_{10}, 2(1-P_{10} - P_{11})\right\}$$ and $$\mathcal{D}^{opp}(\mathcal{L}(S^p), \mathbb{P}) -\mathcal{D}^{opp}(h^*, \mathbb{P}) \geq \min\left\{\frac{\alpha}{2(1-\alpha)P_{10}}, 1, \frac{1 - P_{10} - P_{11}}{P_{10}}\right\}.$$
\end{theorem}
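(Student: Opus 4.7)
The plan is to extend the induced-distributions argument from the proof of Theorem~\ref{thm:lower-bound-demog-par-with-p}. Set $\eta = \alpha/(1-\alpha)$ and first handle the main case $\eta \leq \min\{2P_{10},\, 2(1 - P_{10} - P_{11})\}$; the other regimes in the two minima of the theorem are covered by rerunning the argument with $\eta$ replaced by the binding value, so that the adversary leaves some of its budget unused, exactly as in Theorem~\ref{thm:lower-bound-demog-par-with-p}. The main design obstacle is that, unlike in the demographic parity case, only examples with $Y = 1$ enter the equal-opportunity deviation, so we cannot simply give both rare points label~$1$ (which would kill the risk gap) nor opposite labels at fixed attributes (which would shift the marginals $P_{1a}$); resolving this requires flipping \emph{both} the protected attribute and the label of each rare point between the two distributions, which simultaneously yields a risk gap of $\eta$ and an equal-opportunity gap of $\eta/(2P_{10})$ while keeping $\mathbb{P}(A = a, Y = 1) = P_{1a}$ invariant.

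Pick five distinct inputs $x_1, x_2, x_3, x_4, x_5 \in \mathcal{X}$. Both $\mathbb{P}_0$ and $\mathbb{P}_1$ put mass $P_{11}$ on $(x_1, 1, 1)$, mass $P_{10} - \eta/2$ on $(x_2, 0, 1)$, and mass $1 - P_{10} - P_{11} - \eta/2$ on $(x_5, 0, 0)$. The two rare points carry mass $\eta/2$ each, with distribution-dependent attribute-label pairs: $\mathbb{P}_0$ puts mass on $(x_3, 0, 1)$ and $(x_4, 1, 0)$, while $\mathbb{P}_1$ puts mass on $(x_3, 1, 0)$ and $(x_4, 0, 1)$. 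The main-case assumption on $\eta$ guarantees that these are valid probability distributions, and a direct check confirms $\mathbb{P}_i(A = a, Y = 1) = P_{1a}$ for both $i \in \{0, 1\}$ and $a \in \{0, 1\}$, because the $\eta/2$ of $(A{=}0, Y{=}1)$-mass contributed by the varying rare point exactly compensates the $\eta/2$ subtracted from $x_2$, and symmetrically for $(A{=}1, Y{=}1)$.

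Take $\mathcal{H} = \{h_0, h_1\}$ with $h_0(x) = \mathbbm{1}[x \in \{x_1, x_2, x_3\}]$ and $h_1(x) = \mathbbm{1}[x \in \{x_1, x_2, x_4\}]$. On $\mathbb{P}_0$ the classifier $h_0$ labels every point correctly, giving $\mathcal{R}(h_0, \mathbb{P}_0) = 0$ and $\mathcal{D}^{opp}(h_0, \mathbb{P}_0) = 0$, whereas $h_1$ errs on both rare points for a total risk $\mathcal{R}(h_1, \mathbb{P}_0) = \eta$. A direct computation gives $\mathbb{P}_0(h_1 = 1 \mid A = 0, Y = 1) = 1 - \eta/(2P_{10})$ and $\mathbb{P}_0(h_1 = 1 \mid A = 1, Y = 1) = 1$, hence $\mathcal{D}^{opp}(h_1, \mathbb{P}_0) = \eta/(2P_{10})$; the analogous statements on $\mathbb{P}_1$ with $h_0, h_1$ swapped follow by symmetry.

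Define the randomized malicious adversary $\mathcal{A}_i$ of power $\alpha$ that replaces every marked point by $(x_3, \lnot i, i)$ with probability $1/2$ and by $(x_4, i, \lnot i)$ otherwise. Using the identity $(1-\alpha)(\eta/2) = \alpha/2$, each of $(x_3, 0, 1), (x_3, 1, 0), (x_4, 0, 1), (x_4, 1, 0)$ receives mass exactly $\alpha/2$ under both $\mathbb{P}_0^{\mathcal{A}_0}$ and $\mathbb{P}_1^{\mathcal{A}_1}$, while the bulk points $x_1, x_2, x_5$ are identically distributed in both corrupted distributions; therefore $\mathbb{P}_0^{\mathcal{A}_0} = \mathbb{P}_1^{\mathcal{A}_1}$. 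Any fixed learner $\mathcal{L}$ consequently outputs each of $h_0, h_1$ with the same probability in the two instances, so picking $i \in \{0, 1\}$ with $\Pr[\mathcal{L}(S^p) = h_{\lnot i}] \geq 1/2$ and setting $\mathbb{P} = \mathbb{P}_i$, $\mathcal{A} = \mathcal{A}_i$, $h^* = h_i$ yields both inequalities of the theorem simultaneously with probability at least $1/2$, which completes the argument in the main case and, via the $\eta \mapsto \min\{\eta, 2P_{10}, 2(1-P_{10}-P_{11})\}$ reduction, in all regimes.
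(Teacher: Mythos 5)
Your proof is correct and follows the same approach as the paper's: the same five-point support, the same two distributions swapping $(x_3,i,\lnot i)$ and $(x_4,\lnot i,i)$, the same two-classifier hypothesis space, the same randomized adversary collapsing $\mathbb{P}_0^{\mathcal{A}_0}$ and $\mathbb{P}_1^{\mathcal{A}_1}$, and the same reduction to a smaller effective $\alpha$ for the non-main regimes. The computations of the risk and equal-opportunity gaps match the paper's exactly.
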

\myparagraph{Discussion} A similar analysis to the one after Theorem \ref{thm:lower-bound-demog-par-with-p} applies here as well. In particular, whenever $\frac{\alpha}{1-\alpha} \leq 2\min\left\{P_{10}, 1 - P_{10} - P_{11}\right\}$, we obtain
\begin{equation}
\label{eqn:lower-bounds-equal-opp-lambda-weighted}
L_{\lambda}^{opp}(\mathcal{L}(S^p)) - \min_{h\in\mathcal{H}} L_{\lambda}^{opp}(h) \geq \Omega\left(\alpha + \lambda \frac{\alpha}{P_{10}}\right)
\end{equation}
\begin{equation}
\label{eqn:lower-bounds-equal-opp-component-wise}
\mathbf{L}^{opp}(\mathcal{L}(S^p)) \succeq \left(\Omega\left(\alpha\right),\Omega\left(\frac{\alpha}{P_{10}}\right)\right)
\end{equation}
The case when $\frac{\alpha}{1-\alpha} > 2\min\left\{P_{10}, 1 - P_{10} - P_{11}\right\}$ leads to a constant equal opportunity deviation measure. If in addition we have that $1 - P_{10} - P_{11} \geq P_{10}$, a completely unfair classifier will be returned. Consequently, if positive examples associated with one of the protected groups are rare (that is, if $P_{10} = \mathbb{P}(Y=1, A =0)$ is small), then even very small corruption ratios can lead to a biased model. 

\subsection{Hurting fairness without affecting accuracy}
While the results above shed light on the fundamental limits of robust fairness-aware learning against an adversary, models that are inaccurate are often easy to detect in practice. On the other hand, a model that has good accuracy, but exhibits a bias with respect to the protected attribute, can be much more problematic. This is especially true in applications where demographic data is not collected at prediction time for privacy reasons. In this case the model's bias might go unnoticed for a long time, thus adversely affecting one of the population subgroups and potentially extrapolating existing biases from the training data to future decisions.

We now show that such an unfortunate situation is indeed also possible under the malicious adversary model. The following results show that any learner will, in some situations, be forced by the adversary to return a model that is optimal in terms of accuracy, but exhibits unnecessarily high unfairness in terms of demographic parity. 
\begin{theorem}
\label{thm:lower-bound-demog-par-with-p-good-loss}
Let $0 \leq \alpha < 0.5, 0 < P_0 \leq 0.5$. For any input set $\mathcal{X}$ with at least four distinct points, there exists a finite hypothesis space $\mathcal{H}$, such that for any learning algorithm $\mathcal{L}:\cup_{n\in\mathbb{N}}(\prodspace)^n \to \mathcal{H}$, there exists a distribution $\mathbb{P}$ \textit{for which $\mathbb{P}(A = 0) = P_0$}, a malicious adversary $\mathcal{A}$ of power $\alpha$ and a hypothesis $h^* \in \mathcal{H}$, such that with probability at least $0.5$
\begin{equation*}
\mathcal{R}(\mathcal{L}(S^p), \mathbb{P}) = \mathcal{R}(h^*, \mathbb{P}) = \min_{h\in\mathcal{H}}\mathcal{R}(h, \mathbb{P})
\end{equation*}
 and 
\begin{equation*}
\mathcal{D}^{par}(\mathcal{L}(S^p), \mathbb{P}) - \mathcal{D}^{par}(h^*, \mathbb{P}) \geq \min\left\{\frac{\alpha}{2P_0}, 1\right\}.
\end{equation*}
\end{theorem}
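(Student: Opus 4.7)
The plan is to adapt the method of induced distributions from Theorem \ref{thm:lower-bound-demog-par-with-p}, with the crucial twist that the hypothesis space must admit \emph{multiple} Bayes-optimal classifiers on each of the two constructed distributions. That way the adversary's ambiguity cannot degrade accuracy, since any hypothesis the learner returns already satisfies $\mathcal{R}(\mathcal{L}(S^p), \mathbb{P}) = \min_{h \in \mathcal{H}} \mathcal{R}(h, \mathbb{P})$, and the entire effect of the attack is channeled into the fairness coordinate.

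Concretely, I would take four points $x_1, x_2, x_3, x_4$ and build two distributions $\mathbb{P}_0, \mathbb{P}_1$ that both place mass $1 - P_0 - \eta$ on $(x_1, 1, 1)$, mass $P_0 - \eta$ on $(x_2, 0, 0)$, and mass $\eta$ at each of $x_3, x_4$ with the label \emph{conditionally uniform} in $\{0,1\}$. The two distributions differ only in the protected attribute at $x_3, x_4$: on $\mathbb{P}_0$, $x_3$ carries $A = 0$ and $x_4$ carries $A = 1$; on $\mathbb{P}_1$ these are swapped. The hypothesis space is $\mathcal{H} = \{h_0, h_1\}$, with $h_0 \equiv (1,0,0,1)$ and $h_1 \equiv (1,0,1,0)$ read off at $(x_1, x_2, x_3, x_4)$. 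Because the labels at $x_3, x_4$ are uniform, a direct computation gives $\mathcal{R}(h_0, \mathbb{P}_i) = \mathcal{R}(h_1, \mathbb{P}_i) = \eta$ for $i \in \{0,1\}$, so both $h_0, h_1$ are Bayes-optimal in $\mathcal{H}$. By the attribute swap, on $\mathbb{P}_i$ the classifier $h_i$ predicts $1$ exclusively inside the $A = 1$ group and so has $\mathcal{D}^{par}(h_i, \mathbb{P}_i) = 1$, while $h_{\lnot i}$ distributes its positive predictions across both protected groups; a short calculation gives $\mathcal{D}^{par}(h_i, \mathbb{P}_i) - \mathcal{D}^{par}(h_{\lnot i}, \mathbb{P}_i) \geq \eta/P_0$ whenever $\eta \leq P_0(1-P_0)$.

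The adversary $\mathcal{A}_i$ on $\mathbb{P}_i$ then replaces every marked point with a uniform draw from $\{(x_3, 1-i, y), (x_4, i, y) : y \in \{0,1\}\}$, i.e., it fabricates the $x_3, x_4$ region with the \emph{opposite} protected-attribute assignment. Matching the cell masses of $\mathbb{P}_0^{\mathcal{A}_0}$ and $\mathbb{P}_1^{\mathcal{A}_1}$ reduces to the single equation $(1-\alpha)\eta/2 = \alpha/4$, solved by $\eta = \alpha/(2(1-\alpha))$, which makes the two corrupted distributions identical. Any learner then has the same output distribution over $\{h_0, h_1\}$ under both settings, so a pigeonhole argument yields an $i \in \{0,1\}$ on which the learner returns the unfair $h_i$ with probability at least $1/2$; combined with the accuracy equality this gives the claimed $\mathcal{D}^{par}$ gap of at least $\eta/P_0 = \alpha/(2(1-\alpha)P_0) \geq \alpha/(2P_0)$. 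The main obstacle is the boundary regime where $\eta = \alpha/(2(1-\alpha))$ exceeds $P_0(1-P_0)$, so that the supposedly fair classifier starts tilting the other way and the $\eta/P_0$ bound breaks. This is precisely where the theorem's right-hand side saturates at $1$, so I would handle it by capping $\eta$ at $P_0(1-P_0)$ and having the adversary simulate a weaker effective corruption rate by randomly leaving some of its marked points unchanged -- a valid move for a malicious adversary -- which preserves indistinguishability and pushes the fairness gap up to $1$, matching the $\min\{\cdot, 1\}$ cap.
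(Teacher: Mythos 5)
Your proof is correct and matches the paper's approach: both build two distributions via the method of induced distributions, make $h_0$ and $h_1$ tie for minimal risk so that the adversary's ambiguity cannot touch accuracy and is funnelled entirely into the parity deviation, use an attribute-swapping adversary to render the two corrupted distributions identical, and handle the regime where $\eta$ exceeds $P_0(1-P_0)$ by rescaling the effective corruption rate. The only difference is cosmetic: you place mass $\eta$ at each of $x_3,x_4$ with conditionally uniform labels, whereas the paper places mass $\eta/2$ at each with $y=1$; both constructions give both hypotheses equal risk and yield the same $\alpha/(2(1-\alpha)P_0(1-P_0)) \geq \alpha/(2P_0)$ fairness gap.
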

We also present a corresponding result for equal opportunity.
\begin{theorem}
\label{thm:lower-bound-equal-opp-with-p-good-loss}
Let $0 \leq \alpha < 0.5, P_{10} \leq P_{11} < 1$ be such that $P_{10} + P_{11} < 1$. For any input set $\mathcal{X}$ with at least five distinct points, there exists a finite hypothesis space $\mathcal{H}$, such that for any learning algorithm $\mathcal{L}:\cup_{n\in\mathbb{N}}(\prodspace)^n \to \mathcal{H}$, there exists a distribution $\mathbb{P}$ \textit{for which $\mathbb{P}(A = a, Y = 1) = P_{1a}$ for $a\in\{0,1\}$}, a malicious adversary $\mathcal{A}$ of power $\alpha$ and a hypothesis $h^* \in \mathcal{H}$, such that with probability at least $0.5$
\begin{align*}
\mathcal{R}(\mathcal{L}(S^p), \mathbb{P}) = \mathcal{R}(h^*, \mathbb{P}) = \min_{h\in\mathcal{H}} \mathcal{R}(h, \mathbb{P})
\end{align*}
and 
\begin{align*}
\mathcal{D}^{opp} (\mathcal{L}(S^p), \mathbb{P}) - \mathcal{D}^{opp}(h^*, \mathbb{P}) \geq \min\left\{\frac{\alpha}{2(1-\alpha)P_{10}}\left(1-\frac{P_{10}}{P_{11}}\right), 1-\frac{P_{10}}{P_{11}}\right\}.
\end{align*}
\end{theorem}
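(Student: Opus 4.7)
The plan is to adapt the method of induced distributions used for Theorems \ref{thm:lower-bound-demog-par-with-p}, \ref{thm:lower-bound-equal-opp-with-p}, and \ref{thm:lower-bound-demog-par-with-p-good-loss}, with a construction tailored so that both candidate classifiers are pinned to the minimum risk in $\mathcal{H}$ while their equal-opportunity deviations are forced apart. Write $\eta = \alpha/(1-\alpha)$ and set $p = \min\{\eta/2, P_{10}\}$. For five distinct points $x_1, \ldots, x_5 \in \mathcal{X}$ and each $i \in \{0,1\}$, I would define the distribution $\mathbb{P}_i$ placing mass $P_{11}-p$ on $(x_1, 1, 1)$, mass $P_{10}-p$ on $(x_2, 0, 1)$, mass $1 - P_{10} - P_{11}$ on $(x_3, 0, 0)$, and mass $p$ on each of $(x_4, i, 1)$ and $(x_5, \lnot i, 1)$. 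The conditions $p \leq P_{10}$ and $P_{10} + P_{11} < 1$ make $\mathbb{P}_i$ a valid probability distribution, and a direct inspection shows $\mathbb{P}_i(A=a, Y=1) = P_{1a}$ for both $a \in \{0,1\}$ and both $i$, matching the marginal hypothesis of the theorem.

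The hypothesis space would be $\mathcal{H} = \{h_0, h_1\}$, where $h_0$ and $h_1$ agree on $x_1, x_2, x_3$ (with values $1, 1, 0$) and differ on the attribute-swapped points via $h_i(x_4) = i$ and $h_i(x_5) = \lnot i$. The crucial feature is that on each $\mathbb{P}_i$ exactly one of the two swap points is misclassified by $h_0$ and the other by $h_1$, each contributing risk $p$, so $\mathcal{R}(h_0, \mathbb{P}_i) = \mathcal{R}(h_1, \mathbb{P}_i) = \min_{h\in\mathcal{H}}\mathcal{R}(h, \mathbb{P}_i)$ for both $i$. A direct true-positive-rate computation then yields $\mathcal{D}^{opp}(h_i, \mathbb{P}_i) = p/P_{10}$ and $\mathcal{D}^{opp}(h_{\lnot i}, \mathbb{P}_i) = p/P_{11}$, so the fairness-optimal choice on $\mathbb{P}_i$ is $h^* = h_{\lnot i}$ and selecting $h_i$ costs excess deviation $p(1/P_{10} - 1/P_{11}) = (p/P_{10})(1 - P_{10}/P_{11})$. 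Substituting $p = \eta/2$ and $p = P_{10}$ reproduces the two branches of the stated minimum.

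Finally, I would construct a randomized adversary $\mathcal{A}_i$ that replaces each marked point with a sample from $\tilde{\mathcal{A}}_i$, where $\tilde{\mathcal{A}}_0$ places mass $p/\eta \leq 1/2$ on each of $(x_4, 1, 1)$ and $(x_5, 0, 1)$, $\tilde{\mathcal{A}}_1$ does the symmetric thing on $(x_4, 0, 1)$ and $(x_5, 1, 1)$, and any leftover mass $1 - 2p/\eta$ is placed on a shared point such as $(x_1, 1, 1)$ identically in both $\tilde{\mathcal{A}}_0$ and $\tilde{\mathcal{A}}_1$. A short calculation verifies $(1-\alpha)\mathbb{P}_0 + \alpha\tilde{\mathcal{A}}_0 = (1-\alpha)\mathbb{P}_1 + \alpha\tilde{\mathcal{A}}_1$, so the corrupted-data distributions coincide and the law of $\mathcal{L}(S^p)$ is identical under $(\mathbb{P}_0, \mathcal{A}_0)$ and $(\mathbb{P}_1, \mathcal{A}_1)$. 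Since $\Pr[\mathcal{L}(S^p) = h_0]$ and $\Pr[\mathcal{L}(S^p) = h_1]$ sum to one, at least one is at least $1/2$, and the adversary chooses the distribution on which that output is the fairness-suboptimal classifier; the accuracy equality $\mathcal{R}(\mathcal{L}(S^p)) = \mathcal{R}(h^*) = \min_h \mathcal{R}(h)$ holds automatically because both hypotheses already achieve the minimum risk. The main obstacle is identifying the right two-point attribute-swap geometry on the $y=1$ support that simultaneously preserves the protected marginals, leaves both candidate classifiers loss-optimal, and admits an indistinguishability-producing adversary of power exactly $\alpha$; the fact that only two swap locations are needed (rather than four as in a naive symmetrization) is what yields the tight $1/(2(1-\alpha)P_{10})$ prefactor rather than a looser $1/(4(1-\alpha)P_{10})$.
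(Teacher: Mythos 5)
Your proposal is correct and matches the paper's proof in all essentials: the same two distributions $\mathbb{P}_0, \mathbb{P}_1$ (differing only on two $y=1$ ``swap'' points with masses $p$), the same two-element hypothesis space where both classifiers tie on risk at $p$ but have equal-opportunity deviations $p/P_{10}$ and $p/P_{11}$ that swap roles across the two distributions, and the same induced-distribution adversary of power $\alpha$ that makes $\mathbb{P}_0$ and $\mathbb{P}_1$ indistinguishable after corruption, after which the learner must fail on at least one with probability $\geq 1/2$. The only (purely cosmetic) divergences from the paper are your unified parameter $p = \min\{\eta/2, P_{10}\}$, which collapses the paper's Case 1 / Case 2 split into one construction, and your choice to dump the adversary's leftover probability mass $1 - 2p/\eta$ on the fixed shared point $(x_1,1,1)$ rather than having the adversary leave a fraction of marked points untouched (as the paper does via a rescaled effective corruption rate $\alpha_1$); both devices produce identical post-corruption mixtures and are therefore interchangeable.
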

Once again the error terms on the fairness notions are inversely proportional to $P_0$ and $P_{10}$ respectively, indicating that datasets in which one of the subgroups is underrepresented are particularly vulnerable to data manipulations. \tosee{In Theorem \ref{thm:lower-bound-equal-opp-with-p-good-loss} an additional multiplicative factor of $1 - \frac{P_{10}}{P_{11}}$ appears - while we believe this to be an artifact of the proof technique and not inherent, we do not currently have a lower bound construction that circumvents this term. However, considering the asymptotic behavior where $\alpha \to 0, P_{10} \to 0$, but $P_{11} = \Theta(1)$, this additional term is negligible.}

\section{Upper bounds}
\label{sec:upper-bounds}

We now prove that the (sample-size-independent) lower bounds from the previous section are tight up to constant factors, by providing matching upper bounds for the same problem. We do so by studying the performance of two natural types of fairness-aware learning algorithms under the malicious adversary model. We find that these algorithms achieve order-optimal performance in the large data regime.

\textbf{Complete proofs of all results in this section can be found in Appendix \ref{sec:appendix-upper-bounds-proofs}.} A sketch of the proofs is also presented in Section \ref{sec:upper_bounds_proof_sketch}.

\subsection{Upper bounds on the $\lambda$-weighted objectives}
\label{sec:upper-bounds-lambda-objectives}

The first type of algorithms we study simply minimize an empirical estimate of the $\lambda$-weighted objective $L_{\lambda}$. We show that with high probability such learners achieve an order-optimal deviation from $\min_{h\in\mathcal{H}}L_{\lambda}(h)$ in the large data regime, as long as $\mathcal{H}$ has a finite VC dimension. 

\tosee{Throughout this section we assume that $\lambda > 0$ is an arbitrary, but fixed parameter, chosen depending on domain-specific considerations (see also Section \ref{sec:multi-objective-adv-learning}).}

\myparagraph{Bound for demographic parity} Let $h\in\mathcal{H}$ be a fixed hypothesis. We consider the following natural estimate of $\mathcal{D}^{par}(h)$, as given in equation (\ref{eqn:demog-parity-deviation}), based on the corrupted dataset $S^p = \{(x^p_i, a^p_i, y^p_i)\}_{i=1}^n$:
\begin{align}
\label{eqn:demog-parity-estimate}
\widehat{\mathcal{D}}^{par}(h) = \left|\frac{\sum_{i=1}^n \mathbbm{1}\{h(x^p_i) = 1, a^p_i = 0\}}{\sum_{i=1}^n \mathbbm{1}\{a^p_i = 0\}} - \frac{\sum_{i=1}^n \mathbbm{1}\{h(x^p_i) = 1, a^p_i = 1\}}{\sum_{i=1}^n \mathbbm{1}\{a^p_i = 1\}}\right|,
\end{align}
with the convention that $\frac{0}{0} = 0$ for the purposes of this definition. We also denote the empirical risk of $h$ on $S^p$ by $\widehat{R}^p(h) = \frac{1}{n}\sum_{i=1}^n \mathbbm{1}\{h(x^p_i) \neq y^p_i\}$.

Suppose that the learner $\mathcal{L}^{par}_{\lambda}:\cup_{n = 1}^{\infty} (\prodspace)^n \to \mathcal{H}$ is such that 
\begin{align*}
\label{eqn:learner-agnostic}
\mathcal{L}^{par}_{\lambda}(S^p) \in \argmin_{h\in\mathcal{H}}(\widehat{R}^p(h) + \lambda \widehat{\mathcal{D}}^{par}(h)), \quad \text{for all } S^p.
\end{align*}
That is, $\mathcal{L}^{par}_{\lambda}$ always returns a minimizer of the $\lambda$-weighted empirical objective. Then the following result holds.

\begin{theorem}
\label{thm:agnostic-upper-bound-demog-par}
Let $\mathcal{H}$ be any hypothesis space with $d = VC(\mathcal{H}) < \infty$. Let $\mathbb{P}\in \mathcal{P}(\prodspace)$ be a fixed distribution and let $\mathcal{A}$ be any malicious adversary of power $\alpha < 0.5$. Denote by $\mathbb{P}^{\mathcal{A}}$ the probability distribution of the corrupted data $S^p$, under the random sampling of the clean data, the marked points and the randomness of the adversary. Then for any $\delta\in(0,1)$ and $n \geq \max \left\{\frac{8\log(16/\delta)}{(1-\alpha)P_0}, \frac{12\log(12/\delta)}{\alpha}, \frac{d}{2}\right\}$, we have:
\begin{align*}
\mathbb{P}^{\mathcal{A}}\left(L_{\lambda}^{par}\big(\widehat{h}\big)  \leq \min_{h\in\mathcal{H}} L_{\lambda}^{par}(h) + \Delta^{par}_{\lambda}\right) > 1 - \delta,
\end{align*}
where \tosee{$\widehat{h} \coloneqq \mathcal{L}^{par}_{\lambda}(S^p)$ is the hypothesis returned by the learner, $L_{\lambda}^{par}(h) = \mathcal{R}(h) + \lambda \mathcal{D}^{par}(h)$ is the $\lambda$-weighted objective and} \footnote{The $\widetilde{\mathcal{O}}$-notation hides constant and logarithmic factors.}
\begin{align*}
\Delta^{par}_{\lambda} = 3\alpha + \lambda (2\Delta^{par}) + \widetilde{\mathcal{O}}\left(\sqrt{\frac{d}{n}} + \lambda \sqrt{\frac{d}{P_0n}}\right)
\end{align*}
and 
\begin{align*}
\Delta^{par} = \frac{2\alpha}{\frac{P_0}{3} + \alpha} = \mathcal{O}\left(\frac{\alpha}{P_0}\right).
\end{align*}
\end{theorem}
This result shows that for any $\mathcal{H}$ of finite VC dimension, any distribution $\mathbb{P}$ and against any malicious adversary $\mathcal{A}$ of power $\alpha$, the learner $\mathcal{L}^{par}_{\lambda}$ is able, for sufficiently large values of the sample size $n \geq \Omega((P_0/\alpha)^2)$, to return with high probability a hypothesis $\widehat{h}$ such that
\begin{equation}
\label{eqn:rates-lambda-weighted-upper-bound-par}
L_{\lambda}^{par}\big(\widehat{h}\big) - \min_{h\in\mathcal{H}} L_{\lambda}^{par}(h) \leq \mathcal{O}\left(\alpha + \lambda \frac{\alpha}{P_0}\right).
\end{equation}
Note that these rates on the irreducible error term match our lower bound from Theorem \ref{thm:lower-bound-demog-par-with-p} and Inequality (\ref{eqn:lower-bounds-demog-par-lambda-weighted}). Indeed, the hardness result shows that no algorithm can guarantee better error rates than those in (\ref{eqn:rates-lambda-weighted-upper-bound-par}) on the family of finite hypothesis sets and hence also on the hypothesis sets with finite VC dimension.

\myparagraph{Bound for equal opportunity} Similarly, we consider the following estimate for the equal opportunity deviation measure:
\begin{align}
\label{eqn:equal-opp-estimate}
\widehat{\mathcal{D}}^{opp}(h) = \left|\frac{\sum_{i=1}^n \mathbbm{1}\{h(x^p_i) = 1, a^p_i = 0, y^p_i = 1\}}{\sum_{i=1}^n \mathbbm{1}\{a^p_i = 0, y^p_i = 1\}} - \frac{\sum_{i=1}^n \mathbbm{1}\{h(x^p_i) = 1, a^p_i = 1, y^p_i = 1\}}{\sum_{i=1}^n \mathbbm{1}\{a^p_i = 1, y^p_i = 1\}}\right|,
\end{align}
with the convention that $\frac{0}{0} = 0$ for the purposes of this definition. Suppose that a learner $\mathcal{L}^{opp}_{\lambda}:\cup_{n = 1}^{\infty} (\prodspace)^n \to \mathcal{H}$ is such that 
\begin{equation*}
\mathcal{L}^{opp}_{\lambda}(S^p) \in \argmin_{h\in\mathcal{H}}(\widehat{R}^p(h) + \lambda \widehat{\mathcal{D}}^{opp}(h)), \quad \text{for all } S^p,
\end{equation*}
that is, always returns a minimizer of the $\lambda$-weighted empirical objective. Then:
\begin{theorem}
\label{thm:agnostic-upper-bound-equal-opp}
Let $\mathcal{H}$ be any hypothesis space with $d = VC(\mathcal{H}) < \infty$. Let $\mathbb{P}\in \mathcal{P}(\prodspace)$ be a fixed distribution and let $\mathcal{A}$ be any malicious adversary of power $\alpha < 0.5$. Then for any $\delta\in(0,1)$ and $n \geq \max \left\{\frac{8\log(16/\delta)}{(1-\alpha) P_{10}}, \frac{12\log(12/\delta)}{\alpha}, \frac{d}{2}\right\}$
\begin{align*}
\label{eqn:agnostic-upper-bound-equal-opp}
\mathbb{P}^{\mathcal{A}}\left(L_{\lambda}^{opp}\big(\widehat{h}\big)  \leq \min_{h\in\mathcal{H}} L_{\lambda}^{opp}(h) + \Delta^{opp}_{\lambda}\right) > 1 - \delta,
\end{align*}
where $\widehat{h}\coloneqq \mathcal{L}^{opp}_{\lambda}(S^p)$ is the hypothesis returned by the learner, $L_{\lambda}^{opp}(h) = \mathcal{R}(h) + \lambda \mathcal{D}^{opp}(h)$ and
\begin{align*}
\Delta^{opp}_{\lambda} = 3\alpha + \lambda (2\Delta^{opp}) + \widetilde{\mathcal{O}}\left(\sqrt{\frac{d}{n}} + \lambda \sqrt{\frac{d}{P_{10}n}}\right)
\end{align*}
and 
\begin{align*}
\Delta^{opp} = \frac{2\alpha}{\frac{P_{10}}{3} + \alpha} = \mathcal{O}\left(\frac{\alpha}{P_{10}}\right).
\end{align*}
\end{theorem}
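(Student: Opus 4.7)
The plan is to mirror the strategy used for the demographic-parity version (Theorem \ref{thm:agnostic-upper-bound-demog-par}), but adapted to the conditional structure of equal opportunity. Overall I will prove a uniform concentration statement of the form
\begin{align*}
\sup_{h\in\mathcal{H}}\Big|\big(\widehat{R}^p(h) + \lambda\widehat{\mathcal{D}}^{opp}(h)\big) - \big(\mathcal{R}(h) + \lambda \mathcal{D}^{opp}(h)\big)\Big| \leq \tfrac{3}{2}\alpha + \lambda \Delta^{opp} + \widetilde{\mathcal{O}}\!\left(\sqrt{\tfrac{d}{n}} + \lambda\sqrt{\tfrac{d}{P_{10}n}}\right)
\end{align*}
with probability at least $1-\delta$, and then conclude by the standard ERM argument: if $\mathcal{L}^{opp}_\lambda(S^p)$ minimizes the empirical objective, comparing it to a minimizer $h^{\star}$ of $L_\lambda^{opp}$ via the usual ``add and subtract'' calculation gives a bound of twice the uniform-concentration radius, yielding $\Delta^{opp}_\lambda$.

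First I would control the risk term. By a Chernoff bound on $|\poisoned|\sim\Bin(n,\alpha)$ using $n\geq 12\log(12/\delta)/\alpha$, with high probability $|\poisoned|/n \leq \tfrac{3}{2}\alpha$, so $|\widehat{R}^p(h) - \widehat{R}^c(h)| \leq \tfrac{3}{2}\alpha$ uniformly in $h$. Combined with the standard VC uniform-convergence bound between $\widehat{R}^c(h)$ and $\mathcal{R}(h)$ of order $\widetilde{\mathcal{O}}(\sqrt{d/n})$, this handles the $\widehat{R}^p$ part.

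The main obstacle is controlling $\widehat{\mathcal{D}}^{opp}(h)$, since it is a difference of ratios with small denominators, each of which is affected by the adversary. Writing, for each $a\in\{0,1\}$,
\begin{align*}
\widehat{N}_a(h) = \tfrac{1}{n}\sum_{i=1}^n \mathbbm{1}\{h(x_i^p)=1, a_i^p=a, y_i^p=1\}, \qquad \widehat{D}_a = \tfrac{1}{n}\sum_{i=1}^n \mathbbm{1}\{a_i^p=a, y_i^p=1\},
\end{align*}
I would proceed in two steps. (i) Lower bound the denominator: the clean contribution to $\widehat{D}_a$ is a Binomial with mean $(1-\alpha)P_{1a}n$, so the assumption $n \geq 8\log(16/\delta)/((1-\alpha)P_{10})$ together with a multiplicative Chernoff bound gives $\widehat{D}_a \geq \tfrac{(1-\alpha)P_{1a}}{2} \geq \tfrac{P_{10}}{3}$ with high probability, and similarly the true denominator on the clean data satisfies an analogous bound. (ii) Control the numerator uniformly over $\mathcal{H}$ by a VC-type bound on the events $\{h(X)=1, A=a, Y=1\}$; the clean-data empirical version concentrates around $\mathbb{P}(h(X)=1, A=a, Y=1)$ at rate $\widetilde{\mathcal{O}}(\sqrt{d/n})$, and the adversary shifts it by at most $|\poisoned|/n \leq \tfrac{3}{2}\alpha$. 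Applying the identity
\begin{align*}
\left|\frac{\widehat{N}_a(h)}{\widehat{D}_a} - \frac{\mathbb{P}(h(X)=1,A=a,Y=1)}{P_{1a}}\right| \leq \frac{|\widehat{N}_a(h)-\mathbb{P}(h(X)=1,A=a,Y=1)|}{\widehat{D}_a} + \frac{\mathbb{P}(h(X)=1,A=a,Y=1)}{P_{1a}}\cdot\frac{|\widehat{D}_a - P_{1a}|}{\widehat{D}_a}
\end{align*}
with the lower bound $\widehat{D}_a \geq P_{10}/3$ produces a bias term of order $\alpha/(P_{10}/3+\alpha)$ plus a concentration term of order $\sqrt{d/(P_{10}n)}$; summing over $a\in\{0,1\}$ and applying the triangle inequality on the absolute difference defining $\widehat{\mathcal{D}}^{opp}$ yields exactly the $\Delta^{opp}$ and $\widetilde{\mathcal{O}}(\sqrt{d/(P_{10}n)})$ expressions in the theorem statement. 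The careful part is tracking that both the adversarial bias and the empirical process error reuse the same high-probability events so that the final union bound only contributes logarithmic factors, and that the constants combine to give $\Delta^{opp} = 2\alpha/(P_{10}/3+\alpha)$; the remainder is routine and parallels the demographic-parity proof verbatim, with $P_0$ replaced by $P_{10}$ throughout.
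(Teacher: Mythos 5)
Your overall architecture matches the paper's: control $\widehat{\mathcal{R}}^p$ by a Chernoff bound on $|\poisoned|$ plus a clean-data VC bound, control $\widehat{\mathcal{D}}^{opp}$ uniformly over $\mathcal{H}$, and finish with the standard add-and-subtract ERM argument. The risk part and the final step are fine. The gap is in how you handle the conditional-probability ratios, and it shows up in two places. First, the rate: you concentrate the numerator $\widehat{N}_a(h)$ around $\mathbb{P}(h(X)=1,A=a,Y=1)$ at rate $\widetilde{\mathcal{O}}(\sqrt{d/n})$ and then divide by a denominator of order $P_{1a}$; that yields $\widetilde{\mathcal{O}}(\sqrt{d/n}/P_{10}) = \widetilde{\mathcal{O}}(\sqrt{d/(P_{10}^2 n)})$, not the claimed $\widetilde{\mathcal{O}}(\sqrt{d/(P_{10}n)})$. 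To get the stated rate by your route you would need a relative-deviation (Bernstein-type) VC bound exploiting that the indicator has variance at most $P_{1a}$. The paper sidesteps this entirely by conditioning on the random set $S^c_{1a}$ of clean indices with $a^p_i=a, y^p_i=1$: given that set of size $C_{1a}\approx(1-\alpha)P_{1a}n$, the ratio $\gamma^c_{1a}(h)$ is an average of $C_{1a}$ conditionally i.i.d.\ Bernoullis with mean $\gamma_{1a}(h)$, so Hoeffding directly gives deviation $\exp(-2t^2C_{1a})$, i.e.\ $t\sim\sqrt{1/(P_{10}n)}$, and symmetrization makes this uniform.

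Second, the constant: your ratio-perturbation identity, applied separately for $a=0$ and $a=1$ with the lower bound $\widehat{D}_a\gtrsim P_{1a}$ coming only from the clean mass, produces a bias of order $\alpha/P_{1a}$ per group and effectively charges the full poisoned budget $3\alpha/2$ to each group; summing gives roughly $12\alpha/((1-\alpha)P_{10})$, which is $\mathcal{O}(\alpha/P_{10})$ but does not reproduce $2\Delta^{opp}=\frac{4\alpha}{P_{10}/3+\alpha}$ as stated. The paper instead uses the exact convex-combination identity $\gamma^p_{1a}=\frac{C^1_{1a}+B^1_{1a}}{C_{1a}+B_{1a}}$, which gives the single term $|\gamma^p_{1a}-\gamma^c_{1a}|\le\frac{B_{1a}}{C_{1a}+B_{1a}}$ (the poisoned points also inflate the denominator, hence the $+\alpha$ there), and then optimizes the split of $B_{10}+B_{11}\le\frac{3\alpha}{2}n$ across the two groups to bound the \emph{sum} of the two deviations by $\frac{2\alpha}{P_{10}/3+\alpha}$. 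If you only need the order $\mathcal{O}(\alpha/P_{10})$ your bias argument suffices, but to prove the theorem as stated (including the explicit $\Delta^{opp}$ and the $\sqrt{d/(P_{10}n)}$ term) you need these two refinements. A minor additional slip: $(1-\alpha)P_{1a}/2\ge P_{10}/3$ fails for $\alpha\in(1/3,1/2)$.
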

Again, for a sufficiently large sample size, this result implies an upper bound on the excess loss of the hypothesis $\widehat{h} \coloneqq \mathcal{L}^{opp}_{\lambda}(S^p)$ returned by the learner in terms of the weighted objective
\begin{equation}
\label{eqn:rates-lambda-weighted-upper-bound-opp}
L_{\lambda}^{opp}\big(\widehat{h}\big) - \min_{h\in\mathcal{H}} L_{\lambda}^{opp}(h) \leq \mathcal{O}\left(\alpha + \lambda \frac{\alpha}{P_{10}}\right),
\end{equation}
which is again order optimal, according to Theorem \ref{thm:lower-bound-equal-opp-with-p} and Inequality (\ref{eqn:lower-bounds-equal-opp-lambda-weighted}).

\subsection{Component-wise upper bounds}
\label{sec:comp-wise-upper-bounds}
We now introduce a second type of algorithms, which return a hypothesis that achieves both a small loss and a small fairness deviation measure on the training data, or, if no such hypothesis exists, a random hypothesis. We show that, in the case when there exists a classifier that is optimal in both accuracy and fairness, with high probability such learners return a hypothesis $h\in\mathcal{H}$ that is order-optimal in both elements of the objective vector $\mathbf{L}(h)$, as long as $\mathcal{H}$ is of finite VC dimension and $n$ is sufficiently large. Finally, in the case of realizable PAC learning with equal opportunity fairness, we are able to provide an algorithm that achieves such order-optimal guarantees with \textit{fast statistical rates}, for any finite hypothesis space.

\tosee{Throughout the section only, we assume that there exists a classifier $h^* \in \mathcal{H}$, such that $\ObjVec(h^*) \preceq \ObjVec(h)$ for all $h\in \mathcal{H}$.} That is, $\mathcal{R}(h^*) \leq \mathcal{R}(h)$ and $\mathcal{D}(h^*) \leq \mathcal{D}(h)$ for all $h\in\mathcal{H}$. We also assume that $d = VC(\mathcal{H}) < \infty$.

We note that the algorithms studied in this section require the knowledge of $\alpha$ and of $P_{0}$ and $P_{10}$ for demographic parity and equal opportunity respectively, since they explicitly use these quantities when selecting a hypothesis. Even if these quantities are unknown in advance, estimates can often be obtained in practice, for example by having the quality of a small random subset of the data $S^p$ verified by a trusted authority, or via conducting an additional survey/crowdsourcing experiment. 

\myparagraph{Bound for demographic parity}
Given a corrupted dataset $S^p = \{(x^p_i, a^p_i, y^p_i)\}$, let $\widehat{h}^{r} \in \argmin_{h\in\mathcal{H}} \widehat{\mathcal{R}}^p(h)$ and $\widehat{h}^{par} \in \argmin_{h\in\mathcal{H}} \widehat{\mathcal{D}}^{par}(h)$. Further, we define the sets
\begin{align*}
\mathcal{H}_1 & = \left\{h\in\mathcal{H}: \widehat{\mathcal{R}}^p(h) - \widehat{\mathcal{R}}^p(\widehat{h}^{r}) \leq 3\alpha + 4\sqrt{\frac{8d\log(\frac{en}{d}) + 2\log(16/\delta)}{n}} \right\} \\
\mathcal{H}_2 & = \left\{h\in\mathcal{H}: \widehat{\mathcal{D}}^{par}(h) - \widehat{\mathcal{D}}^{par}(\widehat{h}^{par}) \leq 2\Delta^{par} + 32 \sqrt{\frac{2d\log(\frac{2en}{d}) + 2\log(96/\delta)}{(1-\alpha)P_0n}} \right\}.
\end{align*}
That is, $\mathcal{H}_1$ and $\mathcal{H}_2$ are the sets of classifiers that are not far from optimal on the train data, in terms of their risk and their fairness respectively. The upper bound terms are selected according to the concentration properties of the two measures and describe the amount of expected variability of those, due to the randomness of the training data. Now define the \textit{component-wise learner}:
\begin{align*}
\mathcal{L}^{par}_{cw}(S^p) = 
\begin{cases}
    \text{any }h \in \mathcal{H}_1 \cap \mathcal{H}_2, & \text{if } \mathcal{H}_1 \cap \mathcal{H}_2 \neq \emptyset\\
    \text{any }h \in \mathcal{H}, & \text{otherwise,}
\end{cases}
\end{align*}
that returns a classifier that is good in both metrics, if such exists, or an arbitrary classifier otherwise.

\tosee{Intuitively, whenever a classifier is an element of $\mathcal{H}_1 \cap \mathcal{H}_2$, it performs relatively well on the data in terms of both accuracy and fairness, thereby being a good candidate for learning. On the other hand, situations where no such classifier exists are expected to be rare. This is because $h^*$ is optimal in both metrics on the true data distribution and so it is also likely to perform close to optimal on the corrupted data, allowing for variations due to finite sample effects and data corruption. Therefore, with high probability $h^* \in \mathcal{H}_1 \cap \mathcal{H}_2$, so that the intersection is non-empty.}

Formally, the following result holds.

\begin{theorem}
\label{thm:comp-wise-upper-bound-demog-par}
Let $\mathcal{H}$ be any hypothesis space with $d = VC(\mathcal{H}) < \infty$. Let $\mathbb{P}\in \mathcal{P}(\prodspace)$ be a fixed distribution and let $\mathcal{A}$ be any malicious adversary of power $\alpha < 0.5$. Suppose that there exists a hypothesis $h^*\in\mathcal{H}$, such that $\ObjVec(h^*) \preceq \ObjVec(h)$ for all $h\in\mathcal{H}$. Then for any $\delta\in(0,1)$ and $n \geq \max \left\{\frac{8\log(16/\delta)}{(1-\alpha)P_0}, \frac{12\log(12/\delta)}{\alpha}, \frac{d}{2}\right\}$, with probability at least $1-\delta$:
\begin{align*}
\mathbf{L}^{par}\big(\widehat{h}\big)  \preceq \left(6\alpha + \widetilde{\mathcal{O}}\left(\sqrt{\frac{d}{n}}\right), 4\Delta^{par} +  \widetilde{\mathcal{O}}\left(\sqrt{\frac{d}{P_0 n}}\right) \right),
\end{align*}
where \tosee{$\widehat{h} \coloneqq \mathcal{L}^{par}_{cw}(S^p)$ is the hypothesis returned by the learner} and $$\mathbf{L}^{par}(\widehat{h}) = \left(\mathcal{R}(\widehat{h}) - \mathcal{R}(h^*), \mathcal{D}^{par}(\widehat{h}) - \mathcal{D}^{par}(h^*)\right).$$
\end{theorem}
Since $\Delta^{par} = \mathcal{O}\left(\frac{\alpha}{P_0}\right)$, in the large data limit we obtain that
\begin{align}
\label{eqn:upper-bound-demog-par-component-wise}
\mathbf{L}^{par}(\widehat{h}) \preceq \left(\mathcal{O}(\alpha), \mathcal{O}\left(\frac{\alpha}{P_{0}}\right)\right).
\end{align}
Note that this bound is order-optimal for the class of finite hypothesis spaces, and hence also for the class of hypothesis spaces with finite VC dimension, according to Theorem \ref{thm:lower-bound-demog-par-with-p} and Inequality (\ref{eqn:lower-bounds-demog-par-component-wise}).

\myparagraph{Bound for equal opportunity}
Similarly, let $\widehat{h}^{opp} \in \argmin_{h\in\mathcal{H}} \widehat{\mathcal{D}}^{opp}(h)$. Further, we define the set
\begin{align*}
\mathcal{H}_3 & = \left\{h\in\mathcal{H}: \widehat{\mathcal{D}}^{opp}(h) - \widehat{\mathcal{D}}^{opp}(\widehat{h}^{opp}) \leq 2\Delta^{opp} + 32 \sqrt{\frac{2d\log(\frac{2en}{d}) + 2\log(96/\delta)}{(1-\alpha)P_{10}n}} \right\}.
\end{align*}
That is, $\mathcal{H}_3$ is the set of classifiers that are not far from optimal on the train data, in terms of equal opportunity fairness. Now define the \textit{component-wise learner} for equal opportunity:
\begin{align*}
\mathcal{L}^{opp}_{cw}(S^p) = 
\begin{cases}
    \text{any }h \in \mathcal{H}_1 \cap \mathcal{H}_3, & \text{if } \mathcal{H}_1 \cap \mathcal{H}_3 \neq \emptyset\\
    \text{any }h \in \mathcal{H}, & \text{otherwise,}
\end{cases}
\end{align*}
that returns a classifier that is good in both metrics, if such exists, or an arbitrary classifier otherwise. Then the following result holds.

\begin{theorem}
\label{thm:comp-wise-upper-bound-equal-opp}
Let $\mathcal{H}$ be any hypothesis space with $d = VC(\mathcal{H}) < \infty$. Let $\mathbb{P}\in \mathcal{P}(\prodspace)$ be a fixed distribution and let $\mathcal{A}$ be any malicious adversary of power $\alpha < 0.5$. Suppose that there exists a hypothesis $h^*\in\mathcal{H}$, such that $\ObjVec(h^*) \preceq \ObjVec(h)$ for all $h\in\mathcal{H}$. Then for any $\delta\in(0,1)$ and $n \geq \max \left\{\frac{8\log(16/\delta)}{(1-\alpha) P_{10}}, \frac{12\log(12/\delta)}{\alpha}, \frac{d}{2}\right\}$, with probability at least $1-\delta$
\begin{align*}
\mathbf{L}^{opp}(\widehat{h})  \preceq \left(6\alpha + \widetilde{\mathcal{O}}\left(\sqrt{\frac{d}{n}}\right), 4\Delta^{opp} +  \widetilde{\mathcal{O}}\left(\sqrt{\frac{d}{P_{10} n}}\right) \right).
\end{align*}
where $\widehat{h} \coloneqq \mathcal{L}^{opp}_{cw}(S^p)$ is the hypothesis returned by the learner and $$\mathbf{L}^{opp}(\widehat{h}) = \left(\mathcal{R}(\widehat{h}) - \mathcal{R}(h^*), \mathcal{D}^{opp}(\widehat{h}) - \mathcal{D}^{opp}(h^*)\right).$$
\end{theorem}
Since $\Delta^{opp} = \mathcal{O}\left(\frac{\alpha}{P_{10}}\right)$,  in the large data limit we obtain that
\begin{align}
\label{eqn:upper-bound-demog-par-component-wise}
\mathbf{L}^{opp}(\widehat{h}) \preceq \left(\mathcal{O}(\alpha), \mathcal{O}\left(\frac{\alpha}{P_{10}}\right)\right).
\end{align}
Note that this bound is order-optimal for the class of finite hypothesis spaces, and hence also for the class of hypothesis spaces with finite VC dimension, according to Theorem \ref{thm:lower-bound-equal-opp-with-p} and Inequality (\ref{eqn:lower-bounds-equal-opp-component-wise}).

\myparagraph{Upper bound with fast rates} Finally, we study learning with the equal opportunity fairness notion, in the realizable PAC learning framework, where a perfectly accurate classifier exists. Given this additional assumption, we are able to certify \textit{convergence to an order-optimal error in both fairness and accuracy at fast statistical rates}. For simplicity we assume that $\mathcal{H}$ is finite here. 

Specifically, note that while the results presented already achieve order-optimal guarantees in the limit as $n\to \infty$, for a finite amount of samples they incur an additional loss of $\widetilde{\mathcal{O}}\left(\frac{1}{\sqrt{n}}\right)$. Regarding $P_0$ (for demographic parity) or $P_{10}$ (for equal opportunity) as fixed, all previous algorithms need $\widetilde{\mathcal{O}}\left(\frac{1}{\alpha^2}\right)$ samples to achieve an excess risk and fairness deviation measure of $\widetilde{\mathcal{O}}\left(\alpha\right)$. In contrast, the algorithm we present now only requires $\mathcal{O}\left(\frac{1}{\alpha}\right)$ samples.
 
Formally, assume that the underlying clean distribution $\mathbb{P}$ is such that there exists a $h^*\in\mathcal{H}$, for which $\mathbb{P}(h^*(X) = Y) = 1$. This implies that $L(h^*) = 0$ and $\mathcal{D}^{opp}(h^*) = 0$.

Key to the design of an algorithm that achieves fast statistical rates for the objective $\mathbf{L}$ are the following empirical estimates:
\begin{align}
\label{eqn:estimates-zero-conditional-probs}
\bar{\gamma}^p_{1a}(h) = \frac{\sum_{i=1}^n \mathbbm{1}\{h(x^p_i) = 0, a^p_i = a, y^p_1 = 1\}}{\sum_{i=1}^n \mathbbm{1}\{a^p_i = a, y^p_i = 1\}}
\end{align}
of $\bar{\gamma}_{1a}(h) \coloneqq \mathbb{P}(h(X) = 0| A = a, Y = 1) = 0$ for $a\in\{0,1\}$. \tosee{The fact that $\bar{\gamma}_{1a}(h) = 0$ (as opposed to $\gamma_{1a}(h) = 1$)
is crucial for obtaining the fast rates, since it allows for a concentration analysis based on the multiplicative Chernoff bounds only, rather than the additive ones and/or Hoeffding's inequality \citep{boucheron2013concentration}, which would lead to rates of $\mathcal{O}(\frac{1}{\alpha^2})$ again. }

Given a (corrupted) training set $S^p$, denote by 
\begin{equation}
\label{eqn:defn-of-h-star-set}
\mathcal{H}^*(S^p)  \vcentcolon = \left\{h\in\mathcal{H}\middle| \max_a \bar{\gamma}^p_{1a}(h) \leq \Delta^{opp} \land \widehat{\mathcal{R}}^p(h) \leq \frac{3\alpha}{2}\right\}
\end{equation}
the set of all classifiers that have a small loss and small values of $\bar{\gamma}^p_{1a}$ for both $a\in\{0,1\}$ on $S^p$. Consider the learner $\mathcal{L}^{fast}$ defined by
\begin{align}
\label{eqn:fancy-learner}
\mathcal{L}^{fast}(S^p) = 
\begin{cases}
    \text{any }h \in \mathcal{H}^*, & \text{if } \mathcal{H}^*\neq \emptyset\\
    \text{any }h \in \mathcal{H}, & \text{otherwise.}
\end{cases}
\end{align}
The intuition behind the construction is similar to before: hypotheses in $\mathcal{H}^*$ perform well on the training data and hence are good candidates. At the same time, we expect that $h^* \in \mathcal{H}^*$, so that $\mathcal{H}^*$ is non-empty.
 
Then the following result holds.
\begin{theorem}
\label{thm:upper-bound-realizable}
Let $\mathcal{H}$ be finite and $\mathbb{P}\in \mathcal{P}(\prodspace)$ be such that for some $h^*\in\mathcal{H}$, $\mathbb{P}(h^*(X) = Y) = 1$. Let $\mathcal{A}$ be any malicious adversary of power $\alpha < 0.5$. Then for any $\delta, \eta\in(0,1)$ and any 

\begin{align*}
n & \geq \max \left\{\frac{8\log(16|\mathcal{H}|/\delta)}{(1-\alpha) P_{10}},  \frac{12\log(12/\delta)}{\alpha}, \frac{2\log(8|\mathcal{H}|/\delta)}{3\eta^2\alpha}, \frac{2\log(\frac{16|\mathcal{H}|}{\delta})}{3\eta^2 (1-\alpha) P_{10}\alpha}\right\} \\ & = \Omega\left(\frac{\log(|\mathcal{H}|/\delta)}{\eta^2 P_{10}\alpha}\right)
\end{align*}
with probability at least $1-\delta$
\begin{align*}
\mathbf{L}^{opp}(\widehat{h}) \preceq \left(\frac{3\alpha}{1-\eta}, \frac{2\Delta^{opp}}{1-\eta}\right),
\end{align*}
where $\widehat{h} \coloneqq \mathcal{L}^{fast}(S^p)$ is the hypothesis returned by the learner and $$\mathbf{L}^{opp}(\widehat{h}) = \left(\mathcal{R}(\widehat{h}) - \mathcal{R}(h^*), \mathcal{D}^{opp}(\widehat{h}) - \mathcal{D}^{opp}(h^*)\right).$$
\end{theorem}
As an immediate consequence of Theorem \ref{thm:upper-bound-realizable}, setting $\eta = \frac{1}{2}$, say, yields that for large $n$, with high probability
\begin{align}
\label{eqn:upper-bound-equal-opp-component-wise}
\mathbf{L}^{opp}(\widehat{h}) \preceq \left(\mathcal{O}(\alpha), \mathcal{O}\left(\frac{\alpha}{P_{10}}\right)\right).
\end{align}
Again, this bound is order-optimal for finite hypothesis sets, according to Theorem \ref{thm:lower-bound-equal-opp-with-p} and Inequality (\ref{eqn:lower-bounds-equal-opp-component-wise}). In addition, regarding $P_{10}$ as a constant, the number of samples needed for achieving this order-optimal element-wise error is indeed $\mathcal{O}(\frac{1}{\alpha})$, according to Theorem \ref{thm:upper-bound-realizable}, which is faster than the $\widetilde{\mathcal{O}}(\frac{1}{\alpha^2})$ we obtained with the previous results.

\subsection{Sketch of the upper bounds proofs}
\label{sec:upper_bounds_proof_sketch}

\addtocounter{theorem}{-9}

Here we present a sketch of the proofs of the upper bounds. The complete proofs can be found in Appendix \ref{sec:appendix-upper-bounds-proofs}.

The proofs of Theorems \ref{thm:agnostic-upper-bound-demog-par}, \ref{thm:agnostic-upper-bound-equal-opp}, \ref{thm:comp-wise-upper-bound-demog-par}, \ref{thm:comp-wise-upper-bound-equal-opp} rely on a series of results that describe the deviations of the corrupted fairness estimates $\widehat{\mathcal{D}}(h)$ from the true underlying population values $\mathcal{D}(h)$, uniformly over the hypothesis space $\mathcal{H}$. Key to this is bounding the effect of the data corruption, as expressed by the maximum achievable gap between the corrupted fairness estimates and the corresponding estimates based on the clean (but unknown) subset of the data. Then the large deviation properties of these clean data estimates are studied instead.

Here we make this specific for the case of demographic parity, with the analysis for equal opportunity being similar. We denote
\begin{align*}
\gamma^p_a(h) = \frac{\sum_{i=1}^n \mathbbm{1}\{h(x^p_i) = 1, a^p_i = a\}}{\sum_{i=1}^n \mathbbm{1}\{a^p_i = a\}}
\end{align*}
and 
\begin{align*}
\gamma_a(h) = \mathbb{P}(h(X) = 1| A = a),
\end{align*}
so that $\widehat{\mathcal{D}}^{par}(h) = |\gamma^p_0(h) - \gamma^p_1(h)|$ and $\mathcal{D}^{par}(h) = |\gamma_0(h) - \gamma_1(h)|$.  Note that $\gamma^p_a(h)$ is an estimate of a conditional probability \textit{based on the corrupted data}. We now introduce the corresponding estimate that only uses the \textit{unknown clean subset} of the training set $S^p$
\begin{align*}
\gamma^c_a(h) = \frac{\sum_{i=1}^n \mathbbm{1}\{h(x^p_i) = 1, a^p_i = a, i\not\in\poisoned\}}{\sum_{i=1}^n \mathbbm{1}\{a^p_i = a, i\not\in\poisoned\}}.
\end{align*}

\myparagraph{Bounding the effect of the adversary} First, we bound how far the corrupted estimates $\gamma^p_a(h)$ of $\gamma_a(h)$ are from the clean estimates $\gamma^c_a(h)$, uniformly over the hypothesis space $\mathcal{H}$:
\begin{lemma}
\label{lemma:upper-bound-delta-par}
If $n \geq \max \left\{\frac{8\log(4/\delta)}{(1-\alpha)P_0}, \frac{12\log(3/\delta)}{\alpha}\right\}$, we have
\begin{align*}
\label{eqn:bound_on_delta_appendix}
\mathbb{P}^{\mathcal{A}}\left(\sup_{h\in\mathcal{H}}\left(\left|\gamma^p_0(h) - \gamma^c_0(h)\right| + \left|\gamma^p_1(h) - \gamma^c_1(h)\right|\right) \geq \frac{2\alpha}{\frac{P_0}{3} + \alpha}\right) < \delta.
\end{align*}
\end{lemma}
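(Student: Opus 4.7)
The plan is to reduce the random $h$-dependent supremum to a deterministic function of a few scalar counts, and then control those counts via Chernoff bounds.

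First, the algebra. Let $M = |\poisoned|$, $M_a = \sum_{i\in\poisoned}\mathbbm{1}\{a^p_i = a\}$, and $N^c_a = \sum_{i\notin\poisoned}\mathbbm{1}\{a^c_i = a\}$; by construction $M \sim \Bin(n,\alpha)$, while $N^c_a \sim \Bin(n,(1-\alpha)P_a)$ is independent of the adversary's randomness. Writing $\gamma^p_a(h) - \gamma^c_a(h)$ over the common denominator $N^c_a(N^c_a + M_a)$ gives a numerator of the form $N^c_a P_a(h) - M_a T^c_a(h)$, where $P_a(h) \in [0, M_a]$ counts attribute-$a$ poisoned points on which $h$ predicts $1$ and $T^c_a(h) \in [0, N^c_a]$ is the analogous clean count. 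Hence the absolute value of the numerator is at most $M_a N^c_a$, producing the $h$-free bound
$$|\gamma^p_a(h) - \gamma^c_a(h)| \leq \frac{M_a}{N^c_a + M_a},$$
and therefore
$$\sup_{h\in\mathcal{H}}\bigl(|\gamma^p_0(h)-\gamma^c_0(h)| + |\gamma^p_1(h)-\gamma^c_1(h)|\bigr) \leq \frac{M_0}{N^c_0+M_0} + \frac{M_1}{N^c_1+M_1}.$$

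Second, I exploit the constraint $M_0 + M_1 = M$ together with concavity to collapse the right-hand side. Setting $c = \min(N^c_0, N^c_1)$ gives $\tfrac{M_a}{N^c_a+M_a} \leq \tfrac{M_a}{c+M_a}$, and since $x \mapsto x/(c+x)$ is concave on $[0,\infty)$, Jensen's inequality implies
$$\frac{M_0}{c+M_0} + \frac{M_1}{c+M_1} \leq 2 \cdot \frac{M/2}{c+M/2} = \frac{2M}{2c+M}.$$
This bound is monotone increasing in $M$ and decreasing in $c$, so it suffices to control $M$ from above and $c$ from below with high probability. Multiplicative Chernoff bounds with relative slack $1/2$ give $M \leq (3/2)n\alpha$ with failure probability at most $\exp(-n\alpha/12)$, and $N^c_a \geq (1/2)n(1-\alpha)P_a \geq (1/2)n(1-\alpha)P_0$ (using $P_1 \geq P_0$) with failure probability at most $\exp(-n(1-\alpha)P_0/8)$ for each $a$. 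Under the hypotheses $n \geq 12\log(3/\delta)/\alpha$ and $n \geq 8\log(4/\delta)/((1-\alpha)P_0)$, these three failure probabilities are bounded by $\delta/3, \delta/4, \delta/4$ respectively and sum to strictly less than $\delta$ by a union bound. On the intersection of the three good events $c \geq (1/2)n(1-\alpha)P_0$, and so
$$\frac{2M}{2c+M} \leq \frac{3\alpha}{(1-\alpha)P_0 + (3/2)\alpha} \leq \frac{2\alpha}{P_0/3 + \alpha},$$
the last step using $\alpha < 1/2$, equivalently $(1-\alpha)P_0 \geq P_0/2$.

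The main subtlety is obtaining the additive $\alpha$ in the denominator $P_0/3 + \alpha$. A naive uniform bound $\tfrac{M_a}{N^c_a+M_a} \leq \tfrac{M_a}{N^c_a}$ followed by summation yields only $M/\min(N^c_0, N^c_1) \sim \alpha/P_0$, which blows up whenever $\alpha$ is not negligible compared to $P_0$ and misses the elementary upper cap of $2$ that the original expression trivially satisfies. Retaining $M_a$ in the denominator and exploiting the concavity of $x/(c+x)$ together with the budget constraint $M_0+M_1 = M$ is exactly what produces the $\alpha$ in the denominator and matches the constants in the claim.
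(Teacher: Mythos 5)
Your proof is correct and follows essentially the same route as the paper's: the same pointwise bound $|\gamma^p_a(h)-\gamma^c_a(h)|\le M_a/(N^c_a+M_a)$, the same three Chernoff events with the same constants, and the same final optimization over how the adversary splits its budget between the two groups (you use concavity of $x\mapsto x/(c+x)$ and Jensen where the paper differentiates explicitly to find the maximizer at $M_0=M/2$, but this is cosmetic). The only detail you gloss over is the degenerate case $N^c_a=0$ under the $\tfrac{0}{0}=0$ convention, which the paper checks separately but which is excluded on the good event anyway.
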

Informally, this lemma allows us to connect the corrupted estimate $\widehat{\mathcal{D}}^{par}(h)$ with the corresponding ideal clean estimate $\widehat{\mathcal{D}}^c(h) = |\gamma^c_0(h) - \gamma^c_1(h)|$.

\myparagraph{Bounding the deviation of the clean data estimate} Secondly, a technique used by \cite{woodworth2017learning} and \cite{agarwal2018reductions} for proving concentration of fairness measures is used to derive a concentration result for the clean estimates $\gamma^c_a(h)$, around the true population values $\gamma_a(h)$. This, together with Lemma \ref{lemma:upper-bound-delta-par}, allows us to bound the gap between the corrupted estimate $\widehat{\mathcal{D}}^{par}(h)$ and the true population value $\mathcal{D}^{par}(h)$, for a single hypothesis.

\myparagraph{Making the bound uniform over $\mathcal{H}$} Finally, the bound obtained is made uniform over $\mathcal{H}$. For this, we use the classic symmetrization technique \citep{vapnik2013nature} for proving bounds uniformly over hypothesis spaces of finite VC dimension. However, since the objective is different from the 0-1 loss, care is needed to ensure that the argument goes through, so the proof is given in full detail in the supplementary material.

Once a uniform bound on the deviations of the corrupted fairness estimates from the true underlying population values is obtained, the results of Theorems \ref{thm:agnostic-upper-bound-demog-par}, \ref{thm:agnostic-upper-bound-equal-opp}, \ref{thm:comp-wise-upper-bound-demog-par}, \ref{thm:comp-wise-upper-bound-equal-opp} follow similarly to most classic ERM results.

\myparagraph{Proof of Theorem \ref{thm:upper-bound-realizable}}
Similarly to the other results, the proof of Theorem \ref{thm:upper-bound-realizable} first links the corrupted estimates $\bar{\gamma}^p_{1a}$ to their clean counterparts and then uses the clean data concentration to study the behavior of the corrupted estimates. However, an important tool that allows us to obtain the fast statistical rates, is a set of \textit{multiplicative concentration bounds} on the $\bar{\gamma}^p_{1a}$ estimates. \tosee{It is for this reason that $\mathcal{L}^{fast}$ learner uses the $\bar{\gamma}^p_{1a}$ estimates, instead of the $\gamma^p_{1a}$, see also the discussion after equation (\ref{eqn:estimates-zero-conditional-probs})}. Full details and a complete proof can be found in the supplementary material.

\section{Discussion}
\label{sec:conclusion}

In this work we explored the statistical limits of fairness-aware learning algorithms on corrupted data, under the malicious adversary model. Our results show that data manipulations can have an inevitable negative effect on model fairness and that this effect is even more expressed for problems where a subgroup in the population is underrepresented. We also provided upper bounds that match our hardness results up to constant factors, in the large data regime.

Below we outline several implications of our work and discuss some specific extensions that constitute interesting directions for future research.

\paragraph{Implications of our results} While the strong adversarial model and the statistical PAC learning analysis we have considered are mostly of theoretical interest, we believe that the hardness results have several important implications. Indeed, crucial to increasing the trust in learned decision making systems is the ability to guarantee that they exhibit a high amount of fairness, regardless of any known or unforeseen biases in the training data. In contrast, we have shown that this is provably impossible under a strong adversarial model for the data corruption.

We believe that these results stress on the importance of developing and \textit{studying further data corruption models} in the context of fairness-aware learning. As discussed in the related work section, previous research has shown that it can be possible to recover a fair model under corruptions of the labels or the protected attributes only. While real-world data is likely to contain more subtle manipulations, one may hope that for certain applications there will be models of data corruption that are, on the one hand, sufficiently broad to cover the data issues and, on the other hand, specific enough so that fair learning becomes possible.

Our results can also be seen as an indication that strict data collection practices may in fact be necessary for designing provably fair machine learning models. Indeed, our bounds hold under the assumption that the learner can only access one dataset of unknown quality. In contrast, it has been shown that the use of even a small trusted dataset (that is, a certified clean subset of the data) can greatly improve the performance of machine learning models under corruption, both in the context of classic PAC learning \citep{hendrycks2018using, konstantinov2019robust} and in the context of fairness-aware learning \citep{roh2020fr}. Such data can also be helpful for the sake of validating the fairness of a model as a precautionary step before its real-world adoption. 

In summary, understanding and accounting for the types of biases present in machine learning datasets is crucial for addressing the issues brought up in this work and for the development of certifiably fair learning models. 

\paragraph{Extensions to other fairness notions} \tosee{We expect that our analysis can be extended to other group fairness measures. In particular, the work of \cite{agarwal2018reductions} has shown that a broad range of fairness notions based on conditional independence constraints are amendable to concentration of measure analysis, via an application of the proof technique of \cite{woodworth2017learning}. Since this technique is also at the core of the concentration arguments used in the proofs of our upper bounds, we expect that a similar analysis can be conducted for the broader class of fairness measures considered by \cite{agarwal2018reductions}.}

\tosee{The lower bounds, however, require explicit constructions of hard learning problems to be designed and these constructions are necessarily tailored to the specific fairness notions being considered. Therefore, while the proof technique, namely the method of induced distributions \citep{kearns1993learning}, may be a useful tool for showing hardness results about other fairness measures, the key challenge of designing a hard learning problem instance for each measure remains open. }

\paragraph{Extensions to other adversarial models} \tosee{In this work we have studied learning and fairness under the malicious adversary model \citep{kearns1993learning}. It will be interesting to analyze the limits of fairness-aware learning for other adversarial models as well. As mentioned above, studying weaker, application-specific adversaries may allow for PAC learnability.}

\tosee{On the other hand, fairness can be studied under the even stronger nasty noise model of \cite{bshouty2002pac}, which has also recently been analyzed in the context of robust mean estimation \citep{diakonikolas2019robust}. In this model the adversary does not get to manipulate a random subset of the data, but can instead choose the points that it alters. To our awareness, the only work that considers this adversarial model in the context of fairness is that of \cite{celis2021fair2}, who, however, only study manipulations of the protected attribute and not of the labels and features.}

\tosee{Since the nasty noise adversary is strictly stronger than the malicious one, our lower bounds hold for the nasty noise model as well. In particular, achieving optimal fairness remains impossible in this setup. Whether our lower bounds are order-optimal within the nasty adversary model as well, or stronger hardness results can be shown, is an interesting direction for future work.}

\newpage

\section*{Acknowledgments}
The authors thank Eugenia Iofinova and Bernd Prach for providing feedback on early versions of this paper. This publication was made possible by an ETH AI Center postdoctoral fellowship to Nikola Konstantinov.

\bibliography{ms}

\begin{thebibliography}{57}
\providecommand{\natexlab}[1]{#1}
\providecommand{\url}[1]{\texttt{#1}}
\expandafter\ifx\csname urlstyle\endcsname\relax
  \providecommand{\doi}[1]{doi: #1}\else
  \providecommand{\doi}{doi: \begingroup \urlstyle{rm}\Url}\fi

\bibitem[Agarwal et~al.(2018)Agarwal, Beygelzimer, Dudik, Langford, and
  Wallach]{agarwal2018reductions}
Alekh Agarwal, Alina Beygelzimer, Miroslav Dudik, John Langford, and Hanna
  Wallach.
\newblock A reductions approach to fair classification.
\newblock In \emph{International Conference on Machine Learing (ICML)}, 2018.

\bibitem[Angluin and Laird(1988)]{angluin1988learning}
Dana Angluin and Philip Laird.
\newblock Learning from noisy examples.
\newblock \emph{Machine Learning}, 2\penalty0 (4):\penalty0 343--370, 1988.

\bibitem[Awasthi et~al.(2020)Awasthi, Kleindessner, and
  Morgenstern]{awasthi2020equalized}
Pranjal Awasthi, Matth{\"a}us Kleindessner, and Jamie Morgenstern.
\newblock Equalized odds postprocessing under imperfect group information.
\newblock In \emph{Conference on Uncertainty in Artificial Intelligence
  (AISTATS)}, 2020.

\bibitem[Barocas et~al.(2019)Barocas, Hardt, and
  Narayanan]{barocas-hardt-narayanan}
Solon Barocas, Moritz Hardt, and Arvind Narayanan.
\newblock \emph{Fairness and Machine Learning}.
\newblock fairmlbook.org, 2019.
\newblock \url{http://www.fairmlbook.org}.

\bibitem[Biggio and Roli(2018)]{biggio2018wild}
Battista Biggio and Fabio Roli.
\newblock Wild patterns: Ten years after the rise of adversarial machine
  learning.
\newblock \emph{Pattern Recognition}, 2018.

\bibitem[Biggio et~al.(2012)Biggio, Nelson, and Laskov]{biggio2012poisoning}
Battista Biggio, Blaine Nelson, and Pavel Laskov.
\newblock Poisoning attacks against support vector machines.
\newblock In \emph{International Conference on Machine Learing (ICML)}, 2012.

\bibitem[Blum and Stangl(2020)]{blum2020recovering}
Avrim Blum and Kevin Stangl.
\newblock Recovering from biased data: Can fairness constraints improve
  accuracy?
\newblock In \emph{Foundations of Responsible Computing}, volume 156. Schloss
  Dagstuhl -- Leibniz Center for Informatics, 2020.

\bibitem[Boucheron et~al.(2013)Boucheron, Lugosi, and
  Massart]{boucheron2013concentration}
St{\'e}phane Boucheron, G{\'a}bor Lugosi, and Pascal Massart.
\newblock \emph{Concentration inequalities: A nonasymptotic theory of
  independence}.
\newblock Oxford university press, 2013.

\bibitem[Bousquet et~al.(2003)Bousquet, Boucheron, and
  Lugosi]{bousquet2003introduction}
Olivier Bousquet, St{\'e}phane Boucheron, and G{\'a}bor Lugosi.
\newblock Introduction to statistical learning theory.
\newblock In \emph{Summer School on Machine Learning}, 2003.

\bibitem[Bshouty et~al.(2002)Bshouty, Eiron, and Kushilevitz]{bshouty2002pac}
Nader~H Bshouty, Nadav Eiron, and Eyal Kushilevitz.
\newblock Pac learning with nasty noise.
\newblock \emph{Theoretical Computer Science (TCC)}, 2002.

\bibitem[Calders and Verwer(2010)]{calders2010three}
Toon Calders and Sicco Verwer.
\newblock Three naive bayes approaches for discrimination-free classification.
\newblock \emph{Data Mining and Knowledge Discovery (DMKD)}, 2010.

\bibitem[Calders and {\v{Z}}liobait{\.e}(2013)]{calders2013unbiased}
Toon Calders and Indr{\.e} {\v{Z}}liobait{\.e}.
\newblock Why unbiased computational processes can lead to discriminative
  decision procedures.
\newblock In \emph{Discrimination and privacy in the information society},
  pages 43--57. Springer, 2013.

\bibitem[Calders et~al.(2009)Calders, Kamiran, and
  Pechenizkiy]{calders2009building}
Toon Calders, Faisal Kamiran, and Mykola Pechenizkiy.
\newblock Building classifiers with independency constraints.
\newblock In \emph{International Conference on Data Mining Workshops (IDCMW)},
  2009.

\bibitem[Celis et~al.(2021{\natexlab{a}})Celis, Huang, Keswani, and
  Vishnoi]{celis2021fair}
L~Elisa Celis, Lingxiao Huang, Vijay Keswani, and Nisheeth~K Vishnoi.
\newblock Fair classification with noisy protected attributes: A framework with
  provable guarantees.
\newblock In \emph{International Conference on Machine Learing (ICML)},
  2021{\natexlab{a}}.

\bibitem[Celis et~al.(2021{\natexlab{b}})Celis, Mehrotra, and
  Vishnoi]{celis2021fair2}
L~Elisa Celis, Anay Mehrotra, and Nisheeth~K Vishnoi.
\newblock Fair classification with adversarial perturbations.
\newblock \emph{Conference on Neural Information Processing Systems (NeurIPS)},
  2021{\natexlab{b}}.

\bibitem[Cesa-Bianchi et~al.(1999)Cesa-Bianchi, Dichterman, Fischer, Shamir,
  and Simon]{cesa1999sample}
Nicolo Cesa-Bianchi, Eli Dichterman, Paul Fischer, Eli Shamir, and Hans~Ulrich
  Simon.
\newblock Sample-efficient strategies for learning in the presence of noise.
\newblock \emph{Journal of the ACM (JACM)}, 1999.

\bibitem[Chang et~al.(2020)Chang, Nguyen, Murakonda, Kazemi, and
  Shokri]{chang2020adversarial}
Hongyan Chang, Ta~Duy Nguyen, Sasi~Kumar Murakonda, Ehsan Kazemi, and Reza
  Shokri.
\newblock On adversarial bias and the robustness of fair machine learning.
\newblock \emph{arXiv preprint arXiv:2006.08669}, 2020.

\bibitem[Charikar et~al.(2017)Charikar, Steinhardt, and
  Valiant]{charikar2017learning}
Moses Charikar, Jacob Steinhardt, and Gregory Valiant.
\newblock Learning from untrusted data.
\newblock In \emph{Symposium on Theory of Computing (STOC)}, 2017.

\bibitem[Chen et~al.(2017)Chen, Liu, Li, Lu, and Song]{chen2017targeted}
Xinyun Chen, Chang Liu, Bo~Li, Kimberly Lu, and Dawn Song.
\newblock Targeted backdoor attacks on deep learning systems using data
  poisoning.
\newblock \emph{arXiv preprint arXiv:1712.05526}, 2017.

\bibitem[De-Arteaga et~al.(2018)De-Arteaga, Dubrawski, and
  Chouldechova]{de2018learning}
Maria De-Arteaga, Artur Dubrawski, and Alexandra Chouldechova.
\newblock Learning under selective labels in the presence of expert
  consistency.
\newblock In \emph{Fairness, Accountability, and Transparency in Machine
  Learning (FAT/ML)}, 2018.

\bibitem[Diakonikolas et~al.(2019{\natexlab{a}})Diakonikolas, Kamath, Kane, Li,
  Moitra, and Stewart]{diakonikolas2019robust}
Ilias Diakonikolas, Gautam Kamath, Daniel Kane, Jerry Li, Ankur Moitra, and
  Alistair Stewart.
\newblock Robust estimators in high-dimensions without the computational
  intractability.
\newblock \emph{SIAM Journal on Computing}, 2019{\natexlab{a}}.

\bibitem[Diakonikolas et~al.(2019{\natexlab{b}})Diakonikolas, Kamath, Kane, Li,
  Steinhardt, and Stewart]{diakonikolas2019sever}
Ilias Diakonikolas, Gautam Kamath, Daniel Kane, Jerry Li, Jacob Steinhardt, and
  Alistair Stewart.
\newblock Sever: A robust meta-algorithm for stochastic optimization.
\newblock In \emph{International Conference on Machine Learing (ICML)},
  2019{\natexlab{b}}.

\bibitem[Fogliato et~al.(2020)Fogliato, G'Sell, and
  Chouldechova]{fogliato2020fairness}
Riccardo Fogliato, Max G'Sell, and Alexandra Chouldechova.
\newblock Fairness evaluation in presence of biased noisy labels.
\newblock In \emph{Conference on Uncertainty in Artificial Intelligence
  (AISTATS)}, 2020.

\bibitem[Hardt et~al.(2016)Hardt, Price, and Srebro]{hardt2016equality}
Moritz Hardt, Eric Price, and Nati Srebro.
\newblock Equality of opportunity in supervised learning.
\newblock In \emph{Conference on Neural Information Processing Systems
  (NeurIPS)}, 2016.

\bibitem[Hashimoto et~al.(2018)Hashimoto, Srivastava, Namkoong, and
  Liang]{hashimoto2018fairness}
Tatsunori~B Hashimoto, Megha Srivastava, Hongseok Namkoong, and Percy Liang.
\newblock Fairness without demographics in repeated loss minimization.
\newblock In \emph{International Conference on Machine Learing (ICML)}, 2018.

\bibitem[Hendrycks et~al.(2018)Hendrycks, Mazeika, Wilson, and
  Gimpel]{hendrycks2018using}
Dan Hendrycks, Mantas Mazeika, Duncan Wilson, and Kevin Gimpel.
\newblock Using trusted data to train deep networks on labels corrupted by
  severe noise.
\newblock \emph{Conference on Neural Information Processing Systems (NeurIPS)},
  2018.

\bibitem[Hoeffding(1963)]{hoeffding1963probability}
Wassily Hoeffding.
\newblock Probability inequalities for sums of bounded random variables.
\newblock \emph{Journal of the American Statistical Association}, 58, 1963.

\bibitem[Ignatiev et~al.(2020)Ignatiev, Cooper, Siala, Hebrard, and
  Marques-Silva]{ignatiev2020towards}
Alexey Ignatiev, Martin~C Cooper, Mohamed Siala, Emmanuel Hebrard, and Joao
  Marques-Silva.
\newblock Towards formal fairness in machine learning.
\newblock In \emph{International Conference on Principles and Practice of
  Constraint Programming (CP)}, 2020.

\bibitem[Jiang and Nachum(2020)]{jiang2020identifying}
Heinrich Jiang and Ofir Nachum.
\newblock Identifying and correcting label bias in machine learning.
\newblock In \emph{Conference on Uncertainty in Artificial Intelligence
  (AISTATS)}, 2020.

\bibitem[Jo et~al.(2022)Jo, Sohn, and Lee]{jo2022breaking}
Changhun Jo, Jy-yong Sohn, and Kangwook Lee.
\newblock Breaking fair binary classification with optimal flipping attacks.
\newblock \emph{arXiv preprint arXiv:2204.05472}, 2022.

\bibitem[Kallus and Zhou(2018)]{kallus2018residual}
Nathan Kallus and Angela Zhou.
\newblock Residual unfairness in fair machine learning from prejudiced data.
\newblock In \emph{International Conference on Machine Learing (ICML)}, 2018.

\bibitem[Kallus et~al.(2020)Kallus, Mao, and Zhou]{kallus2019assessing}
Nathan Kallus, Xiaojie Mao, and Angela Zhou.
\newblock Assessing algorithmic fairness with unobserved protected class using
  data combination.
\newblock In \emph{Conference on Fairness, Accountability and Transparency
  (FAccT)}, 2020.

\bibitem[Kearns and Li(1993)]{kearns1993learning}
Michael Kearns and Ming Li.
\newblock Learning in the presence of malicious errors.
\newblock \emph{SIAM Journal on Computing (SICOMP)}, 1993.

\bibitem[Kleinberg et~al.(2017)Kleinberg, Mullainathan, and
  Raghavan]{kleinberg2016inherent}
Jon Kleinberg, Sendhil Mullainathan, and Manish Raghavan.
\newblock Inherent trade-offs in the fair determination of risk scores.
\newblock \emph{Innovations in Theoretical Computer Science Conference (ITCS)},
  2017.

\bibitem[Konstantinov and Lampert(2019)]{konstantinov2019robust}
Nikola Konstantinov and Christoph Lampert.
\newblock Robust learning from untrusted sources.
\newblock In \emph{International Conference on Machine Learing (ICML)}, 2019.

\bibitem[Konstantinov and Lampert(2021)]{konstantinov2021impossibility}
Nikola Konstantinov and Christoph~H. Lampert.
\newblock On the impossibility of fairness-aware learning from corrupted data.
\newblock In \emph{Algorithmic Fairness through the Lens of Causality and
  Robustness workshop at NeurIPS}, 2021.

\bibitem[Lahoti et~al.(2020)Lahoti, Beutel, Chen, Lee, Prost, Thain, Wang, and
  Chi]{lahoti2020fairness}
Preethi Lahoti, Alex Beutel, Jilin Chen, Kang Lee, Flavien Prost, Nithum Thain,
  Xuezhi Wang, and Ed~H Chi.
\newblock Fairness without demographics through adversarially reweighted
  learning.
\newblock In \emph{Conference on Neural Information Processing Systems
  (NeurIPS)}, 2020.

\bibitem[Lamy et~al.(2019)Lamy, Zhong, Menon, and Verma]{lamy2019noise}
Alex Lamy, Ziyuan Zhong, Aditya~K Menon, and Nakul Verma.
\newblock Noise-tolerant fair classification.
\newblock In \emph{Conference on Neural Information Processing Systems
  (NeurIPS)}, 2019.

\bibitem[Li et~al.(2021)Li, Beirami, Sanjabi, and Smith]{li2020tilted}
Tian Li, Ahmad Beirami, Maziar Sanjabi, and Virginia Smith.
\newblock Tilted empirical risk minimization.
\newblock In \emph{International Conference on Learning Representations
  (ICLR)}, 2021.

\bibitem[Mehrabi et~al.(2021)Mehrabi, Naveed, Morstatter, and
  Galstyan]{mehrabi2020exacerbating}
Ninareh Mehrabi, Muhammad Naveed, Fred Morstatter, and Aram Galstyan.
\newblock Exacerbating algorithmic bias through fairness attacks.
\newblock \emph{Conference on Artificial Intelligence (AAAI)}, 2021.

\bibitem[Mehrabi et~al.(2022)Mehrabi, Morstatter, Saxena, Lerman, and
  Galstyan]{mehrabi2019survey}
Ninareh Mehrabi, Fred Morstatter, Nripsuta Saxena, Kristina Lerman, and Aram
  Galstyan.
\newblock A survey on bias and fairness in machine learning.
\newblock \emph{ACM Computing Surveys}, 2022.

\bibitem[Mehrotra and Celis(2021)]{mehrotra2020mitigating}
Anay Mehrotra and L~Elisa Celis.
\newblock Mitigating bias in set selection with noisy protected attributes.
\newblock In \emph{Conference on Fairness, Accountability and Transparency
  (FAccT)}, 2021.

\bibitem[Menon and Williamson(2018{\natexlab{a}})]{menon2018cost}
Aditya~Krishna Menon and Robert~C Williamson.
\newblock The cost of fairness in binary classification.
\newblock In \emph{Conference on Fairness, Accountability and Transparency
  (FAccT)}, 2018{\natexlab{a}}.

\bibitem[Menon and Williamson(2018{\natexlab{b}})]{pmlr-v81-menon18a}
Aditya~Krishna Menon and Robert~C Williamson.
\newblock The cost of fairness in binary classification.
\newblock In \emph{Fairness, Accountability, and Transparency in Machine
  Learning (FAT/ML)}, 2018{\natexlab{b}}.

\bibitem[Mozannar et~al.(2020)Mozannar, Ohannessian, and
  Srebro]{mozannar2020fair}
Hussein Mozannar, Mesrob~I Ohannessian, and Nathan Srebro.
\newblock Fair learning with private demographic data.
\newblock In \emph{International Conference on Machine Learing (ICML)}, 2020.

\bibitem[Roh et~al.(2020)Roh, Lee, Whang, and Suh]{roh2020fr}
Yuji Roh, Kangwook Lee, Steven~Euijong Whang, and Changho Suh.
\newblock {FR-Train}: A mutual information-based approach to fair and robust
  training.
\newblock In \emph{International Conference on Machine Learing (ICML)}, 2020.

\bibitem[Shalev-Shwartz and Ben-David(2014)]{shalev2014understanding}
Shai Shalev-Shwartz and Shai Ben-David.
\newblock \emph{Understanding machine learning: From theory to algorithms}.
\newblock Cambridge University Press, 2014.

\bibitem[Solans et~al.(2020)Solans, Biggio, and Castillo]{solans2020poisoning}
David Solans, Battista Biggio, and Carlos Castillo.
\newblock Poisoning attacks on algorithmic fairness.
\newblock In \emph{European Conference on Machine Learning and Data Mining
  (ECML PKDD)}, 2020.

\bibitem[Steinhardt et~al.(2017)Steinhardt, Koh, and
  Liang]{steinhardt2017certified}
Jacob Steinhardt, Pang Wei~W Koh, and Percy~S Liang.
\newblock Certified defenses for data poisoning attacks.
\newblock In \emph{Conference on Neural Information Processing Systems
  (NeurIPS)}, 2017.

\bibitem[Tae et~al.(2019)Tae, Roh, Oh, Kim, and Whang]{tae2019data}
Ki~Hyun Tae, Yuji Roh, Young~Hun Oh, Hyunsu Kim, and Steven~Euijong Whang.
\newblock Data cleaning for accurate, fair, and robust models: A big data-{AI}
  integration approach.
\newblock In \emph{International Workshop on Data Management for End-to-End
  Machine Learning (DEEM)}, 2019.

\bibitem[Taskesen et~al.(2020)Taskesen, Nguyen, Kuhn, and
  Blanchet]{taskesen2020distributionally}
Bahar Taskesen, Viet~Anh Nguyen, Daniel Kuhn, and Jose Blanchet.
\newblock A distributionally robust approach to fair classification.
\newblock \emph{arXiv preprint arXiv:2007.09530}, 2020.

\bibitem[Valiant(1985)]{valiant1985learning}
Leslie~G Valiant.
\newblock Learning disjunction of conjunctions.
\newblock In \emph{International Joint Conference on Artificial Intelligence
  (IJCAI)}, 1985.

\bibitem[Vapnik(2013)]{vapnik2013nature}
Vladimir Vapnik.
\newblock \emph{The nature of statistical learning theory}.
\newblock Springer, 2013.

\bibitem[Wang et~al.(2021)Wang, Liu, and Levy]{wang2020fair}
Jialu Wang, Yang Liu, and Caleb Levy.
\newblock Fair classification with group-dependent label noise.
\newblock \emph{Conference on Fairness, Accountability and Transparency
  (FAccT)}, 2021.

\bibitem[Wang et~al.(2020)Wang, Guo, Narasimhan, Cotter, Gupta, and
  Jordan]{wang2020robust}
Serena Wang, Wenshuo Guo, Harikrishna Narasimhan, Andrew Cotter, Maya Gupta,
  and Michael~I Jordan.
\newblock Robust optimization for fairness with noisy protected groups.
\newblock In \emph{Conference on Neural Information Processing Systems
  (NeurIPS)}, 2020.

\bibitem[Williamson and Menon(2019)]{williamson2019fairness}
Robert Williamson and Aditya Menon.
\newblock Fairness risk measures.
\newblock In \emph{International Conference on Machine Learing (ICML)}, 2019.

\bibitem[Woodworth et~al.(2017)Woodworth, Gunasekar, Ohannessian, and
  Srebro]{woodworth2017learning}
Blake Woodworth, Suriya Gunasekar, Mesrob~I Ohannessian, and Nathan Srebro.
\newblock Learning non-discriminatory predictors.
\newblock In \emph{Workshop on Computational Learning Theory (COLT)}, 2017.

\end{thebibliography}

\onecolumn
\appendix

\clearpage

\centerline{\Huge Supplementary Material}
\bigskip

The supplementary material is structured as follows.
\begin{itemize}
\item \textbf{Appendix \ref{sec:appendix-lower-bounds-proofs}} contains the proofs of all lower bounds results. Section \ref{sec:pareto-lower-bounds-proofs} focuses on the Pareto lower bounds. Section \ref{sec:good-accuracy-lower-bounds-proofs} contains the proofs for the lower bounds on fairness, given that accuracy is kept optimal. 
\item \textbf{Appendix \ref{sec:appendix-upper-bounds-proofs}} contains the complete proofs of our upper bound results. In particular, Section \ref{sec:appendix-notation-and-tools} explains the notation and introduces the classic concentration tools that we will use. In Section \ref{sec:concentration-lemmas-proofs} a number of concentration results under corrupted data for the demographic parity and equal opportunity fairness notions are shown. Finally, Section \ref{sec:upper-bounds-themselves-proofs} gives the formal proofs of all upper bound results, building on the concentration inequalities from the previous section.
\end{itemize}

\section{Lower bounds proofs}
\label{sec:appendix-lower-bounds-proofs}

\addtocounter{theorem}{-1}

In the proofs of our hardness results we use a technique from \citep{kearns1993learning} called the \textit{method of induced distributions}. The idea is to construct two distributions that are sufficiently different, so that different classifiers perform well on each, yet can be made indistinguishable after the modifications of the adversary. Then no fixed learner with access only to the corrupted data can be ``correct'' with high probability on both distributions and so any learner will incur excessively high loss and/or exhibit excessively high unfairness on at least one of the two distributions, regardless of the amount of available data.

The proofs of the four results are structured in a similar way, but use different constructions of the underlying learning problem, tailored to the fairness measure and the type of bound we want to show. 

\subsection{Pareto lower bounds proofs}
\label{sec:pareto-lower-bounds-proofs}

\begin{theorem}
\label{thm:lower-bound-demog-par-with-p-app}
Let $0 \leq \alpha < 0.5, 0 < P_0 \leq 0.5$. For any input set $\mathcal{X}$ with at least four distinct points, there exists a finite hypothesis space $\mathcal{H}$, such that for any learning algorithm $\mathcal{L}:\cup_{n\in\mathbb{N}}(\prodspace)^n \to \mathcal{H}$, there exists a distribution $\mathbb{P}$ \textit{for which $\mathbb{P}(A = 0) = P_0$}, a malicious adversary $\mathcal{A}$ of power $\alpha$ and a hypothesis $h^* \in \mathcal{H}$, such that with probability at least $0.5$ 
\begin{equation*}
L(\mathcal{L}(S^p), \mathbb{P}) - L(h^*, \mathbb{P}) \geq \min\left\{\frac{\alpha}{1-\alpha}, 2P_0(1-P_0)\right\}
\end{equation*}
 and 
\begin{equation*}
\mathcal{D}^{par}(\mathcal{L}(S^p), \mathbb{P}) - \mathcal{D}^{par}(h^*, \mathbb{P}) \geq \min\left\{\frac{\alpha}{2P_0(1-P_0)(1-\alpha)}, 1\right\} \geq \min\left\{\frac{\alpha}{2P_0}, 1\right\}.
\end{equation*}
\end{theorem}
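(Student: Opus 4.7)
The plan is to execute in full the induced-distributions construction sketched in the main text, treating the two regimes of $\alpha$ separately and being careful with the probability accounting for (possibly randomized) learners.

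First, I would handle the main regime $\eta := \tfrac{\alpha}{1-\alpha} \le 2P_0(1-P_0)$. Fix four distinct inputs $x_1,x_2,x_3,x_4 \in \mathcal{X}$ and define, for $i\in\{0,1\}$, the distribution $\mathbb{P}_i$ placing mass $1-P_0-\eta/2$ on $(x_1,1,1)$, mass $P_0-\eta/2$ on $(x_2,0,0)$, mass $\eta/2$ on $(x_3,i,\lnot i)$, and mass $\eta/2$ on $(x_4,\lnot i,i)$. I would verify that each $\mathbb{P}_i$ is a valid distribution (here the assumption $\eta \le 2P_0(1-P_0)$ makes both $P_0-\eta/2$ and $1-P_0-\eta/2$ nonnegative), and that $\mathbb{P}_i(A=0)=P_0$. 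Next I would take $\mathcal{H}=\{h_0,h_1\}$ with $h_j(x_1)=1,h_j(x_2)=0$ and $h_j(x_3)=\lnot j,h_j(x_4)=j$, and compute directly from the definitions of $\mathcal{R}$ and $\mathcal{D}^{par}$ that
\begin{equation*}
\mathcal{R}(h_{\lnot i},\mathbb{P}_i)-\mathcal{R}(h_i,\mathbb{P}_i)=\eta,\qquad \mathcal{D}^{par}(h_{\lnot i},\mathbb{P}_i)-\mathcal{D}^{par}(h_i,\mathbb{P}_i)=\frac{\eta}{2P_0(1-P_0)}.
\end{equation*}
Taking $h^\star=h_i$ gives the claimed gaps, since $\eta=\frac{\alpha}{1-\alpha}$ and $\frac{\eta}{2P_0(1-P_0)}=\frac{\alpha}{2P_0(1-P_0)(1-\alpha)}\ge\frac{\alpha}{2P_0}$.

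Second, I would specify the randomized adversary $\mathcal{A}_i$ that, given clean data from $\mathbb{P}_i$, replaces each marked point independently by $(x_3,\lnot i,i)$ with probability $1/2$ and by $(x_4,i,\lnot i)$ otherwise. A per-coordinate calculation shows the induced law of a single sample in $S^p$ is identical under $(\mathbb{P}_0,\mathcal{A}_0)$ and $(\mathbb{P}_1,\mathcal{A}_1)$: in each case the points $(x_3,0,1),(x_3,1,0),(x_4,0,1),(x_4,1,0)$ each carry mass $\alpha/2$ (matching the clean $\eta/2\cdot(1-\alpha)=\alpha/2$ on the original pattern plus the adversarial $\alpha/2$ on the flipped one), and the anchor masses at $(x_1,1,1),(x_2,0,0)$ are unchanged. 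By independence across indices, the two full product laws $\mathbb{P}^{\mathcal{A}_0}_0$ and $\mathbb{P}^{\mathcal{A}_1}_1$ on $S^p$ coincide.

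Third, I would close the argument by the standard indistinguishability trick. Fix any learner $\mathcal{L}$, possibly randomized, and let $q=\Pr[\mathcal{L}(S^p)=h_0]$ computed under the common corrupted law (jointly with $\mathcal{L}$'s own coin flips). Then under $(\mathbb{P}_0,\mathcal{A}_0)$ the learner outputs $h_1$ with probability $1-q$ and under $(\mathbb{P}_1,\mathcal{A}_1)$ it outputs $h_0$ with probability $q$; one of these is $\ge 1/2$, so on that distribution the excess-loss and excess-fairness gaps computed above are attained with probability at least $1/2$, as required.

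Finally, I would dispose of the remaining regime $\eta>2P_0(1-P_0)$ by reusing the same construction but with the adversary only corrupting a subset of the marked points. Concretely, set $\eta':=2P_0(1-P_0)$ and let $\mathcal{A}_i$ apply the flipping rule above to each marked point with probability $\eta'/\eta\in(0,1]$, leaving it untouched otherwise. The two induced laws again coincide and the excess gaps become $\eta'=2P_0(1-P_0)$ for accuracy and $\eta'/(2P_0(1-P_0))=1$ for fairness, matching the $\min\{\cdot,\cdot\}$ terms in the statement. The main obstacle is the probability-accounting step with a randomized learner in the third paragraph; I expect it to be the only place requiring care, since the rest reduces to direct bookkeeping of atomic masses.
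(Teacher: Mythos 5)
Your construction matches the paper's proof of this theorem (Theorem~\ref{thm:lower-bound-demog-par-with-p-app}) essentially line for line in Case~1: the same two distributions, the same two-element hypothesis space, the same randomized flipping adversary, and the same ``one of $q,1-q$ is $\ge 1/2$'' step at the end. Your explicit treatment of randomized learners is slightly more general than what the paper needs, but harmless.

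However, there is a concrete arithmetic slip in your Case~2 (the regime $\eta>2P_0(1-P_0)$). You keep the marking rate at $\alpha$, switch the distribution parameter to $\eta':=2P_0(1-P_0)$, and have the adversary act on a marked point with probability $p=\eta'/\eta$. This does \emph{not} make the induced laws coincide. To see this, under $(\mathbb{P}_i,\mathcal{A}_i)$ with atom masses $\eta'/2$ and action probability $p$, one computes
\begin{equation*}
\mathbb{P}'_i(x_3,i,\lnot i)=\tfrac{\eta'}{2}\bigl(1-\alpha p\bigr),\qquad \mathbb{P}'_i(x_3,\lnot i,i)=\tfrac{\alpha p}{2},
\end{equation*}
and equality of these two quantities forces $\alpha p=\eta'/(1+\eta')=:\alpha_1$, i.e.\ $p=\alpha_1/\alpha=\tfrac{\eta'(1+\eta)}{\eta(1+\eta')}$. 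This agrees with your $\eta'/\eta$ only at the boundary $\eta'=\eta$. The intended (and correct, as in the paper) reduction is: choose $p$ so that the \emph{effective} corruption rate $\alpha p$ equals $\alpha_1$, the unique value with $\tfrac{\alpha_1}{1-\alpha_1}=\eta'$, and then invoke Case~1 verbatim with $(\alpha_1,\eta')$. With your $p=\eta'/\eta$ the effective rate is $\alpha\eta'/\eta=(1-\alpha)\eta'<\alpha_1$, so $\mathbb{P}'_0\neq\mathbb{P}'_1$ and the indistinguishability step fails. Once $p$ is corrected the rest of your Case~2 bookkeeping (excess loss $\eta'$, excess parity gap $\eta'/(2P_0(1-P_0))=1$) is exactly right.
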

\begin{proof}
Let $\eta = \frac{\alpha}{1-\alpha}$, so that $\alpha = \frac{\eta}{1+\eta}$.

\paragraph{Case 1} Assume that $\eta = \frac{\alpha}{1-\alpha} \leq 2P_0(1-P_0)$.
Take four distinct points $\{x_1, x_2, x_3, x_4\}\in\mathcal{X}$. We consider two distributions $\mathbb{P}_0$ and $\mathbb{P}_1$, where each $\mathbb{P}_i$ is defined as
\begin{equation*}
  \mathbb{P}_{i}(x,a,y) =
    \begin{cases}
      1 - P_0 - \eta/2 & \text{if $x = x_1, a = 1, y = 1$}\\
      P_0 - \eta/2 & \text{if $x = x_2, a = 0, y = 0$}\\
      \eta/2  & \text{if $x = x_3, a = i, y = \lnot i$}\\
      \eta/2  & \text{if $x = x_4, a = \lnot i, y = i$}\\
      0 & \text{otherwise}
    \end{cases}       
\end{equation*}
Note that these are valid distributions, since $\eta \leq 2P_0(1-P_0) \leq 2P_0 \leq 2(1-P_0)$ by assumption and also that $P_0 = \mathbb{P}_i(A = 0)$ for both $i\in\{0,1\}$.  Consider the hypothesis space $\mathcal{H} = \{h_0, h_1\}$, with $$h_0(x_1) = 1 \quad h_0(x_2) = 0 \quad h_0(x_3) = 1 \quad h_0(x_4) = 0$$ and
$$h_1(x_1) = 1 \quad h_1(x_2) = 0 \quad h_1(x_3) = 0 \quad h_1(x_4) = 1.$$
\noindent Note that $L(h_i, \mathbb{P}_i) = 0$ for both $i = 0,1$. Moreover, 
\begin{align*}
\mathcal{D}^{par}(h_0, \mathbb{P}_0) & = \left|\mathbb{P}_{(X,A,Y)\sim \mathbb{P}_0}(h_0(X) = 1| A = 0) - \mathbb{P}_{(X,A,Y)\sim \mathbb{P}_0}(h_0(X) = 1| A = 1)\right| \\
& = \left|\frac{\eta/2}{P_0 -\eta/2 + \eta/2} - \frac{1 - P_0 - \eta/2}{1 - P_0 - \eta/2 + \eta/2}\right| \\
& = \left|\frac{\eta}{2P_0} - \frac{2 - 2P_0 - \eta}{2(1-P_0)}\right| \\
& = \left|\frac{\eta}{2P_0(1-P_0)} - 1\right|\\
& = 1 - \frac{\eta}{2P_0(1-P_0)},
\end{align*}
since $\eta \leq 2P_0(1-P_0)$ by assumption. Furthermore, 
\begin{align*}
\mathcal{D}^{par}(h_1, \mathbb{P}_0) & = \left|\mathbb{P}_{(X,A,Y)\sim \mathbb{P}_0}(h_1(X) = 1| A = 0) - \mathbb{P}_{(X,A,Y)\sim \mathbb{P}_0}(h_1(X) = 1| A = 1)\right| \\
& = \left|0 - 1\right| \\
& = 1
\end{align*}
Therefore, $\mathcal{D}^{par}(h_1, \mathbb{P}_0) - \mathcal{D}^{par}(h_0, \mathbb{P}_0) =  \frac{\eta}{2P_0(1-P_0)}$. Similarly,
\begin{align*}
\mathcal{D}^{par}(h_1, \mathbb{P}_1) & = \left|\mathbb{P}_{(X,A,Y)\sim \mathbb{P}_1}(h_1(X) = 1| A = 0) - \mathbb{P}_{(X,A,Y)\sim \mathbb{P}_1}(h_1(X) = 1| A = 1)\right| \\
& = \left|\frac{\eta/2}{P_0 -\eta/2 + \eta/2} - \frac{1 - P_0 - \eta/2}{1 - P_0 - \eta/2 + \eta/2}\right| \\
& = 1 - \frac{\eta}{2P_0(1-P_0)}
\end{align*}
and 
\begin{align*}
\mathcal{D}^{par}(h_0, \mathbb{P}_1) & = \left|\mathbb{P}_{(X,A,Y)\sim \mathbb{P}_1}(h_0(X) = 1| A = 0) - \mathbb{P}_{(X,A,Y)\sim \mathbb{P}_1}(h_0(X) = 1| A = 1)\right| \\
& = \left|0 - 1\right| \\
& = 1,
\end{align*}
so that $\mathcal{D}^{par}(h_0, \mathbb{P}_1) - \mathcal{D}^{par}(h_1, \mathbb{P}_1) = \frac{\eta}{2P_0(1-P_0)}$.

Consider a (randomized) malicious adversary $\mathcal{A}_{i}$ of power $\alpha$, that given a clean distribution $\mathbb{P}_{i}$, changes every marked point to $(x_3,\lnot i, i)$ with probability $0.5$ and to $(x_4,i, \lnot i)$ otherwise. Under a distribution $\mathbb{P}_{i}$ and an adversary $\mathcal{A}_{i}$, the probability of seeing a point $(x_3, i, \lnot i)$ is $\frac{\eta}{2} (1-\alpha) = \frac{\eta}{2} \frac{1}{1 + \eta} = \alpha/2$, which is equal to the probability of seeing a point $(x_3, \lnot i, i)$. Therefore, denoting the probability distribution of the corrupted dataset, under a clean distribution $\mathbb{P}_{i}$ and an adversary $\mathcal{A}_{i}$, by $\mathbb{P}'_{i}$ (as a shorthand for $\mathbb{P}_i^{\mathcal{A}_i}$), we have
\begin{equation*}
  \mathbb{P}'_{i}(x,a,y) =
    \begin{cases}
      (1-\alpha)(1 - P_0 - \eta/2) & \text{if $x = x_1, a = 1, y = 1$}\\
      (1-\alpha)(P_0 - \eta/2) & \text{if $x = x_2, a = 0, y = 0$}\\
      \alpha/2  & \text{if $x = x_3, a = i, y = \lnot i$}\\
      \alpha/2  & \text{if $x = x_3, a = \lnot i, y = i$}\\
      \alpha/2  & \text{if $x = x_4, a = \lnot i, y = i$}\\
      \alpha/2  & \text{if $x = x_4, a = i, y = \lnot i$}\\
      0 & \text{otherwise}
    \end{cases}      
\end{equation*}
In particular, $\mathbb{P}'_{0} = \mathbb{P}'_{1}$, so the two initial distributions $\mathbb{P}_0$ and $\mathbb{P}_1$ become indistinguishable under the adversarial manipulation.

Fix an arbitrary learner $\mathcal{L}: \cup_{n\in \mathbb{N}} (\prodspace)^n \to \{h_0, h_1\}$. Note that, if the clean distribution is $\mathbb{P}_0$, the events (in the probability space defined by the sampling of the poisoned train data)
\begin{align*}
\{L(\mathcal{L}(S^p), \mathbb{P}_0) - L(h_0, \mathbb{P}_0) \geq \eta \} & = \{\mathcal{L}(S^p) = h_1\} \\ & = \left\{\mathcal{D}^{par}(\mathcal{L}(S^p), \mathbb{P}_0) - \mathcal{D}^{par}(h_0, \mathbb{P}_0) \geq \frac{\eta}{2P_0(1-P_0)}\right\}
\end{align*}
are all the same. Similarly, if the clean distribution is $\mathbb{P}_1$
\begin{align*}
\{L(\mathcal{L}(S^p), \mathbb{P}_1) - L(h_1, \mathbb{P}_1) \geq \eta\} & = \{\mathcal{L}(S^p) = h_0\} \\ & = \left\{\mathcal{D}^{par}(\mathcal{L}(S^p), \mathbb{P}_1) - \mathcal{D}^{par}(h_1, \mathbb{P}_1) \geq \frac{\eta}{2P_0(1-P_0)}\right\} .
\end{align*}
Therefore, depending on whether we choose $\mathbb{P}_0$ or $\mathbb{P}_1$ as a clean distribution, we have
\begin{align*}
\mathbb{P}_{S^p \sim \mathbb{P}'_0} & \left( \left(L(\mathcal{L}(S^p), \mathbb{P}_0) - L(h_0, \mathbb{P}_0) \geq \eta \right) \land \left(\mathcal{D}^{par}(\mathcal{L}(S^p), \mathbb{P}_0) - \mathcal{D}^{par}(h_0, \mathbb{P}_0) \geq \frac{\eta}{2P_0(1-P_0)} \right) \right) \\ & = \mathbb{P}_{S^p \sim \mathbb{P}'_0}\left(\mathcal{L}(S^p) = h_1\right)
\end{align*}
and
\begin{align*}
\mathbb{P}_{S^p \sim \mathbb{P}'_1} & \left( \left(L(\mathcal{L}(S^p), \mathbb{P}_1) - L(h_1, \mathbb{P}_1) \geq \eta \right) \land \left( \mathcal{D}^{par}(\mathcal{L}(S^p), \mathbb{P}_1) - \mathcal{D}^{par}(h_1, \mathbb{P}_1) \geq \frac{\eta}{2P_0(1-P_0)} \right) \right) \\ & = \mathbb{P}_{S^p \sim \mathbb{P}'_1}\left(\mathcal{L}(S^p) = h_0\right)
\end{align*}
Finally, note that $\mathbb{P}'_0 = \mathbb{P}'_1$, so that either $\mathbb{P}_{S^p \sim \mathbb{P}'_0}\left(\mathcal{L}(S^p) = h_1\right) \geq 1/2$ or $\mathbb{P}_{S^p \sim \mathbb{P}'_1}\left(\mathcal{L}(S^p) = h_0\right) \geq 1/2$. Therefore, for at least one of $i = 0,1$, both $$L(\mathcal{L}(S^p), \mathbb{P}_i) - L(h_i, \mathbb{P}_i) \geq \eta = \frac{\alpha}{1-\alpha}$$ and $$\mathcal{D}^{par}(\mathcal{L}(S^p), \mathbb{P}_i) - \mathcal{D}^{par}(h_i, \mathbb{P}_i) \geq \frac{\eta}{2P_0(1-P_0)} = \frac{\alpha}{2P_0(1-P_0)(1-\alpha)}$$ both hold with probability at least $1/2$ when the choice of distribution and adversary is $\mathbb{P}_i$ and $\mathcal{A}_i$ respectively. This concludes the proof in the first case.\\
\\
\paragraph{Case 2} Now suppose that $\eta = \frac{\alpha}{1-\alpha} > 2P_0(1-P_0)$. Let $\alpha_1\in (0,0.5)$ be such that $\frac{\alpha_1}{1-\alpha_1} = 2P_0(1-P_0)$. Note that since $f(x) = \frac{x}{1-x}$ is monotonically increasing in (0,1), $\alpha_1$ is unique and $\alpha_1 < \alpha$.\\
\\
Now repeat the same construction as in Case 1, but with $\eta_1 = \frac{\alpha_1}{1-\alpha_1} = 2P_0(1-P_0)$. For every marked point, the adversary does the same as in Case 1 with probability $\alpha_1/\alpha$ and does not change the point otherwise. Then the same argument as in Case 1 shows that for one $i\in\{0,1\}$, both $$L(\mathcal{L}(S^p), \mathbb{P}_i) - L(h_i, \mathbb{P}_i) \geq \eta_1 = \frac{\alpha_1}{1-\alpha_1} = 2P_0(1-P_0)$$ and $$\mathcal{D}^{par}(\mathcal{L}(S^p), \mathbb{P}_i) - \mathcal{D}^{par}(h_i, \mathbb{P}_i) \geq \frac{\eta_1}{2P_0(1-P_0)} = 1$$ both hold with probability at least $1/2$. This concludes the proof of Theorem \ref{thm:lower-bound-demog-par-with-p-app}.
\end{proof}

\begin{theorem}
\label{thm:lower-bound-equal-opp-with-p-app}
Let $0 \leq \alpha < 0.5, P_{10} \leq P_{11} < 1$ be such that $P_{10} + P_{11} < 1$. For any input set $\mathcal{X}$ with at least five distinct points, there exists a finite hypothesis space $\mathcal{H}$, such that for any learning algorithm $\mathcal{L}:\cup_{n\in\mathbb{N}}(\prodspace)^n \to \mathcal{H}$, there exists a distribution $\mathbb{P}$ \textit{for which $\mathbb{P}(A = a, Y = 1) = P_{1a}$ for $a\in\{0,1\}$}, a malicious adversary $\mathcal{A}$ of power $\alpha$ and a hypothesis $h^* \in \mathcal{H}$, such that with probability at least $0.5$
$$L(\mathcal{L}(S^p), \mathbb{P}) - L(h^*, \mathbb{P}) > \min\left\{\frac{\alpha}{1-\alpha}, 2P_{10}, 2(1-P_{10} - P_{11})\right\}$$ and $$\mathcal{D}^{opp}(\mathcal{L}(S^p), \mathbb{P}) - \mathcal{D}^{opp}(h^*, \mathbb{P}) \geq \min\left\{\frac{\alpha}{2(1-\alpha)P_{10}}, 1, \frac{1 - P_{10} - P_{11}}{P_{10}}\right\}.$$
\end{theorem}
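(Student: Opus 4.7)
The plan is to adapt the method of induced distributions from the proof of Theorem~\ref{thm:lower-bound-demog-par-with-p-app} to the equal opportunity setting. The key new difficulty is that $\mathcal{D}^{opp}$ conditions on $Y = 1$, so the ``swing'' atoms that distinguish $\mathbb{P}_0$ from $\mathbb{P}_1$ must preserve the joint marginals $\mathbb{P}(A = a, Y = 1) = P_{1a}$ for both $a \in \{0,1\}$, and they must affect the minority-group TPR rather than the overall base rate. To force \emph{both} the loss gap and the fairness gap to appear simultaneously on whichever distribution the learner is more likely to err on, the construction must also be symmetric in $i \in \{0,1\}$, which is why five atoms are required instead of the four used for demographic parity.

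Let $\eta = \alpha/(1-\alpha)$. In the main case $\eta \leq 2\min\{P_{10},\, 1 - P_{10} - P_{11}\}$, I would take five distinct points $x_1,\ldots,x_5$ and put in both $\mathbb{P}_0,\mathbb{P}_1$ a common base: $P_{11}$ at $(x_1, 1, 1)$, $P_{10} - \eta/2$ at $(x_2, 0, 1)$, and $1 - P_{10} - P_{11} - \eta/2$ at $(x_5, 0, 0)$. The swing mass is entirely within the minority group: $\mathbb{P}_i$ places $\eta/2$ on $(x_3, 0, 1-i)$ and $\eta/2$ on $(x_4, 0, i)$, so the swing atoms cancel in the $(A=0, Y=1)$ marginal and both distributions satisfy $\mathbb{P}_i(A = a, Y = 1) = P_{1a}$. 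Take $\mathcal{H} = \{h_0, h_1\}$ with both classifiers predicting $1,1,0$ on $x_1, x_2, x_5$, and with $h_i$ predicting the label that is correct under $\mathbb{P}_i$ on $x_3, x_4$. A direct computation gives $\mathcal{R}(h_i, \mathbb{P}_i) = 0$ and $\mathcal{D}^{opp}(h_i, \mathbb{P}_i) = 0$, whereas on $\mathbb{P}_i$ the wrong classifier $h_{1-i}$ mis-labels both swing atoms, producing $\mathcal{R}(h_{1-i}, \mathbb{P}_i) = \eta$ and, from the $Y=1$-conditional probabilities, $\mathcal{D}^{opp}(h_{1-i}, \mathbb{P}_i) = \eta/(2P_{10})$.

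Now define randomized adversaries $\mathcal{A}_i$ that on each marked index output $(x_3, 0, i)$ with probability $1/2$ and $(x_4, 0, 1-i)$ with probability $1/2$. Using the identity $(1-\alpha)\eta/2 = \alpha/2$, each of the four atoms $(x_3, 0, 0), (x_3, 0, 1), (x_4, 0, 0), (x_4, 0, 1)$ ends up with mass exactly $\alpha/2$ under both $\mathbb{P}_0^{\mathcal{A}_0}$ and $\mathbb{P}_1^{\mathcal{A}_1}$, and the remaining atoms agree by construction, so the two induced corrupted distributions coincide. Hence any learner $\mathcal{L}: \cup_n (\prodspace)^n \to \{h_0, h_1\}$ produces the same output distribution under both pairs, so at least one $i \in \{0,1\}$ satisfies $\Pr(\mathcal{L}(S^p) = h_{1-i}) \geq 1/2$ under $(\mathbb{P}_i, \mathcal{A}_i)$; choosing this $\mathbb{P}_i$ together with $h^* = h_i$ yields both claimed inequalities simultaneously with probability at least $1/2$.

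For the edge regimes $\eta > 2P_{10}$ or $\eta > 2(1 - P_{10} - P_{11})$, I would run the same construction but have the adversary use a weaker effective power $\alpha'$ with $\alpha'/(1-\alpha') = 2\min\{P_{10},\, 1 - P_{10} - P_{11}\}$, by flipping a biased coin on each marked index and leaving a fraction of them untouched; this reduces to the main case with the adjusted parameter and produces the three-way $\min$ in the stated bounds. The main technical obstacle is exactly this need for a symmetric five-atom construction: a natural asymmetric four-point analogue (moving mass between a single positive and a single negative atom only in the minority group, mirroring the demographic parity proof more literally) produces the correct loss gap on both distributions but the fairness gap on only one of them, and therefore fails to certify both inequalities simultaneously with probability $1/2$. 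The fifth atom is precisely what allows the adversary's mixture to cancel all four swing differences at once.
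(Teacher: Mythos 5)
Your proposal is correct and follows essentially the same route as the paper's proof: the method of induced distributions with a two-element hypothesis space, a randomized adversary that makes the two corrupted distributions coincide via the identity $(1-\alpha)\eta/2 = \alpha/2$, and a reduction of the edge regimes to the main case by having the adversary use a smaller effective power. The only (immaterial) difference is that the paper's swing atoms alternate both the protected attribute and the label across $\mathbb{P}_0,\mathbb{P}_1$, i.e.\ $(x_3, i, \lnot i)$ and $(x_4, \lnot i, i)$, whereas you keep all swing mass in group $A=0$ and alternate only the label; both choices preserve the marginals $\mathbb{P}(A=a, Y=1)=P_{1a}$ and yield the same loss gap $\eta$ and fairness gap $\eta/(2P_{10})$.
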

\begin{proof}
Let $\eta = \frac{\alpha}{1-\alpha}$, so that $\alpha = \frac{\eta}{1+\eta}$.

\paragraph{Case 1} Assume that $\eta = \frac{\alpha}{1-\alpha} \leq 2\min \{P_{10}, 1 - P_{10} - P_{11}\}$. Take five distinct points $\{x_1, x_2, x_3, x_4, x_5\}\in\mathcal{X}$. We consider two distributions $\mathbb{P}_0$ and $\mathbb{P}_1$, where each $\mathbb{P}_i$ is defined as
\begin{equation*}
  \mathbb{P}_{i}(x,a,y) =
    \begin{cases}
      P_{11} & \text{if $x = x_1, a = 1, y = 1$}\\
      P_{10} - \eta/2 & \text{if $x = x_2, a = 0, y = 1$}\\
      \eta/2  & \text{if $x = x_3, a = i, y = \lnot i$}\\
      \eta/2  & \text{if $x = x_4, a = \lnot i, y = i$}\\
      1 - P_{10} - P_{11} - \eta/2  & \text{if $x = x_5, a = 0, y = 0$}\\
      0 & \text{otherwise}
    \end{cases}       
\end{equation*}
Note that these are valid distributions, since $\eta \leq 2P_{10}, \eta \leq 2(1 - P_{10} - P_{11})$ by assumption, and that $P_{1a} = \mathbb{P}_i(A = a, Y = 1)$ for both $a\in\{0,1\}, i \in \{0,1\}$. Consider the hypothesis space $\mathcal{H} = \{h_0, h_1\}$, with $$h_0(x_1) = 1 \quad h_0(x_2) = 1 \quad h_0(x_3) = 1 \quad h_0(x_4) = 0 \quad h_0(x_5) = 0$$ and $$h_1(x_1) = 1 \quad h_1(x_2) = 1 \quad h_1(x_3) = 0 \quad h_1(x_4) = 1 \quad h_1(x_5) = 0$$
\noindent Note that $L(h_i, \mathbb{P}_i) = 0$ and $\mathcal{D}^{opp}(h_i, \mathbb{P}_i) = 0$  for both $i = 0,1$. Note also that $L(h_1, \mathbb{P}_0) = L(h_0, \mathbb{P}_1) = \eta$. Moreover, 
\begin{align*}
\mathcal{D}^{opp}(h_1, \mathbb{P}_0) & = \left|\mathbb{P}_{(X,A,Y)\sim \mathbb{P}_0}(h_1(X) = 1| A = 0, Y = 1) \right. \\
& - \left. \mathbb{P}_{(X,A,Y)\sim \mathbb{P}_0}(h_1(X) = 1| A = 1, Y = 1)\right| \\
& = \left|\frac{P_{10} - \eta/2}{P_{10} -\eta/2 + \eta/2} - 1\right| \\
& = \frac{\eta}{2P_{10}}
\end{align*}
and similarly $\mathcal{D}^{opp}(h_0, \mathbb{P}_1) = \frac{\eta}{2P_{10}}$.\\
\\
Consider a (randomized) malicious adversary $\mathcal{A}_{i}$ of power $\alpha$, that given a clean distribution $\mathbb{P}_{i}$, changes every marked point to $(x_3,\lnot i, i)$ with probability $0.5$ and to $(x_4,i, \lnot i)$ otherwise. Under a distribution $\mathbb{P}_{i}$ and an adversary $\mathcal{A}_{i}$, the probability of seeing a point $(x_3, i, \lnot i)$ is $\frac{\eta}{2} (1-\alpha) = \frac{\eta}{2} \frac{1}{1 + \eta} = \alpha/2$, which is equal to the probability of seeing a point $(x_3, \lnot i, i)$. Therefore, denoting the probability distribution of the corrupted dataset, under a clean distribution $\mathbb{P}_{i}$ and an adversary $\mathcal{A}_{i}$, by $\mathbb{P}'_{i}$, we have
\begin{equation*}
  \mathbb{P}'_{i}(x,a,y) =
    \begin{cases}
      (1-\alpha)P_{11} & \text{if $x = x_1, a = 1, y = 1$}\\
      (1-\alpha)(P_{10} - \eta/2) & \text{if $x = x_2, a = 0, y = 1$}\\
      \alpha/2  & \text{if $x = x_3, a = i, y = \lnot i$}\\
      \alpha/2  & \text{if $x = x_3, a = \lnot i, y = i$}\\
      \alpha/2  & \text{if $x = x_4, a = \lnot i, y = i$}\\
      \alpha/2  & \text{if $x = x_4, a = i, y = \lnot i$}\\
      (1-\alpha)(1 - P_{10} - P_{11} - \eta/2)  & \text{if $x = x_5, a = 0, y = 0$}\\
      0 & \text{otherwise}
    \end{cases}  
\end{equation*}
In particular, $\mathbb{P}'_{0} = \mathbb{P}'_{1}$, so the two initial distributions $\mathbb{P}_0$ and $\mathbb{P}_1$ become indistinguishable under the adversarial manipulation.\\
\\
Fix an arbitrary learner $\mathcal{L}: \cup_{n\in \mathbb{N}} (\prodspace)^n \to \{h_0, h_1\}$. Note that, if the clean distribution is $\mathbb{P}_0$, the events (in the probability space defined by the sampling of the poisoned train data)
\begin{align*}
\{L(\mathcal{L}(S^p), \mathbb{P}_0) - L(h_0, \mathbb{P}_0) \geq \eta \}  & = \{\mathcal{L}(S^p) = h_1\} \\ & = \left\{\mathcal{D}^{opp}(\mathcal{L}(S^p), \mathbb{P}_0) - \mathcal{D}^{opp}(h_0, \mathbb{P}_0) \geq \frac{\eta}{2P_{10}}\right\}
\end{align*}
are all the same. Similarly, if the clean distribution is $\mathbb{P}_1$
\begin{align*}
\{L(\mathcal{L}(S^p), \mathbb{P}_1) - L(h_1, \mathbb{P}_1) \geq \eta\} & = \{\mathcal{L}(S^p) = h_0\} \\ & = \left\{\mathcal{D}^{opp}(\mathcal{L}(S^p), \mathbb{P}_1) - \mathcal{D}^{opp}(h_1, \mathbb{P}_1) \geq \frac{\eta}{2P_{10}}\right\} .
\end{align*}
Therefore, depending on whether we choose $\mathbb{P}_0$ or $\mathbb{P}_1$ as a clean distribution, we have
\begin{align*}
\mathbb{P}_{S^p \sim \mathbb{P}'_0} & \left(L(\mathcal{L}(S^p), \mathbb{P}_0) - L(h_0, \mathbb{P}_0) \geq \eta \land \mathcal{D}^{opp}(\mathcal{L}(S^p), \mathbb{P}_0) - \mathcal{D}^{opp}(h_0, \mathbb{P}_0) \geq \frac{\eta}{2P_{10}} \right) \\ & = \mathbb{P}_{S^p \sim \mathbb{P}'_0}\left(\mathcal{L}(S^p) = h_1\right)
\end{align*}
and
\begin{align*}
\mathbb{P}_{S^p \sim \mathbb{P}'_1} & \left(L(\mathcal{L}(S^p), \mathbb{P}_1) - L(h_1, \mathbb{P}_1) \geq \eta \land \mathcal{D}^{opp}(\mathcal{L}(S^p), \mathbb{P}_1) - \mathcal{D}^{opp}(h_1, \mathbb{P}_1) \geq\frac{\eta}{2P_{10}} \right) \\ & = \mathbb{P}_{S^p \sim \mathbb{P}'_1}\left(\mathcal{L}(S^p) = h_0\right)
\end{align*}
Finally, note that $\mathbb{P}'_0 = \mathbb{P}'_1$, so that either $\mathbb{P}_{S^p \sim \mathbb{P}'_0}\left(\mathcal{L}(S^p) = h_1\right) \geq 1/2$ or $\mathbb{P}_{S^p \sim \mathbb{P}'_1}\left(\mathcal{L}(S^p) = h_0\right) \geq 1/2$. Therefore, for at least one of $i = 0,1$, both $$L(\mathcal{L}(S^p), \mathbb{P}_i) - L(h_i, \mathbb{P}_i) \geq \eta = \frac{\alpha}{1-\alpha}$$ and $$\mathcal{D}^{opp}(\mathcal{L}(S^p), \mathbb{P}_i) - \mathcal{D}^{opp}(h_i, \mathbb{P}_i) \geq \frac{\eta}{2P_{10}} = \frac{\alpha}{2P_{10}(1-\alpha)}$$ both hold with probability at least $1/2$. This concludes the proof of the first case.

\paragraph{Case 2} Now assume that $\frac{\alpha}{1-\alpha} > 2\min\left\{P_{10}, 1 - P_{10} - P_{11}\right\}$. We distinguish two cases:

\paragraph{Case 2.1} Suppose that $P_{10} \leq 1 - P_{10} - P_{11}$. We have that $\frac{\alpha}{1-\alpha} > 2P_{10}$. Then, denote by $\alpha_1$ the unique number between $(0,0.5)$, such that $\frac{\alpha_1}{1-\alpha_1} = 2P_{10} = 2\min\left\{P_{10}, 1 - P_{10} - P_{11}\right\}$, and note that $\alpha_1 < \alpha$. Then repeat the same construction as in Case 1, but with $\eta_1 = \frac{\alpha_1}{1-\alpha_1}$ and an adversary that with probability $\alpha_1/\alpha$ does the same as in Case 1 and leaves a marked point untouched otherwise.

Then the same argument as in Case 1 gives that for some $i\in\{0,1\}$, with probability at least $0.5$, both of the following hold
$$L(\mathcal{L}(S^p), \mathbb{P}_i) - L(h_i, \mathbb{P}_i) \geq \frac{\alpha_1}{1-\alpha_1} = 2P_{10}$$ and $$\mathcal{D}^{opp}(\mathcal{L}(S^p), \mathbb{P}_i) - \mathcal{D}^{opp}(h_i, \mathbb{P}_i) \geq \frac{\eta_1}{2P_{10}} = 1.$$

\paragraph{Case 2.2} In the case when $1 - P_{10} - P_{11} < P_{10}$ we have that $\frac{\alpha}{1-\alpha} > 2(1 - P_{10} - P_{11})$. Then, denote by $\alpha_2$ the unique number between $(0,0.5)$, such that $\frac{\alpha_2}{1-\alpha_2} = 2(1 - P_{10} - P_{11}) = 2\min\left\{P_{10}, 1 - P_{10} - P_{11}\right\}$, and note that $\alpha_2 < \alpha$. Then repeat the same construction as in Case 1, but with $\eta_2 = \frac{\alpha_2}{1-\alpha_2}$ and an adversary that with probability $\alpha_2/\alpha$ does the same as in Case 1 and leaves a marked point untouched otherwise.

Then the same argument as in Case 1 gives that for some $i\in\{0,1\}$, with probability at least $0.5$, both of the following hold
$$L(\mathcal{L}(S^p), \mathbb{P}_i) - L(h_i, \mathbb{P}_i) \geq \frac{\alpha_2}{1-\alpha_2} = 2(1 - P_{10} - P_{11})$$ and $$\mathcal{D}^{opp}(\mathcal{L}(S^p), \mathbb{P}_i) - \mathcal{D}^{opp}(h_i, \mathbb{P}_i) \geq \frac{\eta_2}{2P_{10}} = \frac{1 - P_{10} - P_{11}}{P_{10}}.$$
This concludes the proof of Theorem \ref{thm:lower-bound-equal-opp-with-p-app}.
\end{proof}

\subsection{Hurting fairness without affecting accuracy - proofs}
\label{sec:good-accuracy-lower-bounds-proofs}

\begin{theorem}
\label{thm:lower-bound-demog-par-with-p-good-loss-app}
Let $0 \leq \alpha < 0.5, 0 < P_0 \leq 0.5$. For any input set $\mathcal{X}$ with at least four distinct points, there exists a finite hypothesis space $\mathcal{H}$, such that for any learning algorithm $\mathcal{L}:\cup_{n\in\mathbb{N}}(\prodspace)^n \to \mathcal{H}$, there exists a distribution $\mathbb{P}$ \textit{for which $\mathbb{P}(A = 0) = P_0$}, a malicious adversary $\mathcal{A}$ of power $\alpha$ and a hypothesis $h^* \in \mathcal{H}$, such that with probability at least $0.5$
\begin{equation*}
L(\mathcal{L}(S^p), \mathbb{P}) = L(h^*, \mathbb{P}) = \min_{h\in\mathcal{H}}L(h, \mathbb{P})
\end{equation*}
 and 
\begin{equation*}
\mathcal{D}^{par}(\mathcal{L}(S^p), \mathbb{P}) - \mathcal{D}^{par}(h^*, \mathbb{P}) \geq \min\left\{\frac{\alpha}{2P_0(1-P_0)(1-\alpha)}, 1\right\} \geq \min\left\{\frac{\alpha}{2P_0}, 1\right\}.
\end{equation*}
\end{theorem}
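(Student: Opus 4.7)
The plan is to adapt the construction in the proof of Theorem \ref{thm:lower-bound-demog-par-with-p-app} so that the two candidate classifiers $h_0, h_1$ achieve \emph{identical} risk on both hard distributions, while still exhibiting a large fairness gap. Keeping the same four-point input set $\{x_1,x_2,x_3,x_4\}$ and the same hypothesis pair $\mathcal{H} = \{h_0, h_1\}$, I would define $\mathbb{P}_i$ by placing mass $1-P_0-\eta$ on $(x_1,1,1)$, mass $P_0-\eta$ on $(x_2,0,0)$, and mass $\eta/2$ on each of the four atoms $(x_3, i, 0), (x_3, i, 1), (x_4, \lnot i, 0), (x_4, \lnot i, 1)$. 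The symmetry across labels at $x_3$ and $x_4$ ensures, by direct calculation, that $\mathcal{R}(h_0, \mathbb{P}_i) = \mathcal{R}(h_1, \mathbb{P}_i) = \eta$, so both hypotheses are risk minimizers in $\mathcal{H}$; hence any output of the learner automatically attains $\min_{h\in\mathcal{H}} \mathcal{R}(h, \mathbb{P}_i)$.

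Next I would compute the demographic-parity deviations under $\mathbb{P}_i$, obtaining $\mathcal{D}^{par}(h_i, \mathbb{P}_i) = 1 - \eta/(P_0(1-P_0))$ and $\mathcal{D}^{par}(h_{\lnot i}, \mathbb{P}_i) = 1$, so choosing $h^* = h_i$ makes $h^*$ simultaneously risk-optimal and fairness-optimal, with a fairness gap to $h_{\lnot i}$ equal to $\eta/(P_0(1-P_0))$. The adversary $\mathcal{A}_i$, mirroring the Pareto proof, replaces each marked point uniformly at random by one of the four ``distinguishing'' atoms present in $\mathbb{P}_{\lnot i}$ but absent in $\mathbb{P}_i$, namely $(x_3, \lnot i, 0), (x_3, \lnot i, 1), (x_4, i, 0), (x_4, i, 1)$. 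Matching the induced mass on any such atom yields the single equation $\alpha/4 = (1-\alpha)\eta/2$, so with the choice $\eta = \alpha/(2(1-\alpha))$ the two induced distributions $\mathbb{P}_0^{\mathcal{A}_0}$ and $\mathbb{P}_1^{\mathcal{A}_1}$ coincide.

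The standard indistinguishability argument then finishes the proof: since the learner sees the same corrupted distribution regardless of $i$, pigeonhole forces $\mathbb{P}(\mathcal{L}(S^p) = h_{\lnot i}) \geq 1/2$ for some $i \in \{0,1\}$; under that $\mathbb{P}_i$, the loss equality $\mathcal{R}(\mathcal{L}(S^p), \mathbb{P}_i) = \mathcal{R}(h^*, \mathbb{P}_i) = \min_{h\in\mathcal{H}} \mathcal{R}(h, \mathbb{P}_i)$ holds deterministically because both classifiers are jointly optimal in risk, while the excess fairness of $\eta/(P_0(1-P_0)) = \alpha/(2(1-\alpha)P_0(1-P_0))$ occurs with probability at least $1/2$. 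The remaining regime $\alpha/(2(1-\alpha)) > P_0(1-P_0)$ is handled exactly as in the Pareto proof, by an adversary of effective power $\alpha_1 < \alpha$ satisfying $\alpha_1/(2(1-\alpha_1)) = P_0(1-P_0)$ (acting with probability $\alpha_1/\alpha$ on each marked index and otherwise leaving it untouched), which saturates the fairness gap at $1$. The main obstacle I anticipate is the simultaneous tuning of the label distribution at $x_3, x_4$ so as to equalize the risks of $h_0, h_1$ without collapsing the fairness gap; once this is done, the adversarial matching and the pigeonhole conclusion follow the Theorem \ref{thm:lower-bound-demog-par-with-p-app} template almost verbatim.
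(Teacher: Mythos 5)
Your proposal is correct and follows essentially the same route as the paper's proof: the method of induced distributions with the same four-point support, the same two-element hypothesis space, an adversary that inserts attribute-flipped copies of the rare atoms to make $\mathbb{P}_0^{\mathcal{A}_0}=\mathbb{P}_1^{\mathcal{A}_1}$, and the pigeonhole conclusion. The only (immaterial) difference is how you equalize the risks of $h_0$ and $h_1$ — you spread mass $\eta/2$ over both labels at each of $x_3,x_4$ (giving common risk $\eta$ with $\eta=\alpha/(2(1-\alpha))$), whereas the paper places all that mass on label $1$ (giving common risk $\eta/2$ with $\eta=\alpha/(1-\alpha)$); both yield the identical fairness gap $\frac{\alpha}{2P_0(1-P_0)(1-\alpha)}$ and the same Case-2 reduction.
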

\begin{proof}
Let $\eta = \frac{\alpha}{1-\alpha}$, so that $\alpha = \frac{\eta}{1+\eta}$.

\paragraph{Case 1} First assume that $\eta = \frac{\alpha}{1-\alpha} \leq 2P_0(1-P_0)$.
Take four distinct points $\{x_1, x_2, x_3, x_4\}\in\mathcal{X}$. We consider two distributions $\mathbb{P}_0$ and $\mathbb{P}_1$, where each $\mathbb{P}_i$ is defined as
\begin{equation*}
  \mathbb{P}_{i}(x,a,y) =
    \begin{cases}
      1 - P_0 - \eta/2 & \text{if $x = x_1, a = 1, y = 1$}\\
      P_0 - \eta/2 & \text{if $x = x_2, a = 0, y = 0$}\\
      \eta/2  & \text{if $x = x_3, a = i, y = 1$}\\
      \eta/2  & \text{if $x = x_4, a = \lnot i, y = 1$}\\
      0 & \text{otherwise}
    \end{cases}       
\end{equation*}
Note that these are valid distributions, since $\eta \leq 2P_0(1-P_0) \leq 2P_0 \leq 2(1-P_0)$ by assumption and also that $P_0 = \mathbb{P}_i(A = 0)$ for both $i\in\{0,1\}$.  Consider the hypothesis space $\mathcal{H} = \{h_0, h_1\}$, with $$h_0(x_1) = 1 \quad h_0(x_2) = 0 \quad h_0(x_3) = 1 \quad h_0(x_4) = 0$$ and
$$h_1(x_1) = 1 \quad h_1(x_2) = 0 \quad h_1(x_3) = 0 \quad h_1(x_4) = 1.$$
\noindent Note that $L(h_i, \mathbb{P}_i) = L(h_{\lnot i}, \mathbb{P}_i) = \eta/2$ for both $i = 0,1$. Moreover, 
\begin{align*}
\mathcal{D}^{par}(h_0, \mathbb{P}_0) & = \left|\mathbb{P}_{(X,A,Y)\sim \mathbb{P}_0}(h_0(X) = 1| A = 0) - \mathbb{P}_{(X,A,Y)\sim \mathbb{P}_0}(h_0(X) = 1| A = 1)\right| \\
& = \left|\frac{\eta/2}{P_0 -\eta/2 + \eta/2} - \frac{1 - P_0 - \eta/2}{1 - P_0 - \eta/2 + \eta/2}\right| \\
& = \left|\frac{\eta}{2P_0} - \frac{2 - 2P_0 - \eta}{2(1-P_0)}\right| \\
& = \left|\frac{\eta}{2P_0(1-P_0)} - 1\right|\\
& = 1 - \frac{\eta}{2P_0(1-P_0)},
\end{align*}
since $\eta \leq 2P_0(1-P_0)$ by assumption. Furthermore, $\mathcal{D}^{par}(h_1, \mathbb{P}_0) = 1$, so that $\mathcal{D}^{par}(h_1, \mathbb{P}_0) - \mathcal{D}^{par}(h_0, \mathbb{P}_0) =  \frac{\eta}{2P_0(1-P_0)}$. Similarly,
\begin{align*}
\mathcal{D}^{par}(h_1, \mathbb{P}_1) & = \left|\mathbb{P}_{(X,A,Y)\sim \mathbb{P}_1}(h_1(X) = 1| A = 0) - \mathbb{P}_{(X,A,Y)\sim \mathbb{P}_1}(h_1(X) = 1| A = 1)\right| \\
& = \left|\frac{\eta/2}{P_0 -\eta/2 + \eta/2} - \frac{1 - P_0 - \eta/2}{1 - P_0 - \eta/2 + \eta/2}\right| \\
& = 1 - \frac{\eta}{2P_0(1-P_0)}
\end{align*}
and $\mathcal{D}^{par}(h_0, \mathbb{P}_1) = 1$.\\
\\
Consider a (randomized) malicious adversary $\mathcal{A}_{i}$ of power $\alpha$, that given a clean distribution $\mathbb{P}_{i}$, changes every marked point to $(x_3,\lnot i, 1)$ with probability $0.5$ and to $(x_4, i, 1)$ otherwise. Under a distribution $\mathbb{P}_{i}$ and an adversary $\mathcal{A}_{i}$, the probability of seeing a point $(x_3, i, 1)$ is $\frac{\eta}{2} (1-\alpha) = \frac{\eta}{2} \frac{1}{1 + \eta} = \alpha/2$, which is equal to the probability of seeing a point $(x_3, \lnot i, 1)$. Therefore, denoting the probability distribution of the corrupted dataset, under a clean distribution $\mathbb{P}_{i}$ and an adversary $\mathcal{A}_{i}$, by $\mathbb{P}'_{i}$, we have
\begin{equation*}
  \mathbb{P}'_{i}(x,a,y) =
    \begin{cases}
      (1-\alpha)(1 - P_0 - \eta/2) & \text{if $x = x_1, a = 1, y = 1$}\\
      (1-\alpha)(P_0 - \eta/2) & \text{if $x = x_2, a = 0, y = 0$}\\
      \alpha/2  & \text{if $x = x_3, a = i, y = 1$}\\
      \alpha/2  & \text{if $x = x_3, a = \lnot i, y = 1$}\\
      \alpha/2  & \text{if $x = x_4, a = \lnot i, y = 1$}\\
      \alpha/2  & \text{if $x = x_4, a = i, y = 1$}\\
      0 & \text{otherwise}
    \end{cases}      
\end{equation*}
In particular, $\mathbb{P}'_{0} = \mathbb{P}'_{1}$, so the two initial distributions $\mathbb{P}_0$ and $\mathbb{P}_1$ become indistinguishable under the adversarial manipulation.

Fix an arbitrary learner $\mathcal{L}: \cup_{n\in \mathbb{N}} (\prodspace)^n \to \{h_0, h_1\}$. Note that, if the clean distribution is $\mathbb{P}_0$, the events (in the probability space defined by the sampling of the poisoned train data)
\begin{align*}
\{\mathcal{L}(S^p) = h_1\} = \left\{\mathcal{D}^{par}(\mathcal{L}(S^p), \mathbb{P}_0) - \mathcal{D}^{par}(h_0, \mathbb{P}_0) \geq \frac{\eta}{2P_0(1-P_0)}\right\}
\end{align*}
are all the same. Similarly, if the clean distribution is $\mathbb{P}_1$
\begin{align*}
\{\mathcal{L}(S^p) = h_0\} = \left\{\mathcal{D}^{par}(\mathcal{L}(S^p), \mathbb{P}_1) - \mathcal{D}^{par}(h_1, \mathbb{P}_1) \geq \frac{\eta}{2P_0(1-P_0)}\right\} .
\end{align*}
Therefore, depending on whether we choose $\mathbb{P}_0$ or $\mathbb{P}_1$ as a clean distribution, we have
\begin{align*}
\mathbb{P}_{S^p \sim \mathbb{P}'_0}\left(\mathcal{D}^{par}(\mathcal{L}(S^p), \mathbb{P}_0) - \mathcal{D}^{par}(h_0, \mathbb{P}_0) \geq \frac{\eta}{2P_0(1-P_0)} \right) = \mathbb{P}_{S^p \sim \mathbb{P}'_0}\left(\mathcal{L}(S^p) = h_1\right)
\end{align*}
and
\begin{align*}
\mathbb{P}_{S^p \sim \mathbb{P}'_1}\left(\mathcal{D}^{par}(\mathcal{L}(S^p), \mathbb{P}_1) - \mathcal{D}^{par}(h_1, \mathbb{P}_1) \geq \frac{\eta}{2P_0(1-P_0)} \right) = \mathbb{P}_{S^p \sim \mathbb{P}'_1}\left(\mathcal{L}(S^p) = h_0\right)
\end{align*}
Finally, note that $\mathbb{P}'_0 = \mathbb{P}'_1$, so that either $\mathbb{P}_{S^p \sim \mathbb{P}'_0}\left(\mathcal{L}(S^p) = h_1\right) \geq 1/2$ or $\mathbb{P}_{S^p \sim \mathbb{P}'_1}\left(\mathcal{L}(S^p) = h_0\right) \geq 1/2$. Furthermore, $L(\mathcal{L}(S^p), \mathbb{P}_i) = \eta/2$ holds for both $i\in\{0,1\}$, for any realization of the randomness. Therefore, for at least one of $i = 0,1$, both $$L(\mathcal{L}(S^p), \mathbb{P}_i) = L(h_i, \mathbb{P}_i) = \frac{\eta}{2}$$ and $$\mathcal{D}^{par}(\mathcal{L}(S^p), \mathbb{P}_i) - \mathcal{D}^{par}(h_i, \mathbb{P}_i) \geq \frac{\eta}{2P_0(1-P_0)} = \frac{\alpha}{2P_0(1-P_0)(1-\alpha)}$$ both hold with probability at least $1/2$. This concludes the proof in the first case.

\paragraph{Case 2} Now suppose that $\eta = \frac{\alpha}{1-\alpha} > 2P_0(1-P_0)$. Let $\alpha_1\in (0,0.5)$ be such that $\frac{\alpha_1}{1-\alpha_1} = 2P_0(1-P_0)$. Note that since $f(x) = \frac{x}{1-x}$ is monotonically increasing in (0,1), $\alpha_1$ is unique and $\alpha_1 < \alpha$.

Now repeat the same construction as in Case 1, but with $\eta_1 = \frac{\alpha_1}{1-\alpha_1} = 2P_0(1-P_0)$. For every marked point, the adversary does the same as in Case 1 with probability $\alpha_1/\alpha$ and does not change the point otherwise. Then the same argument as in Case 1 shows that for one $i\in\{0,1\}$, both $$L(\mathcal{L}(S^p), \mathbb{P}_i) = L(h_i, \mathbb{P}_i) = \frac{\eta_1}{2} = P_0(1-P_0)$$ and $$\mathcal{D}^{par}(\mathcal{L}(S^p), \mathbb{P}_i) - \mathcal{D}^{par}(h_i, \mathbb{P}_i) \geq \frac{\eta_1}{2P_0(1-P_0)} = 1$$ both hold with probability at least $1/2$. This concludes the proof of Theorem \ref{thm:lower-bound-demog-par-with-p-good-loss-app}.
\end{proof}

\begin{theorem}
\label{thm:lower-bound-equal-opp-with-p-good-loss-app}
Let $0 \leq \alpha < 0.5, P_{10} \leq P_{11} < 1$ be such that $P_{10} + P_{11} < 1$. For any input set $\mathcal{X}$ with at least five distinct points, there exists a finite hypothesis space $\mathcal{H}$, such that for any learning algorithm $\mathcal{L}:\cup_{n\in\mathbb{N}}(\prodspace)^n \to \mathcal{H}$, there exists a distribution $\mathbb{P}$ \textit{for which $\mathbb{P}(A = a, Y = 1) = P_{1a}$ for $a\in\{0,1\}$}, a malicious adversary $\mathcal{A}$ of power $\alpha$ and a hypothesis $h^* \in \mathcal{H}$, such that with probability at least $0.5$ $$L(\mathcal{L}(S^p), \mathbb{P}) = L(h^*, \mathbb{P}) = \min_{h\in\mathcal{H}} L(h, \mathbb{P})$$ and $$\mathcal{D}^{opp}(\mathcal{L}(S^p), \mathbb{P}) - \mathcal{D}^{opp}(h^*, \mathbb{P}) \geq \min\left\{\frac{\alpha}{2(1-\alpha)P_{10}}\left(1-\frac{P_{10}}{P_{11}}\right), 1-\frac{P_{10}}{P_{11}}\right\}.$$
\end{theorem}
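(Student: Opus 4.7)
The plan is to apply the method of induced distributions, combining the five-point construction from Theorem \ref{thm:lower-bound-equal-opp-with-p-app} with the label-preserving trick from Theorem \ref{thm:lower-bound-demog-par-with-p-good-loss-app} so that both candidate classifiers in $\mathcal{H}$ attain the same (minimum) risk while differing in their equal opportunity deviation. Set $\eta = \alpha/(1-\alpha)$ and first handle the main case $\eta \leq 2P_{10}$; the case $\eta > 2P_{10}$ will follow by the standard scaling argument already used in the previous proofs.

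In the main case, I would pick five distinct points $x_1, \ldots, x_5$ and define two distributions $\mathbb{P}_i$ (for $i \in \{0, 1\}$) by placing mass $P_{11} - \eta/2$ at $(x_1, 1, 1)$, mass $P_{10} - \eta/2$ at $(x_2, 0, 1)$, mass $\eta/2$ at both $(x_3, i, 1)$ and $(x_4, \lnot i, 1)$, and the remaining mass $1 - P_{10} - P_{11}$ at $(x_5, 0, 0)$. The assumption $\eta \leq 2P_{10} \leq 2P_{11}$ ensures validity, and the marginal constraint $\mathbb{P}_i(A=a, Y=1) = P_{1a}$ holds for both $i$. Take $\mathcal{H} = \{h_0, h_1\}$ with both classifiers agreeing on $x_1, x_2, x_5$ (predicting $1, 1, 0$) but with $h_0(x_3) = h_1(x_4) = 1$ and $h_0(x_4) = h_1(x_3) = 0$. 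Because both ambiguous points carry label $y = 1$ with identical probability $\eta/2$, a direct calculation gives $L(h_0, \mathbb{P}_i) = L(h_1, \mathbb{P}_i) = \eta/2 = \min_{h \in \mathcal{H}} L(h, \mathbb{P}_i)$, so any element of $\mathcal{H}$ automatically attains the optimal risk on $\mathbb{P}_i$. A second direct calculation yields $\mathcal{D}^{opp}(h_i, \mathbb{P}_i) = \eta/(2P_{11})$ and $\mathcal{D}^{opp}(h_{\lnot i}, \mathbb{P}_i) = \eta/(2P_{10})$, so the excess fairness gap is $(\eta/(2P_{10}))(1 - P_{10}/P_{11}) = \frac{\alpha}{2(1-\alpha)P_{10}}(1 - P_{10}/P_{11})$, matching the claimed bound.

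Next I would define the randomized adversary $\mathcal{A}_i$ that replaces each marked point by $(x_3, \lnot i, 1)$ or $(x_4, i, 1)$ with equal probability. The key identity $(\eta/2)(1 - \alpha) = \alpha/2$ implies that each of the four combinations $(x_3, 0, 1), (x_3, 1, 1), (x_4, 0, 1), (x_4, 1, 1)$ has corrupted probability exactly $\alpha/2$ under both $(\mathbb{P}_0, \mathcal{A}_0)$ and $(\mathbb{P}_1, \mathcal{A}_1)$, so the corrupted data distributions coincide. Consequently any fixed learner outputs some element of $\{h_0, h_1\}$ with probability at least $1/2$ under the common corrupted law, and by pairing this output with the opposite clean distribution we obtain the claimed bound with probability at least $1/2$. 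The secondary case $\eta > 2P_{10}$ reduces to the main case via an adversary of effective power $\alpha_1 < \alpha$ with $\alpha_1/(1 - \alpha_1) = 2P_{10}$ that leaves a fraction $1 - \alpha_1/\alpha$ of marked points untouched, producing the ceiling term $1 - P_{10}/P_{11}$. I do not expect a major obstacle beyond careful bookkeeping: the only subtle point is the fairness identity $1/P_{10} - 1/P_{11} = (1 - P_{10}/P_{11})/P_{10}$, which captures the intuition that a unit of misplaced true-positive mass in the minority group induces a relatively larger excess equal opportunity deviation than the same mass in the majority group.
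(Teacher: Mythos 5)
Your proposal is correct and follows essentially the same approach as the paper: the five-point distributions, the two-hypothesis space, the randomized adversary mixing $(x_3,\lnot i,1)$ and $(x_4,i,1)$, the identity $(\eta/2)(1-\alpha)=\alpha/2$ forcing $\mathbb{P}'_0 = \mathbb{P}'_1$, and the scaling reduction in the secondary case are all exactly the paper's construction. No gaps.
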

\begin{proof}
Let $\eta = \frac{\alpha}{1-\alpha}$, so that $\alpha = \frac{\eta}{1+\eta}$.

\paragraph{Case 1} First assume that $\eta\leq 2P_{10}$. Take five distinct points $\{x_1, x_2, x_3, x_4, x_5\}\in\mathcal{X}$. We consider two distributions $\mathbb{P}_0$ and $\mathbb{P}_1$, where each $\mathbb{P}_i$ is defined as
\begin{equation*}
  \mathbb{P}_{i}(x,a,y) =
    \begin{cases}
      P_{11} - \eta/2 & \text{if $x = x_1, a = 1, y = 1$}\\
      P_{10} - \eta/2 & \text{if $x = x_2, a = 0, y = 1$}\\
      \eta/2  & \text{if $x = x_3, a = i, y = 1$}\\
      \eta/2  & \text{if $x = x_4, a = \lnot i, y = 1$}\\
      1 - P_{10} - P_{11} & \text{if $x = x_5, a = 0, y = 0$}\\
      0 & \text{otherwise}
    \end{cases}       
\end{equation*}
Note that these are valid distributions, since $\eta \leq 2P_{10} \leq 2P_{11}$ by assumption, and that $P_{1a} = \mathbb{P}_i(A = a, Y = 1)$ for both $a\in\{0,1\}, i \in \{0,1\}$. Consider the hypothesis space $\mathcal{H} = \{h_0, h_1\}$, with $$h_0(x_1) = 1 \quad h_0(x_2) = 1 \quad h_0(x_3) = 1 \quad h_0(x_4) = 0 \quad h_0(x_5) = 0$$ and $$h_1(x_1) = 1 \quad h_1(x_2) = 1 \quad h_1(x_3) = 0 \quad h_1(x_4) = 1 \quad h_1(x_5) = 0$$
\noindent Note that $L(h_i, \mathbb{P}_i) = L(h_{\lnot i}, \mathbb{P}_i) = \eta/2$. Moreover, 
\begin{align*}
\mathcal{D}^{opp}(h_0, \mathbb{P}_0) & = \left|\mathbb{P}_{(X,A,Y)\sim \mathbb{P}_0}(h_1(X) = 1| A = 0, Y = 1) \right. \\ & - \left. \mathbb{P}_{(X,A,Y)\sim \mathbb{P}_0}(h_1(X) = 1| A = 1, Y = 1)\right| \\
& = \left|1 - \frac{P_{11} - \eta/2}{P_{11} -\eta/2 + \eta/2}\right| \\
& = \frac{\eta}{2P_{11}}
\end{align*}
and similarly $\mathcal{D}^{opp}(h_1, \mathbb{P}_0) = \frac{\eta}{2P_{10}}$. Since $P_{10}\leq P_{11}$, $\mathcal{D}^{opp}(h_0, \mathbb{P}_0) \leq \mathcal{D}^{opp}(h_1, \mathbb{P}_0)$ and $$\mathcal{D}^{opp}(h_1, \mathbb{P}_0) - \mathcal{D}^{opp}(h_0, \mathbb{P}_0) = \frac{\eta}{2P_{10}}\left(1 - \frac{P_{10}}{P_{11}}\right).$$

Similarly $\mathcal{D}^{opp}(h_0, \mathbb{P}_1) = \frac{\eta}{2P_{10}}$ and $\mathcal{D}^{opp}(h_1, \mathbb{P}_1) = \frac{\eta}{2P_{11}}$, so that $\mathcal{D}^{opp}(h_1, \mathbb{P}_1) \leq \mathcal{D}^{opp}(h_0, \mathbb{P}_1)$ and $$\mathcal{D}^{opp}(h_0, \mathbb{P}_1) - \mathcal{D}^{opp}(h_1, \mathbb{P}_1) = \frac{\eta}{2P_{10}}\left(1 - \frac{P_{10}}{P_{11}}\right).$$
Consider a (randomized) malicious adversary $\mathcal{A}_{i}$ of power $\alpha$, that given a clean distribution $\mathbb{P}_{i}$, changes every marked point to $(x_3,\lnot i, 1)$ with probability $0.5$ and to $(x_4,i, 1)$ otherwise. Under a distribution $\mathbb{P}_{i}$ and an adversary $\mathcal{A}_{i}$, the probability of seeing a point $(x_3, i, 1)$ is $\frac{\eta}{2} (1-\alpha) = \frac{\eta}{2} \frac{1}{1 + \eta} = \alpha/2$, which is equal to the probability of seeing a point $(x_3, \lnot i, 1)$. Therefore, denoting the probability distribution of the corrupted dataset, under a clean distribution $\mathbb{P}_{i}$ and an adversary $\mathcal{A}_{i}$, by $\mathbb{P}'_{i}$, we have
\begin{equation*}
  \mathbb{P}'_{i}(x,a,y) =
    \begin{cases}
      (1-\alpha)(P_{11} - \eta/2) & \text{if $x = x_1, a = 1, y = 1$}\\
      (1-\alpha)(P_{10} - \eta/2) & \text{if $x = x_2, a = 0, y = 1$}\\
      \alpha/2  & \text{if $x = x_3, a = i, y = 1$}\\
      \alpha/2  & \text{if $x = x_3, a = \lnot i, y = 1$}\\
      \alpha/2  & \text{if $x = x_4, a = \lnot i, y = 1$}\\
      \alpha/2  & \text{if $x = x_4, a = i, y = 1$}\\
      (1-\alpha)(1 - P_{10} - P_{11})  & \text{if $x = x_5, a = 0, y = 0$}\\
      0 & \text{otherwise}
    \end{cases}
\end{equation*}
In particular, $\mathbb{P}'_{0} = \mathbb{P}'_{1}$, so the two initial distributions $\mathbb{P}_0$ and $\mathbb{P}_1$ become indistinguishable under the adversarial manipulation.

Fix an arbitrary learner $\mathcal{L}: \cup_{n\in \mathbb{N}} (\prodspace)^n \to \{h_0, h_1\}$. Note that, if the clean distribution is $\mathbb{P}_0$, the events (in the probability space defined by the sampling of the poisoned train data)
\begin{align*}
\{\mathcal{L}(S^p) = h_1\} = \left\{\mathcal{D}^{opp}(\mathcal{L}(S^p), \mathbb{P}_0) - \mathcal{D}^{opp}(h_0, \mathbb{P}_0) \geq \frac{\eta}{2P_{10}}\left(1 - \frac{P_{10}}{P_{11}}\right)\right\}
\end{align*}
are all the same. Similarly, if the clean distribution is $\mathbb{P}_1$
\begin{align*}
\{\mathcal{L}(S^p) = h_0\} = \left\{\mathcal{D}^{opp}(\mathcal{L}(S^p), \mathbb{P}_1) - \mathcal{D}^{opp}(h_1, \mathbb{P}_1) \geq \frac{\eta}{2P_{10}}\left(1 - \frac{P_{10}}{P_{11}}\right)\right\} .
\end{align*}
Therefore, depending on whether we choose $\mathbb{P}_0$ or $\mathbb{P}_1$ as a clean distribution, we have
\begin{align*}
\mathbb{P}_{S^p \sim \mathbb{P}'_0}\left(\mathcal{D}^{opp}(\mathcal{L}(S^p), \mathbb{P}_0) - \mathcal{D}^{opp}(h_0, \mathbb{P}_0) \geq \frac{\eta}{2P_{10}}\left(1 - \frac{P_{10}}{P_{11}}\right) \right) = \mathbb{P}_{S^p \sim \mathbb{P}'_0}\left(\mathcal{L}(S^p) = h_1\right)
\end{align*}
and
\begin{align*}
\mathbb{P}_{S^p \sim \mathbb{P}'_1}\left(\mathcal{D}^{opp}(\mathcal{L}(S^p), \mathbb{P}_1) - \mathcal{D}^{opp}(h_1, \mathbb{P}_1) \geq \frac{\eta}{2P_{10}}\left(1 - \frac{P_{10}}{P_{11}}\right) \right) = \mathbb{P}_{S^p \sim \mathbb{P}'_1}\left(\mathcal{L}(S^p) = h_0\right)
\end{align*}
Finally, note that $\mathbb{P}'_0 = \mathbb{P}'_1$, so that either $\mathbb{P}_{S^p \sim \mathbb{P}'_0}\left(\mathcal{L}(S^p) = h_1\right) \geq 1/2$ or $\mathbb{P}_{S^p \sim \mathbb{P}'_1}\left(\mathcal{L}(S^p) = h_0\right) \geq 1/2$. Moreover, $L(\mathcal{L}(S^p), \mathbb{P}_i) = L(h_i, \mathbb{P}_i) = \eta/2$ holds for both $i\in\{0,1\}$, for any realization of the randomness. Therefore, for at least one of $i = 0,1$, both $$L(\mathcal{L}(S^p), \mathbb{P}_i) = L(h_i, \mathbb{P}_i) = \frac{\eta}{2}$$ and $$\mathcal{D}^{opp}(\mathcal{L}(S^p), \mathbb{P}_i) - \mathcal{D}^{opp}(h_i, \mathbb{P}_i) \geq \frac{\eta}{2P_{10}}\left(1 - \frac{P_{10}}{P_{11}}\right) = \frac{\alpha}{2P_{10}(1-\alpha)}\left(1 - \frac{P_{10}}{P_{11}}\right)$$ both hold with probability at least $1/2$. This concludes the proof in the first case.

\paragraph{Case 2}  Now assume that $\frac{\alpha}{1-\alpha} > 2P_{10}$. Then denote by $\alpha_1$ the unique number between $(0,0.5)$, such that $\frac{\alpha_1}{1-\alpha_1} = 2P_{10}$, and note that $\alpha_1 < \alpha$. Then repeat the same construction as in Case 1, but with $\eta_1 = \frac{\alpha_1}{1-\alpha_1}$ and an adversary that with probability $\alpha_1/\alpha$ does the same as in Case 1 and leaves a marked point untouched otherwise.

Then the same argument as in Case 1 gives that for some $i\in\{0,1\}$, with probability at least $0.5$, both of the following hold
$$L(\mathcal{L}(S^p), \mathbb{P}_i) = L(h_i, \mathbb{P}_i) = \frac{\eta_1}{2} = P_{10}$$ and $$\mathcal{D}^{opp}(\mathcal{L}(S^p), \mathbb{P}_i) - \mathcal{D}^{opp}(h_i, \mathbb{P}_i) \geq \frac{\eta_1}{2P_{10}} = \frac{\eta_1}{2P_{10}}\left(1 - \frac{P_{10}}{P_{11}}\right) = 1 - \frac{P_{10}}{P_{11}}.$$
This concludes the proof of Theorem \ref{thm:lower-bound-equal-opp-with-p-good-loss-app}.
\end{proof}

\section{Upper bounds proofs}
\label{sec:appendix-upper-bounds-proofs}

\addtocounter{theorem}{-4}

We now present the complete proofs of our upper bounds. The main challenge lies in understanding the concentration properties of the empirical estimates of the fairness measures, as introduced in the main body of the paper. To this end, we first bound the effect that the data corruption may have on these estimates. We then leverage classic concentration techniques to relate the ``ideal'' clean data estimates to the corresponding population fairness measures.

\subsection{Concentration tools and notation}
\label{sec:appendix-notation-and-tools}
We will use the following versions of the classic Chernoff bounds for large deviations of Binomial random variables, as they can be found, for example, in \cite{kearns1993learning}. Let $X \sim Bin(n, p)$. Then
\begin{align*}
\mathbb{P}(X \leq (1-\alpha)pn) \leq e^{-\alpha^2 np/2}
\end{align*}
and 
\begin{align*}
\mathbb{P}(X \geq (1+\alpha)pn) \leq e^{-\alpha^2 np/3},
\end{align*}
for any $\alpha \in (0,1)$. We will also use the Hoeffding's inequality \citep{hoeffding1963probability}. Let $X_1, X_2, \ldots, X_n$ be independent random variables, such that each $X_i$ is bounded in $[a_i, b_i]$ and let $\bar{X} = \frac{1}{n}\sum_{i=1}^n X_i$. Then
\begin{align*}
\mathbb{P}\left(\left|\bar{X} - \mathbb{E}(\bar{X})\right| > t\right) \leq 2\exp\left(-\frac{2n^2t^2}{\sum_{i=1}^n (b_i - a_i)^2}\right).
\end{align*}
Throughout the section we denote the clean data distribution by $\mathbb{P}\in\mathcal{P}(\prodspace)$. As in the main body of the paper, we denote $P_{a} = \mathbb{P}(A = a)$ and $P_{1a} = \mathbb{P}(Y = 1, A = a)$ for both $a\in\{0,1\}$. We assume without loss of generality that $0 < P_0 \leq \frac{1}{2} \leq P_1$ (when studying demographic parity) and $0 < P_{10} \leq P_{11}$ (when studying equal opportunity).

We will be interested in the concentration properties of certain empirical estimates based on the corrupted data $S^p$. Therefore, we denote the distribution that corresponds to all the randomness of the sampling of $S^p$, that is the randomness of the clean data, the marked points and the adversary, by $\mathbb{P}^{\mathcal{A}}$. Here we consider both $\mathbb{P}$ and $\mathcal{A}$ arbitrary, but fixed.

\subsection{Concentration results}
\label{sec:concentration-lemmas-proofs}

We study the concentration of the demographic parity and the equal opportunity fairness estimates in Sections \ref{sec:concentration-lemmas-proofs-demog-par} and \ref{sec:concentration-lemmas-proofs-equal-opp} respectively.

\subsubsection{Concentration for demographic parity}
\label{sec:concentration-lemmas-proofs-demog-par}
We use the notation $C_a = \sum_{i=1}^n \mathbbm{1}\{a^p_i = a, i\not\in\poisoned\}$ for the number of points in $S^p$ that \textit{were not} marked (that is, are \textit{clean}) and  contain a point from protected group $a$ and $B_a = \sum_{i=1}^n \mathbbm{1}\{a^p_i = a, i\in\poisoned\}$ for the number of points in $S^p$ that \textit{were} marked (that is, are potentially \textit{bad}\footnote{We use $B_a$ with $B$ for \textit{bad} here, instead of $P$ for \textit{poisoned}, to avoid confusion with the protected group frequencies $P_i$.}) and  contain a point from protected group $a$. Note that $B_0 + B_1 = |\poisoned|$ is the total number of poisoned points, which is $\Bin(n, \alpha)$, and $B_0 + B_1 = n - C_0 - C_1$. Similarly, denote by $C^{1}_a(h) = \sum_{i=1}^n \mathbbm{1}\{h(x^p_i) = 1, a^p_i = a, i\not\in\poisoned\}$ and $B^{1}_a(h) = \sum_{i=1}^n \mathbbm{1}\{h(x^p_i) = 1, a^p_i = a, i\in\poisoned\}$.
\\
\noindent Denote
\begin{align*}
\gamma^p_a(h) = \frac{\sum_{i=1}^n \mathbbm{1}\{h(x^p_i) = 1, a^p_i = a\}}{\sum_{i=1}^n \mathbbm{1}\{a^p_i = a\}}
\end{align*}
and 
\begin{align*}
\gamma_a(h) = \mathbb{P}(h(X) = 1| A = a),
\end{align*}
so that $\widehat{\mathcal{D}}^{par}(h) = |\gamma^p_0(h) - \gamma^p_1(h)|$ and $\mathcal{D}^{par}(h) = |\gamma_0(h) - \gamma_1(h)|$.  Note that $\gamma^p_a(h)$ is an estimate of a conditional probability \textit{based on the corrupted data}. We now introduce the corresponding estimate that only uses the clean (but unknown) subset of the training set $S^p$
\begin{align*}
\gamma^c_a(h) = \frac{C^1_a(h)}{C_a(h)} = \frac{\sum_{i=1}^n \mathbbm{1}\{h(x^p_i) = 1, a^p_i = a, i\not\in\poisoned\}}{\sum_{i=1}^n \mathbbm{1}\{a^p_i = a, i\not\in\poisoned\}}.
\end{align*}

First we bound how far the corrupted estimates $\gamma^p_a(h)$ of $\gamma_a(h)$ are from the clean estimates $\gamma^c_a(h)$, uniformly over the hypothesis space $\mathcal{H}$:
\begin{lemma}
\label{lemma:upper-bound-delta-par-app}
If $n \geq \max \left\{\frac{8\log(4/\delta)}{(1-\alpha)P_0}, \frac{12\log(3/\delta)}{\alpha}\right\}$, we have
\begin{align}
\label{eqn:bound_on_delta_appendix}
\mathbb{P}^{\mathcal{A}}\left(\sup_{h\in\mathcal{H}}\left(\left|\gamma^p_0(h) - \gamma^c_0(h)\right| + \left|\gamma^p_1(h) - \gamma^c_1(h)\right|\right) \geq \frac{2\alpha}{P_0/3 + \alpha}\right) < \delta.
\end{align}
\end{lemma}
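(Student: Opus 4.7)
The strategy is to separate a \emph{uniform-in-$h$} algebraic reduction from a purely probabilistic estimate on a few scalar quantities. I first note that since $\gamma^p_a(h)=(C^1_a(h)+B^1_a(h))/(C_a+B_a)$ and $\gamma^c_a(h)=C^1_a(h)/C_a$, putting the two fractions over a common denominator yields a numerator that is a linear combination of $B^1_a(h)$ and $C^1_a(h)$, namely $C_aB^1_a(h)-C^1_a(h)B_a$. The crude bounds $B^1_a(h)\in[0,B_a]$ and $C^1_a(h)\in[0,C_a]$ then collapse the supremum in $h$ by giving the $h$-independent estimate
\[ \sup_{h\in\mathcal H}\bigl|\gamma^p_a(h)-\gamma^c_a(h)\bigr| \;\le\; \frac{C_aB_a}{C_a(C_a+B_a)} \;=\; \frac{B_a}{C_a+B_a}. \]

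Summing over $a\in\{0,1\}$ leaves me with the two-variable random quantity $B_0/(C_0+B_0)+B_1/(C_1+B_1)$. I would reduce this to a one-variable expression using two facts: first, since I will concentrate only $C_0$ from below and $P_0\le P_1$, I replace $C_1$ by $C_0$ in the second denominator, which only enlarges the bound; second, the map $x\mapsto x/(C_0+x)$ is concave on $[0,\infty)$, so Jensen's inequality applied at $B_0$ and $B_1$ gives the clean one-variable bound $B/(C_0+B/2)$, where $B=B_0+B_1=|\poisoned|$.

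It remains to control the two binomials $B\sim\Bin(n,\alpha)$ and $C_0\sim\Bin(n,(1-\alpha)P_0)$; the latter follows because for unmarked indices $a^p_i=a^c_i$ and the marking is independent of the clean sample, so each summand defining $C_0$ is an independent $\mathrm{Bernoulli}((1-\alpha)P_0)$. A multiplicative Chernoff bound with deviation parameter $\tfrac12$ gives $B\le \tfrac32\alpha n$ with probability at least $1-\delta/3$ under the sample-size hypothesis $n\ge 12\log(3/\delta)/\alpha$, and analogously $C_0\ge \tfrac12(1-\alpha)P_0 n$ with probability at least $1-\delta/4$ under $n\ge 8\log(4/\delta)/((1-\alpha)P_0)$; a union bound yields total failure probability at most $7\delta/12<\delta$. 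Substituting these bounds into the monotone expression $B/(C_0+B/2)$ and using $1-\alpha>\tfrac12$ in the final simplification produces
\[ \frac{(3/2)\alpha}{\tfrac12(1-\alpha)P_0+\tfrac34\alpha} \;=\; \frac{2\alpha}{\tfrac23(1-\alpha)P_0+\alpha} \;\le\; \frac{2\alpha}{P_0/3+\alpha}. \]

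The main obstacle I anticipate is the concavity/Jensen reduction in the second paragraph: it is the only non-obvious step, and choosing the right way of handling the asymmetry $C_1\ge C_0$ together with the deviation parameter $\tfrac12$ in Chernoff is what makes the constants line up with the sample-size hypothesis stated in the lemma. Everything else is routine algebra together with the textbook Chernoff bounds collected in Section~\ref{sec:appendix-notation-and-tools}.
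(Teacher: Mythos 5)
Your proposal follows essentially the same route as the paper's proof: the pointwise reduction $\sup_h|\gamma^p_a(h)-\gamma^c_a(h)|\le B_a/(C_a+B_a)$, Chernoff bounds on $|\poisoned|=B_0+B_1$ and on $C_0$, and then an optimization over how the adversary's budget splits between the two groups. Your Jensen step is a cleaner way to carry out that last optimization than the paper's explicit calculus (the paper differentiates $f(x)=\frac{1}{(1-\alpha)P_0n+2x}+\frac{1}{(1-\alpha)P_0n+3\alpha n-2x}$ and finds the worst case at $B_0=B_1=\tfrac{3\alpha}{4}n$, which is exactly what concavity of $x\mapsto x/(C+x)$ gives you for free), and the final arithmetic, including the use of $1-\alpha>\tfrac12$, matches.

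There is one step that does not work as written: you replace $C_1$ by $C_0$ in the denominator of $B_1/(C_1+B_1)$ on the grounds that $P_0\le P_1$, but $C_0$ and $C_1$ are both random and $C_1\ge C_0$ is not a pointwise (nor even high-probability-for-free) event --- for $P_0=P_1=\tfrac12$ it fails with probability close to $\tfrac12$. You need a third Chernoff bound, $C_1\ge\tfrac12(1-\alpha)P_1n\ge\tfrac12(1-\alpha)P_0n$ with failure probability $\delta/4$ under the same sample-size hypothesis (since $P_1\ge P_0$), and only then may you lower-bound both denominators by $\tfrac12(1-\alpha)P_0n$; this is what the paper does. The union bound then costs $\delta/3+\delta/4+\delta/4=5\delta/6<\delta$ rather than your $7\delta/12$, so the conclusion is unaffected. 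A second, purely cosmetic remark: the common-denominator identity presupposes $C_a>0$; the degenerate cases $C_a=0$ or $C_a+B_a=0$ are handled in the paper via the $\tfrac00=0$ convention, but they are excluded anyway on the good event you condition on, so nothing is lost.
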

\begin{proof}
First we show that certain bounds on the random variables $B_a$ and $C_a$ hold with high probability. Then we show that the supremum in equation (\ref{eqn:bound_on_delta_appendix}) is bounded when these bounds hold.

\paragraph{Step 1} Specifically, since $B_0 + B_1 \sim \Bin(n, \alpha)$, by the Chernoff bounds and the assumption on $n$
\begin{align*}
\mathbb{P}^{\mathcal{A}}\left(B_0 + B_1 \geq \frac{3\alpha}{2}n\right) \leq e^{-\alpha n/12} \leq \frac{\delta}{3}.
\end{align*}
Similarly, $C_0 \sim \Bin(n, (1-\alpha)P_0)$ and $C_1 \sim \Bin(n, (1-\alpha)P_1)$ and since $P_0\leq P_1$ we get
\begin{align*}
\mathbb{P}^{\mathcal{A}}\left(C_0 \leq \frac{1-\alpha}{2}P_0n\right) \leq e^{-(1-\alpha) P_0 n/8} \leq \frac{\delta}{4}
\end{align*}
and
\begin{align*}
\mathbb{P}^{\mathcal{A}}\left(C_1 \leq \frac{1-\alpha}{2}P_1n\right) \leq e^{-(1-\alpha) P_1n/8} \leq \frac{\delta}{4}
\end{align*}
Therefore, by a union bound
\begin{align*}
\mathbb{P}^{\mathcal{A}}\left(\left(B_0 + B_1 \geq \frac{3\alpha}{2}n\right) \lor \left(C_0 \leq \frac{1-\alpha}{2}P_0n\right) \lor \left(C_1 \leq \frac{1-\alpha}{2}P_1n\right)\right) \leq \frac{\delta}{3} + \frac{\delta}{4} + \frac{\delta}{4} < \delta.
\end{align*}

\paragraph{Step 2} Now assume that all of $B_0 + B_1 < \frac{3\alpha}{2}n$, $C_0 > \frac{1-\alpha}{2}P_0n$, $C_1 > \frac{1-\alpha}{2}P_1n$ hold. This happens with probability at least $1-\delta$ according to Step 1. Let $h$ be an arbitrary classifier. Since we consider $h$ fixed, we will drop the dependence on $h$ from the notation for the rest of this proof and write $\gamma^p_a = \gamma^p_a(h), C^1_a = C^1_a(h)$, \etc .

We now prove that for both $a\in\{0,1\}$
\begin{equation}
\label{eqn:lemma_upper_bound_delta_par_what_to_prove}
\Delta_a \vcentcolon = \left|\gamma^p_a - \gamma^c_a\right| \leq \frac{B_a}{C_a + B_a}.
\end{equation}
For each $a\in\{0,1\}$, this can be shown as follows. First, if $\sum_{i=1}^n \mathbbm{1}\{a^p_i = a\} = B_a + C_a = 0$, then both $\gamma^p_a(h)$ and $\gamma^c_a(h)$ are equal to $0$, because of the convention that $\frac{0}{0} = 0$. In addition, $B_a = C_a = 0$. Therefore, inequality (\ref{eqn:lemma_upper_bound_delta_par_what_to_prove}) trivially holds. 

Similarly, if $B_a = 0$, but $C_a > 0$, then $\gamma^p_a(h) = \gamma^c_a(h)$ and so $\Delta_a = 0$ and (\ref{eqn:lemma_upper_bound_delta_par_what_to_prove}) holds.

Assume now that $B_a > 0$. Note that if $C_a = \sum_{i=1}^n \mathbbm{1}\{a^p_i = a, i\not\in\poisoned\} = 0$, then $$\Delta_a = \left|\gamma^p_a(h) - \gamma^c_a(h)\right| =  \left|\frac{B^1_a}{B_a} - 0\right| = \frac{B^1_a}{B_a} = \frac{B^1_a}{B_a + C_a} \leq \frac{B_a}{C_a + B_a}.$$
Finally, assume that both $C_a > 0$ and $B_a > 0$. Note that under any realization of the randomness of the data sampling and the adversary, for any $a\in\{0,1\}$
\begin{align*}
\gamma^p_a(h) & = \frac{\sum_{i=1}^n \mathbbm{1}\{h(x^p_i) = 1, a^p_i = a\}}{\sum_{i=1}^n \mathbbm{1}\{a^p_i = a\}} \\ & = \frac{\sum_{i=1}^n \mathbbm{1}\{h(x^p_i) = 1, a^p_i = a, i\not\in\poisoned\} + \sum_{i=1}^n \mathbbm{1}\{h(x^p_i) = 1, a^p_i = a, i\in\poisoned\}}{\sum_{i=1}^n \mathbbm{1}\{a^p_i = a, i\not\in\poisoned\} + \sum_{i=1}^n \mathbbm{1}\{a^p_i = a, i\in\poisoned\}} \\
& = \frac{C^1_a + B^1_a}{C_a + B_a}.
\end{align*} 
Therefore,
\begin{align*}
\Delta_a = \left|\gamma^p_a - \gamma^c_a\right| = \left|\frac{C^1_a + B^1_a}{C_a + B_a} - \frac{C^1_a}{C_a}\right| = \frac{B_a}{C_a + B_a}\left|\frac{C^1_a}{C_a} - \frac{B^1_a}{B_a}\right| \leq \frac{B_a}{C_a + B_a}
\end{align*}
and so (\ref{eqn:lemma_upper_bound_delta_par_what_to_prove}) holds in all cases. Therefore, we can bound the sum $\Delta_0 + \Delta_1$ as follows:

\begin{align*}
\Delta_0 + \Delta_1 & \leq \frac{B_0}{C_0 + B_0} + \frac{B_1}{C_1 + B_1} \\ & < \frac{B_0}{\frac{1-\alpha}{2}P_0 n + B_0} + \frac{B_1}{\frac{1-\alpha}{2}P_1 n + B_1} \\
& \leq \frac{B_0}{\frac{1-\alpha}{2}P_0 n + B_0} + \frac{B_1}{\frac{1-\alpha}{2}P_0 n + B_1} \\ 
& = \frac{B_0}{\frac{1-\alpha}{2}P_0 n + B_0}  + 1 - \frac{\frac{1-\alpha}{2}P_0 n}{\frac{1-\alpha}{2}P_0 n - B_0 + (B_0 + B_1)} \\
& < \frac{B_0}{\frac{1-\alpha}{2}P_0 n + B_0}  + 1 - \frac{\frac{1-\alpha}{2}P_0 n}{\frac{1-\alpha}{2}P_0 n - B_0 + \frac{3\alpha}{2}n} \\
& = 2 - (1-\alpha)P_0n\left(\frac{1}{(1-\alpha)P_0 n + 2B_0} + \frac{1}{(1-\alpha)P_0 n + 3\alpha n - 2B_0}\right) 
\end{align*}
Studying the function $f(x) = \frac{1}{(1-\alpha)P_0 n + 2x} + \frac{1}{(1-\alpha)P_0 n + 3\alpha n - 2x}$, we see that
\begin{align*}
f'(x) = 2\left(\frac{1}{((1-\alpha)P_0 n + 3\alpha n - 2x)^2} - \frac{1}{((1-\alpha)P_0 n + 2x)^2}\right).
\end{align*}
Note that $B_0 \leq B_0 + B_1 < \frac{3\alpha}{2}$, so we may assume $0 \leq x < \frac{3\alpha}{2}$. Therefore, both $(1-\alpha)P_0 n + 3\alpha n - 2x > 0$ and $(1-\alpha)P_0 n + 2x > 0$. Therefore, $f'(x) = 0$ if and only if $(1-\alpha)P_0 n + 3\alpha n - 2x = (1-\alpha)P_0 n + 2x$, that is, $x = \frac{3\alpha}{4}n$. Moreover $f'(x) < 0$ if $x\in [0,\frac{3\alpha}{4}n)$ and $f'(x) > 0$ if $x\in (\frac{3\alpha}{4}n, \frac{3\alpha}{2}n)$. Therefore, $f(x)$ is minimized at $x = \frac{3\alpha}{4}n$ and so
\begin{align*}
\Delta_0 + \Delta_1 & \leq 2 - (1-\alpha)P_0n\left(\frac{1}{(1-\alpha)P_0 n + 2B_0} + \frac{1}{(1-\alpha)P_0 n + 3\alpha n - 2B_0}\right)\\
& \leq 2 - (1-\alpha)P_0n\left(\frac{1}{(1-\alpha)P_0 n +  \frac{3\alpha}{2}n} + \frac{1}{(1-\alpha)P_0 n + 3\alpha n - \frac{3\alpha}{2}n}\right)\\
& = \frac{6\alpha}{2(1-\alpha)P_0 + 3\alpha}\\
& \leq \frac{6\alpha}{P_0 + 3\alpha} = \frac{2\alpha}{P_0/3 + \alpha}
\end{align*}
and hence (\ref{eqn:lemma_upper_bound_delta_par_what_to_prove}) holds in this case as well. Since the derivations hold for any classifier $h\in\mathcal{H}$, the result follows.
\end{proof}

For the rest of the section, we keep the notation $\Delta_a(h) = \left|\gamma^p_a(h) - \gamma^c_a(h)\right|$ for $a\in\{0,1\}$ and $\Delta^{par} = \frac{2\alpha}{P_0/3 + \alpha}$. 

Next we use the previous result and the technique of \citep{woodworth2017learning} for proving concentration results about conditional probability estimates to bound the probability of a large deviation of $\widehat{\mathcal{D}}^{par}(h)$ from $\mathcal{D}^{par}(h)$, for a fixed hypothesis $h\in\mathcal{H}$.

\begin{lemma}
\label{lemma:non-uniform-bound-demog-parity}
Let $h\in\mathcal{H}$ be a fixed hypothesis and $\mathbb{P}\in \mathcal{P}(\prodspace)$ be a fixed distribution. Denote $P_{a} = \mathbb{P}(A = a)$ for $a\in\{0,1\}$. Let $\mathcal{A}$ be any malicious adversary and denote by $\mathbb{P}^{\mathcal{A}}$ the probability distribution of the poisoned data $S^p$, under the random sampling of the clean data, the marked points and the randomness of the adversary. Then for any $n \geq \max \left\{\frac{8\log(8/\delta)}{(1-\alpha) P_0}, \frac{12\log(6/\delta)}{\alpha}\right\}$ and $\delta\in(0,1)$
\begin{equation}
\label{eqn:non-uniform-bound-demog-parity}
\mathbb{P}^{\mathcal{A}}\left(\left|\widehat{\mathcal{D}}^{par}(h) - \mathcal{D}^{par}(h)\right| \leq \Delta^{par} + 2\sqrt{\frac{\log(16/\delta)}{n(1-\alpha)P_0}}\right) \geq 1-\delta.
\end{equation}
\end{lemma}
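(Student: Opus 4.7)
The plan is to decompose $\bigl|\widehat{\mathcal{D}}^{par}(h) - \mathcal{D}^{par}(h)\bigr|$ by inserting the clean-data estimates as intermediates and then controlling each resulting piece with a different tool. Writing $\widehat{\mathcal{D}}^{par}(h) = |\gamma^p_0(h) - \gamma^p_1(h)|$ and $\mathcal{D}^{par}(h) = |\gamma_0(h) - \gamma_1(h)|$, two applications of the triangle inequality yield
$$\bigl|\widehat{\mathcal{D}}^{par}(h) - \mathcal{D}^{par}(h)\bigr| \leq \sum_{a\in\{0,1\}} \bigl|\gamma^p_a(h) - \gamma^c_a(h)\bigr| + \sum_{a\in\{0,1\}}\bigl|\gamma^c_a(h) - \gamma_a(h)\bigr|.$$
The first sum isolates the effect of the adversary, while the second involves only the (unknown) clean subset of $S^p$ and is purely a statistical concentration question.

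For the adversary-effect sum I would directly invoke Lemma~\ref{lemma:upper-bound-delta-par-app} with confidence parameter $\delta' = \delta/2$; the sample-size hypothesis $n \geq \max\{8\log(8/\delta)/((1-\alpha)P_0),\, 12\log(6/\delta)/\alpha\}$ is exactly what Lemma~\ref{lemma:upper-bound-delta-par-app} requires at level $\delta/2$, and it delivers $\sum_a |\gamma^p_a(h) - \gamma^c_a(h)| \leq \Delta^{par}$ with probability at least $1-\delta/2$, uniformly over $\mathcal{H}$ (hence certainly for the fixed $h$).

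For the clean-estimate sum I would condition on $C_a = \sum_{i=1}^n \mathbbm{1}\{a^p_i = a,\, i\notin \poisoned\}$. Given $C_a = m > 0$, the $m$ clean points with attribute $a$ are i.i.d.\ from $\mathbb{P}(\cdot \mid A=a)$, so $\gamma^c_a(h)$ is the empirical mean of $m$ i.i.d.\ Bernoulli$(\gamma_a(h))$ variables and Hoeffding gives $\mathbb{P}\bigl(|\gamma^c_a(h) - \gamma_a(h)| > t \mid C_a = m\bigr) \leq 2 e^{-2mt^2}$. To remove the random denominator I would next apply the multiplicative Chernoff bound to $C_a \sim \Bin(n, (1-\alpha)P_a)$ to obtain $C_a \geq (1-\alpha) P_a n /2$ with probability at least $1-\delta/8$ (the sample-size requirement is precisely what makes $e^{-(1-\alpha)P_a n/8} \leq \delta/8$, using $P_0 \leq P_a$). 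Choosing $t_a = \sqrt{\log(16/\delta)/\bigl((1-\alpha)P_a n\bigr)}$ then makes the Hoeffding tail at most $\delta/8$ on the event $C_a \geq (1-\alpha)P_a n/2$.

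A union bound over the four events (two Chernoff, two Hoeffding, one for each $a$) gives $|\gamma^c_a(h) - \gamma_a(h)| \leq t_a$ for both $a$, with probability at least $1-\delta/2$. Since $P_0 \leq P_1$ we have $t_1 \leq t_0 = \sqrt{\log(16/\delta)/((1-\alpha)P_0 n)}$, so the clean-estimate sum is bounded by $2\sqrt{\log(16/\delta)/((1-\alpha)P_0 n)}$. A final union with the adversary-effect event (total budget $\delta$) closes the proof. The only real subtlety is the conditioning step: one cannot apply Hoeffding directly because $C_a$ is random, so one must carry the Chernoff event that $C_a$ is not atypically small, and it is exactly this lower bound on $C_a$ that produces the $P_0$ in the denominator inside the square root.
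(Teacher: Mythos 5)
Your proposal is correct and follows essentially the same route as the paper: triangle-inequality decomposition into an adversary-effect term (handled by Lemma~\ref{lemma:upper-bound-delta-par-app} at level $\delta/2$) and a clean-estimate concentration term (handled by conditioning on the clean subset, Hoeffding given the conditioning, and a multiplicative Chernoff bound to lower-bound $C_a$). The paper conditions on the set $S^c_a$ of clean indices rather than only on the count $C_a$—which makes the "i.i.d.\ Bernoulli given the conditioning" step slightly more transparent—but the budget allocation and the final bound are the same.
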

\begin{proof}
Again we write $\gamma^p_a = \gamma^p_a(h), C^1_a = C^1_a(h)$, \etc since $h$ is fixed. First we study the concentration of the clean estimate $\frac{C^1_a}{C_a}$ around $\gamma_a$. To this end, denote by $S^c_{a} = \{i: a^p_i = a, i \not\in \poisoned\}$ the set of indexes of the poisoned data for which the protected group is $a$ and the corresponding point was not marked for the adversary. Notice that $S^c_{a}$ is a random variable and that $|S^c_{a}| = C_a$. Since $n \geq \frac{8\log(8/\delta)}{(1-\alpha)P_a}$ for both $a\in\{0,1\}$, we have
\begin{align*}
\mathbb{P}^{\mathcal{A}}\left(\left|\gamma^c_a - \gamma_{a}\right| > t\right) & = \sum_{S^c_{a}} \mathbb{P}^{\mathcal{A}}\left(\left|\gamma^c_a - \gamma_{a}\right| > t\middle| S^c_{a}\right)\mathbb{P}(S^c_{a}) \\
& \leq \mathbb{P}^{\mathcal{A}}\left(C_a \leq \frac{(1-\alpha)}{2}P_an\right) \\
& + \sum_{S^c_{a}: C_a > \frac{(1-\alpha)}{2}P_an}\mathbb{P}^{\mathcal{A}}\left(\left|\frac{C^1_a}{C_a} - \gamma_{a}\right| > t\middle| S^c_{a}\right)\mathbb{P}^{\mathcal{A}}(S^c_{a}) \\ 
& \leq \exp \left(-\frac{(1-\alpha)P_a n}{8}\right) + \sum_{S^p_{a}: C_a > \frac{(1-\alpha)}{2}P_an} 2 \exp\left(-2t^2C_a\right) \mathbb{P}^{\mathcal{A}}(S^c_{a}) \\ 
& \leq \frac{\delta}{8} + 2\exp\left(-t^2 (1-\alpha)P_{a}n\right),
\end{align*}
where the second inequality follows from Hoeffding's inequality. Note that this step crucially uses that the marked indexes are independent of the data. The triangle law gives
\begin{align*}
||\gamma^p_{0} - \gamma^p_{1}| - |\gamma_{0} - \gamma_{1}|| \leq |\gamma^p_{0} - \gamma^p_{1} - \gamma_{0} + \gamma_{1}| & \leq |\gamma^p_{0} - \gamma_{0}| + |\gamma^p_{1} - \gamma_{1}| \\
& \leq \left|\gamma^p_{0} - \gamma^c_0\right| + \left|\gamma^c_0 - \gamma_0\right| + \left|\gamma^p_{1} - \gamma^c_1\right| + \left|\gamma^c_1 - \gamma_1\right| \\
& = \left|\gamma^c_0 - \gamma_0\right| + \left|\gamma^c_1 - \gamma_1\right| + \Delta_0 + \Delta_1.
\end{align*}
Combining the previous two results (recall that we assume $P_0 \leq P_1$)
\begin{align*}
\mathbb{P}^{\mathcal{A}} &(||\gamma^p_{0} - \gamma^p_{1}| - |\gamma_{0} - \gamma_{1}|| > 2t + \Delta_0 + \Delta_1) \\ 
& \leq \mathbb{P}^{\mathcal{A}}\left(\left|\gamma^c_0 - \gamma_0\right| + \left|\gamma^c_1 - \gamma_1\right| + \Delta_0 + \Delta_1 > 2t + \Delta_0 + \Delta_1\right) \\ & \leq \mathbb{P}^{\mathcal{A}}\left(\left(\left|\gamma^c_0 - \gamma_0\right| > t\right) \lor \left(\left|\gamma^c_1 - \gamma_1\right| > t\right)\right) \\ & \leq \mathbb{P}^{\mathcal{A}}\left(\left|\gamma^c_0 - \gamma_0\right| > t\right) + \mathbb{P}^{\mathcal{A}}\left(\left|\gamma^c_1 - \gamma_1\right| > t\right) \\ & \leq \frac{\delta}{4} + 4\exp (-t^2n (1-\alpha) P_0).
\end{align*}
Setting $t = t_0 = \sqrt{\frac{\log(16/\delta)}{n(1-\alpha)P_0}}$ gives
\begin{align}
\label{eqn:high_prob_bound_with_E}
\mathbb{P}^{\mathcal{A}}\left(||\gamma^p_{0} - \gamma^p_{1}| - |\gamma_{0} - \gamma_{1}|| > \Delta_0 + \Delta_1 + 2\sqrt{\frac{\log(16/\delta)}{n(1-\alpha)P_0}}\right)  \leq \frac{\delta}{4} + 4\frac{\delta}{16} = \frac{\delta}{2}.
\end{align}
In addition Lemma \ref{lemma:upper-bound-delta-par-app} gives
\begin{align}
\label{eqn:bound_on_E_inside_proof}
\mathbb{P}^{\mathcal{A}}\left(\Delta_0 + \Delta_1 > \Delta^{par}\right) \leq \frac{\delta}{2}.
\end{align}
Using (\ref{eqn:high_prob_bound_with_E}) and (\ref{eqn:bound_on_E_inside_proof}) we obtain that:
\begin{align*}
& \mathbb{P}^{\mathcal{A}}\left(||\gamma^p_{0} - \gamma^p_{1}| - |\gamma_{0} - \gamma_{1}|| \leq \Delta^{par} + 2\sqrt{\frac{\log(16/\delta)}{N(1-\alpha)P_0}}\right) \\
& \geq \mathbb{P}^{\mathcal{A}}\left(\left(||\gamma^p_{0} - \gamma^p_{1}| - |\gamma_{0} - \gamma_{1}|| \leq \Delta_0 + \Delta_1 + 2\sqrt{\frac{\log(16/\delta)}{N(1-\alpha)P_0}}\right) \land \left(\Delta_0 + \Delta_1 \leq  \Delta^{par}\right)\right) \\
& \geq 1 - \frac{\delta}{2} - \frac{\delta}{2} = 1 - \delta.
\end{align*}
\end{proof}

Finally, we show how to extend the previous result to hold uniformly over the whole hypothesis space, provided that $\mathcal{H}$ has a finite VC-dimension $d \vcentcolon = VC(\mathcal{H})$

\begin{lemma}
\label{lemma:uniform-bound-demog-par}
Under the setup of Lemma \ref{lemma:non-uniform-bound-demog-parity}, assume additionally that $\mathcal{H}$ has a finite VC-dimension $d$. Then for any $n \geq \max \left\{\frac{8\log(8/\delta)}{(1-\alpha) P_0}, \frac{12\log(6/\delta)}{\alpha}, \frac{d}{2}\right\}$ and $\delta\in(0,1)$
\begin{align}
\label{eqn:uniform-bound-demog-parity}
\mathbb{P}^{\mathcal{A}}_{S^p} & \left(\sup_{h\in\mathcal{H}}|\widehat{\mathcal{D}}^{par}(h) - \mathcal{D}^{par}(h)| \leq \Delta^{par} +  16 \sqrt{\frac{2d\log(\frac{2en}{d}) + 2\log(48/\delta)}{(1-\alpha)P_0n}}\right) \geq 1 - \delta.
\end{align}
\end{lemma}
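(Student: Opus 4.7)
The plan is to reduce the uniform deviation of $\widehat{\mathcal{D}}^{par}(h)$ from $\mathcal{D}^{par}(h)$ to two pieces: the adversary-induced gap between the corrupted estimates and the clean ones, which is already controlled uniformly by Lemma~\ref{lemma:upper-bound-delta-par-app}, and the statistical fluctuation of the clean conditional-probability estimates $\gamma^c_a(h)$ around the population values $\gamma_a(h)$, which has to be made uniform over $\mathcal{H}$. Using the triangle inequality exactly as at the end of the proof of Lemma~\ref{lemma:non-uniform-bound-demog-parity},
\[
\bigl|\widehat{\mathcal{D}}^{par}(h) - \mathcal{D}^{par}(h)\bigr|
\;\le\;
\Delta_0(h) + \Delta_1(h) \;+\; |\gamma^c_0(h) - \gamma_0(h)| + |\gamma^c_1(h) - \gamma_1(h)|,
\]
so it suffices, after paying $\delta/3$ probability for the event of Lemma~\ref{lemma:upper-bound-delta-par-app}, to bound $\sup_{h\in\mathcal{H}}|\gamma^c_a(h)-\gamma_a(h)|$ for each $a\in\{0,1\}$.

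First I would handle the random denominator $C_a$: by the Chernoff tail used in Step~1 of the proof of Lemma~\ref{lemma:upper-bound-delta-par-app}, the event $\mathcal{E} = \{C_0 \ge \tfrac{1-\alpha}{2}P_0 n \text{ and } C_1\ge\tfrac{1-\alpha}{2}P_1 n\}$ has probability at least $1-\delta/3$ (absorbing two more terms $\delta/12$ each by choosing $n\ge 8\log(24/\delta)/((1-\alpha)P_0)$, which is implied by the stated lower bound on $n$ up to constants). On $\mathcal{E}$, I would condition on the identity of the clean indices $S^c_a = \{i: a^p_i=a,\ i\notin\mathfrak P\}$; since the marking is independent of the data, conditional on $|S^c_a|=m$ the corresponding covariate-label pairs are i.i.d.\ draws from $\mathbb{P}(\cdot\mid A=a)$. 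Standard VC symmetrization applied to the binary function class $\{x\mapsto \mathbbm{1}\{h(x)=1\}: h\in\mathcal{H}\}$, which inherits VC dimension $d$, then yields that with probability at least $1-\delta/6$ (conditionally)
\[
\sup_{h\in\mathcal{H}}|\gamma^c_a(h)-\gamma_a(h)| \;\le\; 8\sqrt{\frac{2d\log(2em/d)+2\log(48/\delta)}{m}}.
\]
Plugging in the lower bound $m = C_a \ge \tfrac{1-\alpha}{2}P_0 n$ valid on $\mathcal{E}$, and noting $2em/d \le 2en/d$, yields a bound of the form $16\sqrt{(2d\log(2en/d)+2\log(48/\delta))/((1-\alpha)P_0 n)}$ for each $a\in\{0,1\}$.

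Finally I would combine the three good events (Lemma~\ref{lemma:upper-bound-delta-par-app}, and uniform clean-data convergence for $a=0$ and $a=1$) by a union bound, whose total failure probability is at most $\delta/3+\delta/6+\delta/6 \le \delta$ after reindexing constants; the sum of the two statistical-fluctuation terms is twice the quantity above but the slightly larger constant $16$ in the stated bound absorbs this, giving exactly the claimed inequality. The main technical obstacle is ensuring that the symmetrization/Sauer-lemma step is applied to a fixed-size i.i.d.\ sample even though the number of clean points in each protected group is random; the clean argument is to condition on $|S^c_a|$ (which is legitimate because of the independence between marking and data), to derive a bound that is monotone decreasing in the sample size, and then to use the high-probability lower bound on $C_a$ from $\mathcal{E}$. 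The other delicate point is verifying that the relevant function class $\{x \mapsto \mathbbm 1\{h(x)=1\}\}$ retains VC dimension $d$ once restricted to the subpopulation $\{A=a\}$, which follows because restricting the domain can only weaken shattering.
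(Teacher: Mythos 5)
Your approach is correct but takes a genuinely different route from the paper's. You condition on the set $S^c_a = \{i : a^p_i = a,\ i \notin \poisoned\}$ of clean indices in protected group $a$, which makes the surviving covariates an i.i.d.\ sample from $\mathbb{P}(\cdot\mid A=a)$, then apply standard VC uniform convergence to the binary class $\{\mathbbm{1}\{h(\cdot)=1\}:h\in\mathcal{H}\}$ on this conditional subsample of random size $C_a$, and finally trade away the randomness of $C_a$ via the high-probability event $C_a\ge \tfrac{1-\alpha}{2}P_0 n$. The paper instead performs a bespoke symmetrization at the level of the full corrupted datasets: it introduces a ghost corrupted dataset $S^1\sim\mathbb{P}^{\mathcal{A}}$, proves a tailored symmetrization inequality for the ratio estimators $\gamma^c_a(h,\cdot)$, and applies Sauer's lemma to the growth function over all $2n$ points (which is why the final bound carries $\log(2en/d)$ with $n$, not $C_a$, inside). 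Both strategies are sound; yours is more modular because it reuses the standard VC theorem as a black box, while the paper's sidesteps the need to re-establish VC convergence on a random-size conditional subsample, which is precisely the ``care'' the proof sketch flags because the objective is a ratio rather than a plain empirical average. Two accounting caveats in your writeup deserve a second pass. First, substituting $C_a\ge\tfrac{1-\alpha}{2}P_0 n$ into a per-$a$ bound of the form $8\sqrt{(\cdot)/C_a}$ yields roughly $8\sqrt{2}$ per $a$, hence $\approx 16\sqrt{2}$ after summing over $a\in\{0,1\}$; your claim that the stated $16$ ``absorbs'' twice a $16\sqrt{\cdot}$ contribution cannot hold as written, since $2\cdot16>16$. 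The paper hits exactly $16$ because its symmetrization is calibrated so that each $a$ contributes $8\sqrt{\cdot}$ with $(1-\alpha)P_0 n$ already in the denominator, and it then sums the two. This is a fixable constants issue, not a structural flaw. Second, your $\delta/3$ budget for Lemma~\ref{lemma:upper-bound-delta-par-app} tightens the required sample size to $n\ge 8\log(12/\delta)/((1-\alpha)P_0)$, slightly stronger than the stated $n\ge 8\log(8/\delta)/((1-\alpha)P_0)$ (which corresponds to the paper's $\delta/2$ split); you flag this ``up to constants,'' but the stated hypothesis on $n$ would need to be adjusted accordingly.
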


\begin{proof}
From Lemma \ref{lemma:upper-bound-delta-par-app}, we have that whenever $n \geq \max \left\{\frac{8\log(8/\delta)}{(1-\alpha) P_0}, \frac{12\log(6/\delta)}{\alpha}\right\}$ and $\delta\in(0,1)$
\begin{align}
\label{eqn:bound_on_delta_appendix_uniform}
\mathbb{P}^{\mathcal{A}}\left(\sup_{h\in\mathcal{H}}\left(\Delta_0(h) + \Delta_1(h) \right) \geq \Delta^{par}\right) < \frac{\delta}{2}.
\end{align}
Additionally, in the proof of Lemma \ref{lemma:non-uniform-bound-demog-parity} we showed that for a fixed classifier $h\in\mathcal{H}$ for any $\delta\in(0,1), t\in (0,1)$ and both $a\in\{0,1\}$, we have
\begin{align}
\label{eqn:clean-concentration-non-uniform-demog-par}
\mathbb{P}^{\mathcal{A}}\left(\left|\gamma^c_a(h) - \gamma_{a}(h)\right| > t\right) & \leq \exp \left(-\frac{(1-\alpha)P_a n}{8}\right) + 2\exp\left(-t^2 (1-\alpha)P_{a}n\right) \notag \\ 
& \leq 3\exp\left(-\frac{t^2 (1-\alpha)P_{a}n}{8}\right).
\end{align}

The proof consists of two steps. In Steps 1 and 2 we show how to extend inequality (\ref{eqn:clean-concentration-non-uniform-demog-par}) to hold uniformly over $\mathcal{H}$. Then, we combine the two uniform bounds with a similar argument as in the proof of Lemma \ref{lemma:non-uniform-bound-demog-parity}.

The first step uses the classic symmetrization technique \citep{vapnik2013nature} for proving bounds uniformly over hypothesis spaces of finite VC dimension. However, since the objective is different from the 0-1 loss, care is needed to ensure that the proof goes through, so we present it here in full detail.

\paragraph{Step 1} To make the dependence of the left-hand side of (\ref{eqn:clean-concentration-non-uniform-demog-par}) on both $h$ and the data $S^p$ explicit, we set $\gamma^c_a(h, S^p) \vcentcolon = \frac{C^1_a(h)}{C_a}$.

Introduce a ghost sample $S^1 = \{(x^1_i, a^1_i, y^1_i)\}_{i=1}^n$ also sampled in an \iid manner from $\mathbb{P}^{\mathcal{A}}$, that is, $S^1$ is another, independent poisoned dataset \footnote{Formally, we associate $S^1$ also with a set $\poisoned_1$ of marked indexes.}. Let $\gamma^c_a(h, S^1)$ be the empirical estimate of $\gamma_a(h)$ based on $S^1$.

First we show a symmetrization inequality for the $\gamma_a$ measures
\begin{align}
\label{eqn:symmetrization-demog-par}
\mathbb{P}^{\mathcal{A}}_{S^p}\left(\sup_{h\in\mathcal{H}}\left|\gamma_a(h) - \gamma^c_a(h, S^p)\right| \geq t\right) & \leq 
 2 \mathbb{P}^{\mathcal{A}}_{S^p, S^1}\left(\sup_{h\in\mathcal{\mathcal{H}}}\left|\gamma^c_a(h, S^1) - \gamma^c_a(h, S^p)\right| \geq t/2\right),
\end{align}
for any constant $1 > t \geq 2\sqrt{\frac{8\log(6)}{(1-\alpha)P_0n}}$.
 
Indeed, let $h^*$ be the hypothesis achieving the supremum on the left-hand side \footnote{If the supremum is not attained, this argument can be repeated for each element of a sequence of classifiers approaching the supremum}. Note that
\begin{align*}
\mathbbm{1}(\left|\gamma_a(h^{*}) - \gamma^c_a(h^*,S^p)\right| \geq t) & \mathbbm{1}(\left|\gamma_a(h^*) - \gamma^c_a(h^*, S^1)\right| \leq t/2) \\ & \leq \mathbbm{1}(\left|\gamma^c_a(h^*,S^1) - \gamma^c_a(h^*,S^p)\right| \geq t/2).
\end{align*}
Taking expectation with respect to $S^1$
\begin{align*}
\mathbbm{1}(\left|\gamma_a(h^{*}) - \gamma^c_a(h^*,S^p)\right| \geq t) & \mathbb{P}^{\mathcal{A}}_{S^1}(\left|\gamma_a(h^*) - \gamma^c_a(h^*,S^1)\right| \leq t/2) \\ & \leq \mathbb{P}^{\mathcal{A}}_{S^1}(\left|\gamma^c_a(h^*,S^1) - \gamma^c_a(h^*,S^p)\right| \geq t/2).
\end{align*}
Now using Lemma \ref{lemma:non-uniform-bound-demog-parity}
\begin{align*}
\mathbb{P}^{\mathcal{A}}_{S^1}\left(\left|\gamma_a(h^*) - \gamma^c_a(h^*,S^1)\right| \leq t/2\right) & \geq \mathbb{P}^{\mathcal{A}}_{S^1}\left(\left|\gamma_a(h^*) - \gamma^c_a(h^*, S^1)\right| \leq \sqrt{\frac{8\log(6)}{(1-\alpha)P_an}}\right) \\
& \geq 1 - \frac{1}{2} \\
& = \frac{1}{2}.
\end{align*}
so
\begin{align*}
\frac{1}{2}\mathbbm{1}(\left|\gamma_a(h^{*}) - \gamma^c_a(h^*,S^p)\right| \geq t) \leq \mathbb{P}^{\mathcal{A}}_{S^1}(\left|\gamma^c_a(h^*,S^1) - \gamma^c_a(h^*,S^p)\right| \geq t/2).
\end{align*}
Taking expectation with respect to $S^p$
\begin{align*}
\mathbb{P}^{\mathcal{A}}_{S^p}(\left|\gamma_a(h^{*}) - \gamma^c_a(h^*,S^p)\right| \geq t) & \leq 2 \mathbb{P}^{\mathcal{A}}_{S^p, S^1}(\left|\gamma^c_a(h^*,S^1) - \gamma^c_a(h^*,S^p)\right| \geq t/2) \\
& \leq 2 \mathbb{P}^{\mathcal{A}}_{S^p, S^1}(\sup_{h\in\mathcal{H}}\left|\gamma^c_a(h,S^1) - \gamma^c_a(h,S^p)\right| \geq t/2).
\end{align*}

\paragraph{Step 2} Next we use the growth function of $\mathcal{H}$ and the symmetrization inequality (\ref{eqn:symmetrization-demog-par}) to bound the large deviations of $\gamma^c_a(h)$ uniformly over $\mathcal{H}$.

Specifically, gives $n$ points $x_1, \ldots, x_n \in \mathcal{X}$, denote $$ \mathcal{H}_{x_1, \ldots, x_n} \{(h(x_1), \ldots, h(x_n)): h \in \mathcal{H}\}.$$ 

Then define the growth function of $\mathcal{H}$ as
\begin{equation*}
S_{\mathcal{H}}(n) = \sup_{x_1, \ldots, x_n} |\mathcal{H}_{x_1, \ldots, x_n}|.
\end{equation*}
We will use that well-known Sauer's lemma (see, for example, \citep{bousquet2003introduction}), which states that whenever $n\geq d$, $S_{\mathcal{H}}(n) \leq \left(\frac{en}{d}\right)^d$

Notice that given the two datasets $S^p, S^1$ and the corresponding sets of marked indexes, the values of $\gamma^c_a(h, S^p)$ and $\gamma^c_a(h, S^1)$ depend only on the values of $h$ on $S^p$ and $S^1$ respectively. Therefore for any $1 > t \geq  2\sqrt{\frac{8\log(6)}{(1-\alpha)P_0n}}$,
\begin{align*}
\mathbb{P}^{\mathcal{A}}_{S^p} & \left(\sup_{h\in\mathcal{H}}\left|\gamma_a(h) - \gamma^c_a(h, S^p)\right| \geq t\right) \\ 
& \leq 2 \mathbb{P}^{\mathcal{A}}_{S^p, S^1}\left(\sup_{h\in\mathcal{\mathcal{H}}}\left|\gamma^c_a(h, S^1) - \gamma^c_a(h, S^p)\right| \geq t/2\right) \\
 & \leq 2 S_{\mathcal{H}}(2n) \mathbb{P}^{\mathcal{A}}_{S^p, S^1}\left(\left|\gamma^c_a(h, S^1) - \gamma^c_a(h, S^p)\right| \geq t/2\right) \\
 & \leq 2 S_{\mathcal{H}}(2n) \mathbb{P}^{\mathcal{A}}_{S^p, S^1}\left(\left|\gamma^c_a(h, S^1) - \gamma^c_a(h)\right| \geq t/4 \lor  \left|\gamma^c_a(h, S^p) - \gamma^c_a(h)\right| \geq t/4\right) \\
 & \leq 4 S_{\mathcal{H}}(2n) \mathbb{P}^{\mathcal{A}}_{S^p}\left(\left|\gamma^c_a(h, S^p) - \gamma^c_a(h)\right| \geq t/4\right)\\
 & \leq 12 S_{\mathcal{H}}(2n) \exp\left(-\frac{t^2(1-\alpha)P_an}{128}\right).
\end{align*}
Using $P_0 \leq P_1$ and Sauer's lemma, whenever $2n \geq d$ we have
\begin{equation*}
\mathbb{P}^{\mathcal{A}}_{S^p}\left(\sup_{h\in\mathcal{H}}\left|\gamma_a(h) - \gamma^c_a(h, S^p)\right| \geq t\right)  \leq 12 \left(\frac{2en}{d}\right)^d  \exp\left(-\frac{t^2(1-\alpha)P_0 n}{128}\right).
\end{equation*}
Using inversion, we get that
\begin{equation}
\label{eqn:clean-concentration-uniform-demog-par}
\mathbb{P}^{\mathcal{A}}_{S^p}\left(\sup_{h\in\mathcal{H}}\left|\gamma_a(h) - \gamma^c_a(h, S^p)\right| \geq  8 \sqrt{\frac{2d\log(\frac{2en}{d}) + 2\log(48/\delta)}{(1-\alpha)P_0n}}\right)  \leq \frac{\delta}{4},
\end{equation}
whenever
\begin{equation*}
1 > 8 \sqrt{2\frac{d\log(\frac{2en}{d}) + 2\log(12/\delta)}{(1-\alpha)P_0n}} \geq 2\sqrt{\frac{8\log(6)}{(1-\alpha)P_0n}}.
\end{equation*}
It's easy to see that the right inequality holds whenever $\delta < 1$ and $2n \geq d$. In addition, inequality (\ref{eqn:clean-concentration-uniform-demog-par}) trivially holds if the left inequality is not fulfilled. Therefore, (\ref{eqn:clean-concentration-uniform-demog-par}) holds whenever $2n \geq d$.

\paragraph{Step 3} Finally, we use (\ref{eqn:bound_on_delta_appendix_uniform}) and (\ref{eqn:clean-concentration-uniform-demog-par}) to proof the lemma. Recall from the proof of Lemma \ref{lemma:non-uniform-bound-demog-parity} that
\begin{align*}
|\widehat{\mathcal{D}}^{par}(h) - \mathcal{D}^{par}(h)| & = ||\gamma^c_0(h, S^p) - \gamma^c_1(h, S^p)| - |\gamma_0(h) - \gamma_1(h)|| \\ & \leq \left|\gamma^c_0(h, S^p) - \gamma_0(h)\right| + \left| \gamma^c_1(h, S^p) - \gamma_1(h)\right| + \Delta_0(h) + \Delta_1(h).
\end{align*}
Therefore, 
\begin{align*}
\sup_{h\in\mathcal{H}}|\widehat{\mathcal{D}}^{par}(h) - \mathcal{D}^{par}(h)| \leq \sup_{h\in\mathcal{H}}\left|\gamma^c_0(h, S^p) - \gamma_0(h)\right| & + \sup_{h\in\mathcal{H}}\left| \gamma^c_1(h, S^p) - \gamma_1(h)\right| \\ 
& + \sup_{h\in\mathcal{H}}(\Delta_0(h) + \Delta_1(h)).
\end{align*}
Now, using the union bound and inequalities (\ref{eqn:bound_on_delta_appendix_uniform}) and (\ref{eqn:clean-concentration-uniform-demog-par}), whenever $$n \geq \max \left\{\frac{8\log(8/\delta)}{(1-\alpha) P_0}, \frac{12\log(6/\delta)}{\alpha}, \frac{d}{2}\right\}$$ we get
\begin{align*}
\mathbb{P}^{\mathcal{A}}_{S^p} & \left(\sup_{h\in\mathcal{H}}|\widehat{\mathcal{D}}^{par}(h) - \mathcal{D}^{par}(h)| \geq \Delta^{par} +  16 \sqrt{\frac{2d\log(\frac{2en}{d}) + 2\log(48/\delta)}{(1-\alpha)P_0n}}\right) \\
& \leq \mathbb{P}^{\mathcal{A}}_{S^p}\left(\sup_{h\in\mathcal{H}}\left|\gamma_0(h) - \gamma^c_0(h, S^p)\right| \geq  8 \sqrt{\frac{2d\log(\frac{2en}{d}) + 2\log(48/\delta)}{(1-\alpha)P_0n}}\right) \\
& + \mathbb{P}^{\mathcal{A}}_{S^p}\left(\sup_{h\in\mathcal{H}}\left|\gamma_1(h) - \gamma^c_1(h, S^p)\right| \geq  8 \sqrt{\frac{2d\log(\frac{2en}{d}) + 2\log(48/\delta)}{(1-\alpha)P_0n}}\right) \\
& + \mathbb{P}^{\mathcal{A}}\left(\sup_{h\in\mathcal{H}}\left(\Delta_0(h) + \Delta_1(h) \right) \geq \Delta^{par}\right) \\
& \leq \frac{\delta}{4} + \frac{\delta}{4} + \frac{\delta}{2} = \delta
\end{align*}
\end{proof}

\subsubsection{Concentration for equal opportunity}
\label{sec:concentration-lemmas-proofs-equal-opp}
We introduce similar notation as in Section \ref{sec:concentration-lemmas-proofs-demog-par}, but tailored to the equal opportunity conditional probabilities.

We use the notation $C_{1a} = \sum_{i=1}^n \mathbbm{1}\{i: a^p_i = a, y^p_i = 1, i\not\in\poisoned\}|$ for the number of points in $S^p$ that \textit{were not} marked (are \textit{clean}) and  contain a point from protected group $a$ and label $y = 1$ and $B_{1a} = \sum_{i=1}^n \mathbbm{1}\{i: a^p_i = a, y^p_i = 1, i\in\poisoned\}|$ for the number of points in $S^p$ that \textit{were} marked (are potentially \textit{bad}) and  contain a point from protected group $a$ and label $y = 1$. Note that $B_{10} + B_{11}$ is the total number of poisoned points for which $y = 1$ and so is at most $\Bin(n, \alpha)$. Similarly, denote by $C^{1}_{1a}(h) = \sum_{i=1}^n \mathbbm{1}\{i: h(x^p_i) = 1, a^p_i = a, y^p_1 = 1, i\not\in\poisoned\}|$ and $B^{1}_{1a}(h) = \sum_{i=1}^n \mathbbm{1}\{i: h(x^p_i) = 1, a^p_i = a, y^p_i = 1, i\in\poisoned\}|$.

\noindent Denote
\begin{align*}
\gamma^p_{1a}(h) = \frac{\sum_{i=1}^n \mathbbm{1}\{h(x^p_i) = 1, a^p_i = a, y^p_1 = 1\}}{\sum_{i=1}^n \mathbbm{1}\{a^p_i = a, y^p_i = 1\}}
\end{align*}
and $$\gamma_{1a}(h) = \mathbb{P}(h(X) = 1| A = a, Y = 1),$$ so that $\widehat{\mathcal{D}}^{opp}(h) = |\gamma^p_{10}(h) - \gamma^p_{11}(h)|$ and $\mathcal{D}^{opp}(h) = |\gamma_{10}(h) - \gamma_{11}(h)|$. Note that $\gamma^p_{1a}(h)$ is an estimate of a conditional probability \textit{based on the corrupted data}. We now introduce the corresponding estimate that only uses the clean (but unknown) subset of the training set $S^p$:
\begin{align*}
\gamma^c_a(h) = \frac{C^1_a(h)}{C_a(h)} = \frac{\sum_{i=1}^n \mathbbm{1}\{h(x^p_i) = 1, a^p_i = a, y^p_i = 1, i\not\in\poisoned\}}{\sum_{i=1}^n \mathbbm{1}\{a^p_i = a, y^p_i = 1, i\not\in\poisoned\}}.
\end{align*}
Similarly to before, we first bound how far the corrupted estimates $\gamma^p_{1a}(h)$ of $\gamma_{1a}(h)$ are from the clean estimates $\gamma^c_{1a}(h)$, uniformly over the hypothesis space $\mathcal{H}$:

\begin{lemma}
\label{lemma:upper-bound-delta-opp}
If $n \geq \max \left\{\frac{8\log(4/\delta)}{(1-\alpha)P_0}, \frac{12\log(3/\delta)}{\alpha}\right\}$, we have
\begin{align}
\label{eqn:bound_on_delta_appendix_opp}
\mathbb{P}^{\mathcal{A}}\left(\sup_{h \in \mathcal{H}}\left(\left|\gamma^p_{10}(h) - \gamma^c_{10}(h)\right| + \left|\gamma^p_{11}(h) - \gamma^c_{11}(h)\right|\right) \geq \frac{2\alpha}{P_{10}/3 + \alpha}\right) < \delta.
\end{align}
\end{lemma}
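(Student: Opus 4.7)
The plan is to mirror the proof of Lemma \ref{lemma:upper-bound-delta-par-app} essentially verbatim, with the protected-group counts $B_a, C_a$ replaced by their conditioned-on-$Y{=}1$ analogs $B_{1a}, C_{1a}$. I will first isolate a high-probability event on which (i) the total corruption within the $Y{=}1$ stratum is small, and (ii) both clean subpopulations $\{i \not\in \poisoned : a^c_i = a, y^c_i = 1\}$ are not too small. Then I will derive a deterministic pointwise bound on each $\Delta_{1a}(h) := |\gamma^p_{1a}(h) - \gamma^c_{1a}(h)|$ that is already uniform in $h$, and finally I will optimize the resulting expression over the allowed size of $B_{10}$.

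For Step 1, observe that a point can contribute to $B_{10} + B_{11}$ only if it is marked, hence $B_{10} + B_{11} \leq |\poisoned| \sim \Bin(n,\alpha)$. The Chernoff upper tail, together with $n \geq 12\log(3/\delta)/\alpha$, gives $B_{10} + B_{11} \leq \tfrac{3\alpha n}{2}$ with probability at least $1 - \delta/3$. For the clean counts, since $i \not\in \poisoned$ implies $(a^p_i, y^p_i) = (a^c_i, y^c_i)$ and marking is independent of the data, we have $C_{1a} \sim \Bin(n, (1-\alpha)P_{1a})$, so the Chernoff lower tail yields $C_{1a} > \tfrac{(1-\alpha)P_{1a}n}{2}$ with probability at least $1 - \delta/4$ for each $a \in \{0,1\}$, using the sample-size hypothesis (reading $P_0$ there as $P_{10}$ and noting $P_{10} \leq P_{11}$). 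A union bound makes all three events simultaneous with probability at least $1 - \delta$.

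For Step 2, exactly as in Lemma \ref{lemma:upper-bound-delta-par-app}, decompose $\gamma^p_{1a}(h) = (C^1_{1a}(h) + B^1_{1a}(h))/(C_{1a} + B_{1a})$ whenever the denominator is nonzero, and split into the cases $B_{1a} + C_{1a} = 0$, $B_{1a} = 0 < C_{1a}$, $C_{1a} = 0 < B_{1a}$, and both positive. In every case one obtains $\Delta_{1a}(h) \leq \tfrac{B_{1a}}{C_{1a}+B_{1a}}$, with the right-hand side independent of $h$, so the supremum over $\mathcal{H}$ is free. For Step 3, on the high-probability event I upper-bound the sum $\Delta_{10}(h) + \Delta_{11}(h)$ by $\tfrac{B_{10}}{C_{10}+B_{10}} + \tfrac{B_{11}}{C_{11}+B_{11}}$, replace each $C_{1a}$ by its lower bound (and $P_{11}$ by $P_{10}$ since this only weakens the bound), and maximize the resulting function of $B_{10}$ subject to $0 \leq B_{10} \leq B_{10} + B_{11} \leq \tfrac{3\alpha n}{2}$. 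The calculus is identical to the demographic-parity case: the maximum occurs at $B_{10} = \tfrac{3\alpha n}{4}$, and the resulting value simplifies to $\tfrac{6\alpha}{2(1-\alpha)P_{10} + 3\alpha} \leq \tfrac{2\alpha}{P_{10}/3 + \alpha}$.

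The main obstacle is purely notational: one must verify that conditioning on $Y = 1$ does not break either the independence needed for the Chernoff bounds (it does not, because the marking process is independent of $(X,A,Y)$) or the clean/bad decomposition of $\gamma^p_{1a}$ (it does not, since the extra factor $\mathbbm{1}\{y^p_i = 1\}$ splits cleanly across marked and unmarked indices). No genuinely new ingredient is required beyond the structure of the proof of Lemma \ref{lemma:upper-bound-delta-par-app}.
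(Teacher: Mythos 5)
Your proposal is correct and mirrors the paper's proof essentially verbatim: the same high-probability event on $B_{10}+B_{11}$ (bounded via $|\poisoned|\sim\Bin(n,\alpha)$) and on $C_{10},C_{11}$ (each $\Bin(n,(1-\alpha)P_{1a})$), the same deterministic pointwise bound $\Delta_{1a}(h)\leq B_{1a}/(C_{1a}+B_{1a})$ via the four-case split, and the same optimization over $B_{10}$ with $P_{11}$ relaxed to $P_{10}$. You also correctly note that the $P_0$ appearing in the stated hypothesis must be read as $P_{10}$, which is indeed what the paper's own proof uses.
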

\begin{proof}
Similarly to the proof of Lemma \ref{lemma:upper-bound-delta-par-app}, we first show that certain bounds on $B_{1a}$ and $C_{1a}$ hold with high probability. Then we show that the supremum in (\ref{eqn:bound_on_delta_appendix_opp}) is bounded whenever these bounds hold. 

\paragraph{Step 1} Note that $B_{10} + B_{11} \leq B_0 + B_1 \sim \Bin(n, \alpha)$, and so
\begin{align*}
\mathbb{P}^{\mathcal{A}}\left(B_{10} + B_{11} \geq \frac{3\alpha}{2}n\right) \leq \mathbb{P}^{\mathcal{A}}\left(B_{0} + B_{1} \geq \frac{3\alpha}{2}n\right) \leq  e^{-\alpha n/12} \leq \frac{\delta}{3}.
\end{align*}
Similarly, $C_{10} \sim \Bin(n, (1-\alpha)P_{1a})$ and $C_{11} \sim \Bin(n, (1-\alpha)P_{11})$ and so
\begin{align*}
\mathbb{P}^{\mathcal{A}}\left(C_{10} \leq \frac{1-\alpha}{2}P_{10}n\right) \leq e^{-(1-\alpha) P_{10} n/8} \leq \frac{\delta}{4}
\end{align*}
and
\begin{align*}
\mathbb{P}^{\mathcal{A}}\left(C_{11} \leq \frac{1-\alpha}{2}P_{11}n\right) \leq e^{-(1-\alpha) P_{11}n/8} \leq \frac{\delta}{4}.
\end{align*}
Now since $n \geq \max \left\{\frac{8\log(4/\delta)}{(1-\alpha)P_{10}}, \frac{12\log(3/\delta)}{\alpha}\right\}$ and $P_{10} \leq P_{11}$
\begin{align}
\label{eqn:bound-on-rare-events}
\mathbb{P}^{\mathcal{A}}\left(\left(B_{10} + B_{11} \geq \frac{3\alpha}{2}n\right) \lor \left(C_{10} \leq \frac{1-\alpha}{2}P_{10}n\right) \lor \left(C_{11} \leq \frac{1-\alpha}{2}P_{11}n\right)\right) & \leq \frac{\delta}{3} +  \frac{\delta}{4} +  \frac{\delta}{4} \notag \\ & < \delta,
\end{align}

\paragraph{Step 2} Now assume that all of $B_{10} + B_{11} < \frac{3\alpha}{2}n, C_{10} > \frac{1-\alpha}{2}P_{10}n, C_{11} > \frac{1-\alpha}{2}P_{11}n$ hold.

Consider an arbitrary, fixed $h\in\mathcal{H}$. Since $h$ is fixed, we drop the dependence on $h$ from the notation for the rest of the proof and write $\gamma^p_{1a} = \gamma^p_{1a}(h), C^1_{1a} = C^1_{1a}(h)$ \etc .

We now prove that for both $a\in \{0,1\}$
\begin{equation}
\label{eqn:lemma_upper_bound_equal_opp_what_to_prove}
\Delta_{1a} \coloneqq |\gamma^p_a - \gamma^c_a| \leq \frac{B_{1a}}{C_{1a} + B_{1a}}.
\end{equation}
For each $a\in\{0,1\}$, this can be shown as follows. First, if $\sum_{i=1}^n \mathbbm{1}\{a^p_i = a, y^p_i = 1\} = B_{1a} + C_{1a} = 0$, then both $\gamma^p_{1a}(h)$ and $\gamma^c_{1a}(h)$ are equal to $0$, because of the convention that $\frac{0}{0} = 0$. In addition, $B_{1a} = C_{1a} = 0$. Therefore, inequality (\ref{eqn:lemma_upper_bound_equal_opp_what_to_prove}) trivially holds. 

Similarly, if $B_{1a} = 0$ and $C_{1a} > 0$, then $\gamma^p_{1a}(h) = \gamma^c_{1a}(h)$ and so $\Delta_{1a} = 0$ and (\ref{eqn:lemma_upper_bound_equal_opp_what_to_prove}) holds.

Assume now that $B_{1a} > 0$. Note that if $C_{1a} = \sum_{i=1}^n \mathbbm{1}\{a^p_i = a, y^p_i =1, i\not\in\poisoned\} = 0$, then $$\Delta_{1a} = \left|\gamma^p_{1a}(h) - \gamma^c_{1a}(h)\right| =  \left|\frac{B^1_{1a}}{B_{1a}} - 0\right| = \frac{B^1_{1a}}{B_{1a}} = \frac{B^1_{1a}}{B_{1a} + C_{1a}} \leq \frac{B_{1a}}{C_{1a} + B_{1a}}.$$
Finally, assume that both $C_{1a} > 0$ and $B_{1a} > 0$. Note that under any realization of the randomness of the data sampling and the adversary, for any $a\in\{0,1\}$
\begin{align*}
\gamma^p_{1a} & = \frac{\sum_{i=1}^n \mathbbm{1}\{h(x^p_i) = 1, a^p_i = a, y^p_i =1, i\not\in\poisoned\} + \sum_{i=1}^n \mathbbm{1}\{h(x^p_i) = 1, a^p_i = a, y^p_i =1, i\in\poisoned\}}{\sum_{i=1}^n \mathbbm{1}\{a^p_i = a, y^p_i =1, i\not\in\poisoned\} + \sum_{i=1}^n \mathbbm{1}\{a^p_i = a, y^p_i =1, i\in\poisoned\}} \\ & = \frac{C^1_{1a} + B^1_{1a}}{C_{1a} + B_{1a}}.
\end{align*} 
Next we bound how far this quantity is from the clean estimator $\frac{C^1_{1a}}{C_{1a}}$
\begin{align*}
\Delta_{1a} \vcentcolon = \left|\gamma^p_{1a} - \gamma^c_{1a}\right| = \left|\frac{C^1_{1a} + B^1_{1a}}{C_{1a} + B_{1a}} - \gamma^c_{1a}\right| = \frac{B_{1a}}{C_{1a} + B_{1a}}\left|\gamma^c_{1a} - \frac{B^1_{1a}}{B_{1a}}\right| \leq \frac{B_{1a}}{C_{1a} + B_{1a}}.
\end{align*}

Now since $B_{10} + B_{11} < \frac{3\alpha}{2}n, C_{10} > \frac{1-\alpha}{2}P_{10}n, C_{11} > \frac{1-\alpha}{2}P_{11}n$ hold, we get
\begin{align*}
\Delta_{10} + \Delta_{11} & \leq \frac{B_{10}}{C_{10} + B_{10}} + \frac{B_{11}}{C_{11} + B_{11}} \\ & < \frac{B_{10}}{\frac{1-\alpha}{2}P_{10} n + B_{10}} + \frac{B_{11}}{\frac{1-\alpha}{2}P_{11} n + B_{11}} \\
& \leq \frac{B_{10}}{\frac{1-\alpha}{2}P_{10} n + B_{10}} + \frac{B_{11}}{\frac{1-\alpha}{2}P_{10} n + B_{11}} \\ 
& = \frac{B_{10}}{\frac{1-\alpha}{2}P_{10} n + B_{10}}  + 1 - \frac{\frac{1-\alpha}{2}P_{10} n}{\frac{1-\alpha}{2}P_{10} n - B_{10} + (B_{10} + B_{11})} \\
& < \frac{B_{10}}{\frac{1-\alpha}{2}P_{10} n + B_{10}}  + 1 - \frac{\frac{1-\alpha}{2}P_{10} n}{\frac{1-\alpha}{2}P_{10} n - B_{10} + \frac{3\alpha}{2}n} \\
& = 2 - (1-\alpha)P_{10}n\left(\frac{1}{(1-\alpha)P_{10} n + 2B_{10}} + \frac{1}{(1-\alpha)P_{10} n + 3\alpha n - 2B_{10}}\right) 
\end{align*}
The same argument as in Lemma \ref{lemma:upper-bound-delta-par-app} shows that this is maximized at $B_{10} = \frac{3\alpha}{4}n$ and so 
\begin{align}
\label{eqn:bound-on-sum-delta-equal-opp}
\Delta_{10} + \Delta_{11} & \leq \frac{B_{10}}{C_{10} + B_{10}} + \frac{B_{11}}{C_{11} + B_{11}} \\ & < 2 - (1-\alpha)P_{10}n\left(\frac{1}{(1-\alpha)P_{10} n + \frac{3\alpha}{2}n} + \frac{1}{(1-\alpha)P_{10} n + 3\alpha n - \frac{3\alpha}{2}n}\right) \notag \\
& \leq \frac{2\alpha}{P_{10}/3 + \alpha}. \notag
\end{align}
Since this holds for any arbitrary hypothesis $h\in\mathcal{H}$, the result follows.
\end{proof}

Denote the irreducible error term for equal opportunity by $\Delta^{opp} = \frac{2\alpha}{P_{10}/3 + \alpha}$. We then have the following bound for a fixed $h\in\mathcal{H}$:
\begin{lemma}
\label{lemma:non-uniform-bound-equal-opp}
Let $h\in\mathcal{H}$ be a fixed hypothesis and $D\in \mathcal{P}(\prodspace)$ be a fixed distribution. Denote $P_{1a} = \mathbb{P}(A = a, Y = 1)$ for $a\in\{0,1\}$. Let $\mathcal{A}$ be any malicious adversary and denote by $\mathbb{P}^{\mathcal{A}}$ the probability distribution of the poisoned data $S^p$, under the random sampling of the clean data, the marked points and the randomness of the adversary. Then for any $n \geq \max \left\{\frac{8\log(8/\delta)}{(1-\alpha)P_{10}}, \frac{12\log(6/\delta)}{\alpha}\right\}$ and $\delta\in(0,1)$
\begin{equation}
\label{eqn:non-uniform-bound-equal-opp}
\mathbb{P}^{\mathcal{A}}\left(\left|\widehat{\mathcal{D}}^{opp}(h) - \mathcal{D}^{opp}(h)\right| \leq \Delta^{opp} + 2\sqrt{\frac{\log(16/\delta)}{n(1-\alpha)P_{10}}}\right) \geq 1-\delta
\end{equation}
\end{lemma}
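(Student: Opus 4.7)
The plan is to mirror the proof of Lemma \ref{lemma:non-uniform-bound-demog-parity}, replacing the conditioning on the protected attribute alone by conditioning on the joint event $\{A = a, Y = 1\}$. The reason this works is that the adversary's marking process is independent of the clean data, so conditioning on the set of indices $S^c_{1a} = \{i : a^p_i = a, y^p_i = 1, i \not\in \poisoned\}$ that are clean and belong to the conditional event turns $\gamma^c_{1a}(h)$ into an empirical average of $C_{1a}$ i.i.d.\ Bernoulli random variables with mean $\gamma_{1a}(h)$.

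First, I would control the event that the number of clean points with attribute $a$ and positive label is abnormally small. Since $C_{1a} \sim \Bin(n, (1-\alpha)P_{1a})$ and $P_{10}\le P_{11}$, the Chernoff bound together with the lower bound on $n$ gives
\begin{equation*}
\mathbb{P}^{\mathcal{A}}\Bigl(C_{1a} \le \tfrac{(1-\alpha)P_{1a}}{2}n\Bigr) \le \exp\Bigl(-\tfrac{(1-\alpha)P_{1a}n}{8}\Bigr) \le \tfrac{\delta}{8}
\end{equation*}
for $a \in \{0,1\}$. Conditioning on $S^c_{1a}$ when $C_{1a} > \tfrac{(1-\alpha)P_{10}n}{2}$, Hoeffding's inequality applied to the conditionally i.i.d.\ indicators $\mathbbm{1}\{h(x^p_i) = 1\}$ yields, for any $t > 0$,
\begin{equation*}
\mathbb{P}^{\mathcal{A}}\bigl(|\gamma^c_{1a}(h) - \gamma_{1a}(h)| > t\bigr) \le \tfrac{\delta}{8} + 2\exp\bigl(-t^2 (1-\alpha) P_{10} n\bigr),
\end{equation*}
which is the exact analog of the key intermediate estimate in the proof of Lemma \ref{lemma:non-uniform-bound-demog-parity}.

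Next I would invoke the reverse triangle inequality twice to obtain
\begin{equation*}
\bigl||\gamma^p_{10} - \gamma^p_{11}| - |\gamma_{10} - \gamma_{11}|\bigr|
\le |\gamma^c_{10} - \gamma_{10}| + |\gamma^c_{11} - \gamma_{11}| + \Delta_{10}(h) + \Delta_{11}(h),
\end{equation*}
so that a large deviation of $\widehat{\mathcal{D}}^{opp}(h)$ from $\mathcal{D}^{opp}(h)$ forces either one of the two clean estimates to deviate by more than $t$, or the sum $\Delta_{10}(h) + \Delta_{11}(h)$ to exceed $\Delta^{opp}$. Choosing $t = \sqrt{\log(16/\delta)/(n(1-\alpha)P_{10})}$ makes each of the two clean-concentration terms contribute at most $\delta/8 + 2(\delta/16) = \delta/4$, and Lemma \ref{lemma:upper-bound-delta-opp} (with $\delta/2$ in place of $\delta$) controls the remaining event by $\delta/2$. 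A union bound closes the argument and delivers (\ref{eqn:non-uniform-bound-equal-opp}).

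The only subtlety I anticipate is handling the case $C_{1a} = 0$ together with the convention $0/0 = 0$, since then $\gamma^c_{1a}(h) = 0$ is not a meaningful estimator of $\gamma_{1a}(h)$; this is precisely what makes the high-probability lower bound on $C_{1a}$ indispensable, and why the threshold on $n$ must scale with $1/P_{10}$ rather than $1/P_{11}$. Everything else is a mechanical translation of the demographic-parity argument, using $P_{10}$ in place of $P_0$ and the joint event $\{A = a, Y = 1\}$ in place of $\{A = a\}$.
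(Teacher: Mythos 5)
Your proposal is correct and is exactly the paper's argument: the paper's own proof of this lemma is a one-line remark that one repeats the proof of Lemma \ref{lemma:non-uniform-bound-demog-parity} verbatim, conditioning on $S^c_{1a} = \{i : a^p_i = a,\, y^p_i = 1,\, i \not\in \poisoned\}$ instead of $S^c_a$, and your accounting of the constants ($\delta/4$ per clean-concentration term via Chernoff plus conditional Hoeffding, $\delta/2$ from Lemma \ref{lemma:upper-bound-delta-opp} applied with $\delta/2$) matches. Your observation about the $C_{1a}=0$ case and why the sample-size threshold scales with $1/P_{10}$ is also consistent with how the demographic-parity proof handles the analogous degeneracy.
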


\begin{proof}
The proof is exactly the same as the one of Lemma \ref{lemma:non-uniform-bound-demog-parity} , but with conditioning on $S^c_{1a} = \{i: a^p_i = a, y^p_i = 1, i\not\in\poisoned\}$ (the set of indexes of the poisoned data for which the protected group is $a$, the label is $1$ and the corresponding point was not marked for the adversary) instead.
\end{proof}

The same argument as in Lemma \ref{lemma:uniform-bound-demog-par} gives a uniform bound over the whole hypothesis space, provided that $\mathcal{H}$ has a finite VC-dimension $d \vcentcolon = VC(\mathcal{H})$:

\begin{lemma}
\label{lemma:uniform-bound-equal-opp}
Under the setup of Lemma \ref{lemma:non-uniform-bound-equal-opp}, assume additionally that $\mathcal{H}$ has a finite VC-dimension $d$. Then for any $n \geq \max \left\{\frac{8\log(8/\delta)}{(1-\alpha) P_{10}}, \frac{12\log(6/\delta)}{\alpha}, \frac{d}{2}\right\}$ and $\delta\in(0,1)$
\begin{align}
\label{eqn:uniform-bound-equal-opp-parity}
\mathbb{P}^{\mathcal{A}}_{S^p} & \left(\sup_{h\in\mathcal{H}}|\widehat{\mathcal{D}}^{opp}(h) - \mathcal{D}^{opp}(h)| \leq \Delta^{opp} +  16 \sqrt{\frac{2d\log(\frac{2en}{d}) + 2\log(48/\delta)}{(1-\alpha)P_{10}n}}\right) \geq 1 - \delta.
\end{align}
\end{lemma}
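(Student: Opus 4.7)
The plan is to mirror the three-step argument used in the proof of Lemma \ref{lemma:uniform-bound-demog-par}, replacing the demographic parity estimates $\gamma^c_a(h), \gamma^p_a(h)$ by their equal opportunity counterparts $\gamma^c_{1a}(h), \gamma^p_{1a}(h)$, and replacing $P_0$ by $P_{10}$ in every concentration inequality. First, Lemma \ref{lemma:upper-bound-delta-opp} already hands us a uniform bound on the corruption-induced gap,
\begin{equation*}
\mathbb{P}^{\mathcal{A}}\Bigl(\sup_{h\in\mathcal{H}}\bigl(\Delta_{10}(h) + \Delta_{11}(h)\bigr) \geq \Delta^{opp}\Bigr) < \frac{\delta}{2},
\end{equation*}
provided $n$ meets the stated lower bound (with $\delta$ replaced by $\delta/2$). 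This reduces the problem to controlling $\sup_{h\in\mathcal{H}}|\gamma^c_{1a}(h,S^p) - \gamma_{1a}(h)|$ for $a \in \{0,1\}$.

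Next, I would extract from the proof of Lemma \ref{lemma:non-uniform-bound-equal-opp} the single-hypothesis tail bound for the clean-data conditional estimate,
\begin{equation*}
\mathbb{P}^{\mathcal{A}}\bigl(|\gamma^c_{1a}(h) - \gamma_{1a}(h)| > t\bigr) \leq 3\exp\Bigl(-\tfrac{t^2(1-\alpha)P_{1a}n}{8}\Bigr),
\end{equation*}
which is obtained by conditioning on the random index set $S^c_{1a} = \{i : a^p_i = a,\, y^p_i = 1,\, i \notin \poisoned\}$, applying Hoeffding on the event $\{C_{1a} > \tfrac{(1-\alpha)}{2}P_{1a}n\}$, and using Chernoff for the complement. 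Crucially, this conditioning step works because the marked indices $\poisoned$ are independent of the clean sample, so that conditional on $S^c_{1a}$ the values $\mathbbm{1}\{h(x^p_i)=1\}$ for $i \in S^c_{1a}$ are i.i.d.\ Bernoulli with mean $\gamma_{1a}(h)$.

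Then I would run the symmetrization argument verbatim: introduce an independent ghost sample $S^1$ sampled from $\mathbb{P}^{\mathcal{A}}$, show
\begin{equation*}
\mathbb{P}^{\mathcal{A}}_{S^p}\Bigl(\sup_{h\in\mathcal{H}}|\gamma_{1a}(h) - \gamma^c_{1a}(h,S^p)| \geq t\Bigr) \leq 2\mathbb{P}^{\mathcal{A}}_{S^p,S^1}\Bigl(\sup_{h\in\mathcal{H}}|\gamma^c_{1a}(h,S^1) - \gamma^c_{1a}(h,S^p)| \geq t/2\Bigr)
\end{equation*}
for $t \geq 2\sqrt{8\log(6)/((1-\alpha)P_{10}n)}$, bound the right-hand side by $S_{\mathcal{H}}(2n)$ copies of a pointwise bound using that $\gamma^c_{1a}(h,\cdot)$ depends only on $h$'s values on the combined sample, invoke Sauer's lemma (valid since $2n \geq d$) to get $S_{\mathcal{H}}(2n) \leq (2en/d)^d$, and invert to obtain
\begin{equation*}
\mathbb{P}^{\mathcal{A}}_{S^p}\Bigl(\sup_{h\in\mathcal{H}}|\gamma_{1a}(h) - \gamma^c_{1a}(h,S^p)| \geq 8\sqrt{\tfrac{2d\log(\frac{2en}{d}) + 2\log(48/\delta)}{(1-\alpha)P_{10}n}}\Bigr) \leq \frac{\delta}{4}
\end{equation*}
for each $a \in \{0,1\}$, using $P_{1a} \geq P_{10}$. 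Finally, the triangle inequality
\begin{equation*}
|\widehat{\mathcal{D}}^{opp}(h) - \mathcal{D}^{opp}(h)| \leq |\gamma^c_{10}(h,S^p) - \gamma_{10}(h)| + |\gamma^c_{11}(h,S^p) - \gamma_{11}(h)| + \Delta_{10}(h) + \Delta_{11}(h)
\end{equation*}
and a union bound over the three events (two pointwise uniform bounds on clean estimates, each of probability at most $\delta/4$, and the corruption bound of probability at most $\delta/2$) yield the claim.

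The only nontrivial obstacle is verifying that the symmetrization step survives the conditioning on $\poisoned$: we need the ghost-sample inequality to hold despite $\gamma^c_{1a}$ being a ratio whose denominator is itself random. This is handled exactly as in Lemma \ref{lemma:uniform-bound-demog-par}, by noting that once we fix both index sets $\poisoned, \poisoned_1$ and the restriction of $h$ to the $2n$ observed inputs, the estimates $\gamma^c_{1a}(h,S^p)$ and $\gamma^c_{1a}(h,S^1)$ are determined, so the growth-function argument applies without modification.
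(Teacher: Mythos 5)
Your proposal is correct and matches the paper's own proof, which simply states that ``the same argument as in Lemma~\ref{lemma:uniform-bound-demog-par} gives a uniform bound'' and leaves the details implicit. You have spelled out exactly the intended substitution---$\gamma^c_{1a}, \gamma^p_{1a}$ for $\gamma^c_a, \gamma^p_a$, $P_{10}$ for $P_0$, Lemma~\ref{lemma:upper-bound-delta-opp} and Lemma~\ref{lemma:non-uniform-bound-equal-opp} in place of their demographic-parity analogs---and correctly retained the three-step structure (corruption bound, single-hypothesis clean concentration, symmetrization/Sauer) with the right constants and union-bound bookkeeping.
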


\noindent Finally, we prove multiplicative bounds and claims in the case when $\mathbb{P}(h(X) = 1| A = 0, Y = 1) = \mathbb{P}(h(X) = 1| A = 1, Y = 1) = 1$ (which holds for example when $h(X) = Y$ almost surely). These will come in useful for proving the component-wise upper bound with fast rates.

We will be interested in the estimate
\begin{align*}
\bar{\gamma}^p_{1a}(h) = \frac{\sum_{i=1}^n \mathbbm{1}\{h(x^p_i) = 0, a^p_i = a, y^p_1 = 1\}}{\sum_{i=1}^n \mathbbm{1}\{a^p_i = a, y^p_i = 1\}}
\end{align*}
of $\bar{\gamma}_{1a}(h) = \mathbb{P}(h(X) = 0 | A = a, Y = 1)$. Again, we also introduce the corresponding clean data estimate $C^0_{1a}(h) \coloneqq \sum_{i=1}^n \mathbbm{1}\{i: h(x^p_i) = 0, a^p_i = a, y^p_1 = 1, i\not\in\poisoned\}$ and
\begin{align*}
\bar{\gamma}^c_{1a}(h) = \frac{C^0_{1a}(h)}{C_{1a}} = \frac{\sum_{i=1}^n \mathbbm{1}\{i: h(x^p_i) = 0, a^p_i = a, y^p_1 = 1, i\not\in\poisoned\}}{\sum_{i=1}^n \mathbbm{1}\{i: a^p_i = a, y^p_1 = 1, i\not\in\poisoned\}}.
\end{align*}
Denote also
\begin{align*}
\bar{\Delta}_{1a}(h) \vcentcolon =\left|\bar{\gamma}^p_{1a}(h) - \gamma^c_{1a}(h)\right|,
\end{align*}
We only show non-uniform bounds for a fixed $h\in\mathcal{H}$ here, so we omit the dependence of these quantities on $h$. We have:
\begin{lemma}
\label{lemma:realizable-non-uniform}
Let $\mathbb{P}\in \mathcal{P}(\prodspace)$ be a fixed distribution and let $h\in\mathcal{H}$ be a fixed classifier. Denote $P_{1a} = \mathbb{P}(A = a, Y = 1)$ for $a\in\{0,1\}$. Let $\mathcal{A}$ be any malicious adversary and denote by $\mathbb{P}^{\mathcal{A}}$ the probability distribution of the poisoned data $S^p$, under the random sampling of the clean data, the marked points and the randomness of the adversary. Then:\\
(a) For any $n > 0$ and any $\eta, \delta\in(0,1)$
\begin{align}
\mathbb{P}^{\mathcal{A}}\left(\bar{\gamma}^p_{1a} \geq (1+\eta)\bar{\gamma}_{1a} + \bar{\Delta}_{1a}\right) \leq \exp \left(-\frac{(1-\alpha)P_{1a} n}{8}\right) + \exp\left(-\frac{1}{6}\eta^2 (1-\alpha)P_{1a}\bar{\gamma}_{1a}n\right).
\end{align}
and 
\begin{align}
\mathbb{P}^{\mathcal{A}}\left(\bar{\gamma}^p_{1a} \leq (1-\eta)\bar{\gamma}_{1a} - \bar{\Delta}_{1a}\right) \leq \exp \left(-\frac{(1-\alpha)P_{1a} n}{8}\right) + \exp\left(-\frac{1}{4}\eta^2 (1-\alpha)P_{1a}\bar{\gamma}_{1a}n\right).
\end{align}
(b) Assume further that $\mathbb{P}(h(X) = 0 | A = 0, Y = 1) = \mathbb{P}(h(X) = 0 | A = 1, Y = 1) = 0$. Then for any $\delta \in (0,1)$ and $n \geq \max \left\{\frac{8\log(4/\delta)}{(1-\alpha)P_{10}},  \frac{12\log(3/\delta)}{\alpha}\right\}$
\begin{align}
\mathbb{P}^{\mathcal{A}}\left(\bar{\gamma}^p_{10} + \bar{\gamma}^p_{11} \geq \Delta^{opp}\right) \leq \delta
\end{align}
\end{lemma}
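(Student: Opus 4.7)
The plan is to reduce both parts to analysing the clean-data estimate $\bar{\gamma}^c_{1a}$ and then apply standard concentration arguments.

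For part (a), I would first use the triangle inequality $|\bar{\gamma}^p_{1a} - \bar{\gamma}^c_{1a}| = \bar{\Delta}_{1a}$ to note that the event $\{\bar{\gamma}^p_{1a} \ge (1+\eta)\bar{\gamma}_{1a} + \bar{\Delta}_{1a}\}$ is contained in $\{\bar{\gamma}^c_{1a} \ge (1+\eta)\bar{\gamma}_{1a}\}$, so it is enough to control the clean estimate. I would then condition on the random index set $S^c_{1a} = \{i : a^p_i = a,\ y^p_i = 1,\ i \notin \poisoned\}$, whose size $C_{1a}$ is $\mathrm{Bin}(n,(1-\alpha)P_{1a})$. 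Because the marking events are independent of the i.i.d.\ draws of the clean data, conditional on $S^c_{1a}$ the count $C^0_{1a}$ is distributed as $\mathrm{Bin}(C_{1a}, \bar{\gamma}_{1a})$. I would apply the multiplicative Chernoff bounds from Section \ref{sec:appendix-notation-and-tools} to this conditional distribution, split the probability according to whether $C_{1a} \ge (1-\alpha)P_{1a}n/2$ or not, and bound the event $\{C_{1a} < (1-\alpha)P_{1a}n/2\}$ by a one-sided Chernoff bound on $C_{1a}$. This yields the $\exp(-(1-\alpha)P_{1a}n/8)$ term and, for the upper tail, the factor $\exp(-\eta^2 \bar{\gamma}_{1a}(1-\alpha)P_{1a}n/6)$ (from the Chernoff exponent $1/3$ combined with the halving of $C_{1a}$); the lower tail is symmetric and yields the factor $1/4$.

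Part (b) follows almost immediately from the machinery of Lemma \ref{lemma:upper-bound-delta-opp}. If $\bar{\gamma}_{1a} = \mathbb{P}(h(X) = 0 \mid A = a, Y = 1) = 0$, then every clean sample with $a^c_i = a$ and $y^c_i = 1$ satisfies $h(x^c_i) = 1$ almost surely, so $C^0_{1a} = 0$ and therefore $\bar{\gamma}^c_{1a} = 0$ almost surely for both $a \in \{0,1\}$. Hence $\bar{\gamma}^p_{1a} = \bar{\Delta}_{1a}$, and the required sum equals $\bar{\Delta}_{10} + \bar{\Delta}_{11}$. The deterministic bound $\bar{\Delta}_{1a} \le B_{1a}/(C_{1a} + B_{1a})$ obtained in the proof of Lemma \ref{lemma:upper-bound-delta-opp} only uses that the numerators of $\bar{\gamma}^p_{1a}, \bar{\gamma}^c_{1a}$ are counts of subsets of the $(a^p_i = a,\ y^p_i = 1)$ indices, so it applies verbatim to the ``$h(x^p_i) = 0$'' version. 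The same Chernoff bounds on $C_{10}, C_{11}, B_{10} + B_{11}$ together with the same calculus optimisation that produced $\Delta^{opp} = 2\alpha/(P_{10}/3 + \alpha)$ then give the claimed tail bound.

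The main obstacle is the conditional-distribution step in part (a): one must verify that conditioning on $S^c_{1a}$ preserves the i.i.d.\ structure of the inputs so that the multiplicative Chernoff bound applies with parameter exactly $\bar{\gamma}_{1a}$. This mirrors the careful argument in Lemma \ref{lemma:non-uniform-bound-demog-parity}, where the independence of the marking events from the clean data is used explicitly; the same observation transfers directly to the present setting. Once this point is settled, both parts reduce to routine Chernoff calculations.
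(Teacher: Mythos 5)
Your proposal is correct and follows essentially the same argument as the paper: for part (a) you reduce to the clean estimate $\bar{\gamma}^c_{1a}$ via the identity $\bar{\Delta}_{1a} = |\bar{\gamma}^p_{1a} - \bar{\gamma}^c_{1a}|$, condition on the set $S^c_{1a}$ (noting that $C^0_{1a}$ is conditionally $\Bin(C_{1a},\bar{\gamma}_{1a})$ because the marking is independent of the clean data), split on the event $\{C_{1a}\ge(1-\alpha)P_{1a}n/2\}$, and apply the multiplicative Chernoff bounds; for part (b) you observe $\bar{\gamma}^c_{1a}=0$ a.s.\ when $\bar{\gamma}_{1a}=0$, so $\bar{\gamma}^p_{1a}=\bar{\Delta}_{1a}$, and reuse the deterministic bound $\bar{\Delta}_{1a}\le B_{1a}/(C_{1a}+B_{1a})$ and the $\Delta^{opp}$ optimisation from Lemma~\ref{lemma:upper-bound-delta-opp}. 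This matches the paper's proof step for step.
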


\begin{proof}
Let $S^c_{1a} = \{i: a^p_i = a, y^p_i = 1, i\not\in\poisoned\}$. For any $a\in\{0,1\}$ we have
\begin{align*}
\mathbb{P}^{\mathcal{A}} & \left(\bar{\gamma}^p_{1a} \geq (1+\eta)\bar{\gamma}_{1a} + \bar{\Delta}_{1a}\right) \\ 
& = \sum_{S^c_{1a}} \mathbb{P}^{\mathcal{A}}\left(\bar{\gamma}^p_{1a} \geq (1+\eta)\bar{\gamma}_{1a} + \bar{\Delta}_{1a}\middle| S^c_{a}\right)\mathbb{P}(S^c_{a}) \\
& \leq \mathbb{P}^{\mathcal{A}}\left(C_{1a} \leq \frac{(1-\alpha)}{2}P_{1a}n\right) \\ 
& + \sum_{S^c_{1a}: C_{1a} \geq \frac{(1-\alpha)}{2}P_{1a}n}\mathbb{P}^{\mathcal{A}}\left(\bar{\gamma}^p_{1a} \geq (1+\eta)\bar{\gamma}_{1a} + \bar{\Delta}_{1a}\middle| S^c_{1a}\right)\mathbb{P}^{\mathcal{A}}(S^c_{1a}) \\ 
& \leq \mathbb{P}^{\mathcal{A}}\left(C_{1a} \leq \frac{(1-\alpha)}{2}P_{1a}n\right) \\
& + \sum_{S^c_{1a}: C_{1a} \geq \frac{(1-\alpha)}{2}P_{1a}n}\mathbb{P}^{\mathcal{A}}\left(\bar{\gamma}^p_{1a} - \frac{C^1_{1a}}{C_{1a}} + \frac{C^1_{1a}}{C_{1a}} \geq (1+\eta)\bar{\gamma}_{1a} + \bar{\Delta}_{1a}\middle| S^c_{1a}\right)\mathbb{P}^{\mathcal{A}}(S^c_{1a}) \\
& \leq \mathbb{P}^{\mathcal{A}}\left(C_{1a} \leq \frac{(1-\alpha)}{2}P_{1a}n\right) \\ & + \sum_{S^c_{1a}: C_{1a} \geq \frac{(1-\alpha)}{2}P_{1a}n}\mathbb{P}^{\mathcal{A}}\left(\frac{C^1_{1a}}{C_{1a}} \geq (1+\eta)\bar{\gamma}_{1a}\middle| S^c_{1a}\right)\mathbb{P}^{\mathcal{A}}(S^c_{1a}) \\
& \leq \exp \left(-\frac{(1-\alpha)P_{1a} n}{8}\right) + \sum_{S^p_{a}: C_{1a} \geq \frac{(1-\alpha)}{2}P_{1a}n} \exp\left(-\frac{\eta^2C_{1a}\bar{\gamma}_{1a}}{3}\right) \mathbb{P}^{\mathcal{A}}(S^c_{1a}) \\ 
& \leq \exp \left(-\frac{(1-\alpha)P_{1a} n}{8}\right) + \exp\left(-\frac{1}{6}\eta^2 (1-\alpha)P_{1a}\bar{\gamma}_{1a}n\right).
\end{align*}
A similar argument, with the other direction of the Chernoff bounds, gives the other bound.

(b) Similarly to the argument in the proof of Lemma \ref{lemma:upper-bound-delta-opp}
\begin{align}
\label{eqn:yet-another-bound}
\bar{\Delta}_{1a} = \left|\bar{\gamma}^p_{1a} - \frac{C^0_{1a}}{C_{1a}}\right| \leq \frac{B_{1a}}{C_{1a} + B_{1a}}.
\end{align}
Using the inequalities (\ref{eqn:bound-on-rare-events}) and (\ref{eqn:bound-on-sum-delta-equal-opp}),
\begin{align}
\label{eqn:yet-another-bound2}
\mathbb{P}^{\mathcal{A}}\left(\bar{\Delta}_{10} + \bar{\Delta}_{11} \geq \frac{2\alpha}{P_{10}/3 + \alpha}\right) \leq \mathbb{P}^{\mathcal{A}}\left(\frac{B_{10}}{C_{10} + B_{10}} + \frac{B_{11}}{C_{11} + B_{11}} \geq \frac{2\alpha}{P_{10}/3 + \alpha}\right) < \delta.
\end{align}
Since also $$\mathbb{P}^{\mathcal{A}}\left(\frac{C^0_{1a}}{C_{1a}} > 0\right) = \sum_{S^c_{1a}} \mathbb{P}^{\mathcal{A}}\left(\frac{C^0_{1a}}{C_{1a}} > 0 \middle| S^c_{1a}\right)   \mathbb{P}^{\mathcal{A}}\left(S^c_{1a}\right) = \sum_{S^c_{1a}} \mathbb{P}\left(\Bin(|S^c_{1a}|, 0) > 0\right) \mathbb{P}^{\mathcal{A}}(S^c_{1a}) = 0,$$
we have that $0\leq \bar{\gamma}^p_{1a} = \bar{\Delta}_{1a}$ almost surely, for both $a\in\{0,1\}$. Therefore, $0< \bar{\gamma}^p_{10} + \bar{\gamma}^p_{11} = \bar{\Delta}_{10} + \bar{\Delta}_{11}$ and the result follows.
\end{proof}

\addtocounter{theorem}{-3}

\subsection{Upper bound theorems - proofs}
\label{sec:upper-bounds-themselves-proofs}

We are now ready to present the proofs of the upper bound results from the main body of the paper.

\subsubsection{Upper bounds on the $\lambda$-weighted objective}

First we prove the bounds for the $\lambda$-weighted objective.

\paragraph{Bound for demographic parity}
Let $\lambda \geq 0$ be fixed. Recall our notation for the $\lambda$-weighted objective:
\begin{align*}
L_{\lambda}^{par}(h) = \mathcal{R}(h) + \lambda \mathcal{D}^{par}(h).
\end{align*}
Suppose that a learner $\mathcal{L}^{par}_{\lambda}:\cup_{n = 1}^{\infty} (\prodspace)^n \to \mathcal{H}$ is such that $$\mathcal{L}^{par}(S^p) \in \argmin_{h\in\mathcal{H}}(\widehat{R}^p(h) + \lambda \widehat{\mathcal{D}}^{par}(h)) \quad \text{for all } S^p.$$ That is, $\mathcal{L}^{par}_{\lambda}$ always returns a minimizer of the $\lambda$-weighted empirical objective. Then we have the following:
\begin{theorem}
\label{thm:agnostic-upper-bound-demog-par-app}
Let $\mathcal{H}$ be any hypothesis space with $d = VC(\mathcal{H}) < \infty$. Let $\mathbb{P}\in \mathcal{P}(\prodspace)$ be a fixed distribution and $\mathcal{A}$ be any malicious adversary of power $\alpha < 0.5$. Denote by $\mathbb{P}^{\mathcal{A}}$ the probability distribution of the poisoned data $S^p$, under the random sampling of the clean data, the marked points and the randomness of the adversary. Then for any $\delta\in(0,1)$ and $n \geq \max \left\{\frac{8\log(16/\delta)}{(1-\alpha)P_0}, \frac{12\log(12/\delta)}{\alpha}, \frac{d}{2}\right\}$, we have
\begin{align*}
\mathbb{P}^{\mathcal{A}}\left(L_{\lambda}^{par}(\mathcal{L}^{par}_{\lambda}(S^p))  \leq \min_{h\in\mathcal{H}} L_{\lambda}^{par}(h) + \Delta^{par}_{\lambda}\right) > 1-  \delta,
\end{align*}
where\footnote{the $\widetilde{\mathcal{O}}$-notation hides constant and logarithmic factors}
\begin{align*}
\Delta^{par}_{\lambda} = 3\alpha + 2\lambda\Delta^{par} + \widetilde{\mathcal{O}}\left(\sqrt{\frac{d}{n}} + \lambda \sqrt{\frac{d}{P_0n}}\right)
\end{align*}
and 
\begin{align*}
\Delta^{par} = \frac{2\alpha}{P_0/3 + \alpha} = \mathcal{O}\left(\frac{\alpha}{P_0}\right).
\end{align*}
\end{theorem}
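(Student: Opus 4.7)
The plan is to reduce the theorem to two uniform concentration bounds—one for the fairness deviation and one for the corrupted empirical risk—and then apply the classical ERM comparison. The fairness side is supplied directly by Lemma \ref{lemma:uniform-bound-demog-par}, invoked at confidence $\delta/2$: with probability at least $1-\delta/2$,
\[
\sup_{h\in\mathcal{H}}\bigl|\widehat{\mathcal{D}}^{par}(h) - \mathcal{D}^{par}(h)\bigr| \leq \Delta^{par} + \widetilde{\mathcal{O}}\!\left(\sqrt{d/(P_0 n)}\right),
\]
which is where the $\log(16/\delta)$ term in the sample-size condition originates.

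For the risk side, I would decompose $\widehat{R}^p(h) = \widetilde{R}(h) + (\widehat{R}^p(h) - \widetilde{R}(h))$, where $\widetilde{R}(h) = \frac{1}{n}\sum_{i=1}^n \mathbbm{1}\{h(x^c_i)\neq y^c_i\}$ is the empirical risk on the latent clean sample. Because the adversary modifies only the indices in $\poisoned$, $|\widehat{R}^p(h) - \widetilde{R}(h)| \leq |\poisoned|/n$ deterministically and uniformly in $h$. A Chernoff bound on $|\poisoned|\sim\Bin(n,\alpha)$—already used in the proof of Lemma \ref{lemma:upper-bound-delta-par-app}—gives $|\poisoned|/n \leq 3\alpha/2$ with probability at least $1-\delta/4$, while the classical VC uniform convergence bound yields $\sup_h |\widetilde{R}(h) - \mathcal{R}(h)| = \widetilde{\mathcal{O}}(\sqrt{d/n})$ with probability at least $1-\delta/4$. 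Together, with probability at least $1-\delta/2$, $\sup_h |\widehat{R}^p(h) - \mathcal{R}(h)| \leq \tfrac{3\alpha}{2} + \widetilde{\mathcal{O}}(\sqrt{d/n})$.

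Taking a union bound of the two concentration statements and applying the triangle inequality to $L^{par}_\lambda(h) = \mathcal{R}(h) + \lambda\mathcal{D}^{par}(h)$ shows that, with probability at least $1-\delta$, uniformly in $h\in\mathcal{H}$,
\[
\bigl|(\widehat{R}^p(h) + \lambda\widehat{\mathcal{D}}^{par}(h)) - L^{par}_\lambda(h)\bigr| \leq \tfrac{1}{2}\Delta^{par}_\lambda.
\]
The standard ERM chain will then finish the proof: with $h^*\in\argmin_h L^{par}_\lambda(h)$ and $\hat h = \mathcal{L}^{par}_\lambda(S^p)$,
\[
L^{par}_\lambda(\hat h) \leq \bigl(\widehat{R}^p(\hat h) + \lambda\widehat{\mathcal{D}}^{par}(\hat h)\bigr) + \tfrac{1}{2}\Delta^{par}_\lambda \leq \bigl(\widehat{R}^p(h^*) + \lambda\widehat{\mathcal{D}}^{par}(h^*)\bigr) + \tfrac{1}{2}\Delta^{par}_\lambda \leq L^{par}_\lambda(h^*) + \Delta^{par}_\lambda,
\]
where the middle step uses that $\hat h$ minimizes the empirical objective.

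The main obstacle is really bookkeeping rather than a single technical step: the heart of the argument is Lemma \ref{lemma:uniform-bound-demog-par}, which already required bounding the corruption-induced gap $\Delta_0(h)+\Delta_1(h)$ and running a VC-style symmetrization for a non-$0/1$ objective. Here one only needs to glue that lemma to the much more standard VC-plus-corruption bound on $\widehat{R}^p$, while ensuring that the $\delta$-budget distributed across the Chernoff bound, the VC uniform convergence, and the invoked fairness lemma is compatible with the stated sample-size requirements (this is what forces the $\log(16/\delta)$ and $\log(12/\delta)$ constants in the hypothesis on $n$).
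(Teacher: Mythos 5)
Your proposal is correct and follows essentially the same route as the paper's proof: the paper likewise splits the risk analysis into a VC uniform-convergence bound on the clean empirical risk plus a Chernoff bound giving $\sup_h|\widehat{\mathcal{R}}^p(h)-\widehat{\mathcal{R}}^c(h)|\leq 3\alpha/2$, invokes the uniform demographic-parity concentration lemma at confidence $\delta/2$, and concludes with the standard ERM chain. The constants and the $\delta$-budget you describe match the paper's bookkeeping.
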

\begin{proof}
By the standard concentrations results for the $0/1$ loss (see, for example, Chapter 28.1 in \citep{shalev2014understanding})
\begin{equation*}
\mathbb{P}\left(\sup_{h\in\mathcal{H}}|\widehat{\mathcal{R}}^c(h) - \mathcal{R}(h)| > 2\sqrt{\frac{8d\log(\frac{en}{d}) + 2\log(16/\delta)}{n}}\right) \leq \frac{\delta}{4},
\end{equation*}
where $\widehat{\mathcal{R}}^c(h) = \frac{1}{n}\sum_{i=1}^n \mathbbm{1}(h(x^c_i) \neq y^c_i)$ is the loss of $h$ on the clean data. Since the total number of poisoned points $|\poisoned| \sim \Bin(n, \alpha)$ and since $n > \frac{12\log(4/\delta)}{\alpha}$
\begin{equation*}
\mathbb{P}^{\mathcal{A}}\left(\sup_{h\in\mathcal{H}}|\widehat{\mathcal{R}}^c(h) - \widehat{\mathcal{R}}^p(h)| > \frac{3\alpha}{2}\right) \leq \mathbb{P}^{\mathcal{A}}\left(|\poisoned| \geq \frac{3\alpha}{2}n\right) \leq e^{-\alpha n/12} \leq \frac{\delta}{4}.
\end{equation*}
Since $\sup_{h\in\mathcal{H}}|\widehat{\mathcal{R}}^p(h) - \mathcal{R}(h)| \leq \sup_{h\in\mathcal{H}}|\widehat{\mathcal{R}}^p(h) - \widehat{\mathcal{R}}^c(h)| + \sup_{h\in\mathcal{H}}|\widehat{\mathcal{R}}^c(h) - \mathcal{R}(h)|$, we obtain
\begin{equation}
\label{eqn:concentration-loss-finite-h-poisoned-data}
\mathbb{P}^{\mathcal{A}}\left(\sup_{h\in\mathcal{H}}|\widehat{\mathcal{R}}^p(h) - \mathcal{R}(h)| > \frac{3\alpha}{2} + 2\sqrt{\frac{8d\log(\frac{en}{d}) + 2\log(16/\delta)}{n}}\right) \leq \frac{\delta}{2}.
\end{equation}
In addition, Lemma \ref{lemma:non-uniform-bound-demog-parity} implies that
\begin{equation}
\label{eqn:uniform-bound-demog-parity-finite-h}
\mathbb{P}^{\mathcal{A}}\left(\sup_{h\in\mathcal{H}}\left|\widehat{\mathcal{D}}^{par}(h) - \mathcal{D}^{par}(h)\right| > \Delta^{par} + 16 \sqrt{\frac{2d\log(\frac{2en}{d}) + 2\log(96/\delta)}{(1-\alpha)P_0n}}\right) \leq \frac{\delta}{2}.
\end{equation}
Now let $h_{\lambda} = \argmin_{h\in\mathcal{H}}(\widehat{R}^p(h) + \lambda \widehat{\mathcal{D}}^{par}(h))$ and let
\begin{align*}
\Delta^{par}_{\lambda} & = 3\alpha + 4\sqrt{\frac{8d\log(\frac{en}{d}) + 2\log(16/\delta)}{n}} + 2\lambda \Delta^{par} + 32\lambda \sqrt{\frac{2d\log(\frac{2en}{d}) + 2\log(96/\delta)}{(1-\alpha)P_0n}} \\ & = \widetilde{\mathcal{O}} \left(\sqrt{\frac{d}{n}} + \lambda \sqrt{\frac{d}{P_0n}}\right).
\end{align*} 
Then, using (\ref{eqn:concentration-loss-finite-h-poisoned-data}) and (\ref{eqn:uniform-bound-demog-parity-finite-h}), we have that with probability at least $1 - \delta$

\begin{align*}
L_{\lambda}^{par}(\mathcal{L}^{par}_{\lambda}(S^p)) & = \mathcal{R}(\mathcal{L}^{par}_{\lambda}(S^p)) + \lambda \mathcal{D}^{par}(\mathcal{L}^{par}_{\lambda}(S^p)) \\ & \leq  \widehat{\mathcal{R}}^{p}(\mathcal{L}^{par}_{\lambda}(S^p)) + \lambda \widehat{\mathcal{D}}^{par}(\mathcal{L}^{par}_{\lambda}(S^p)) + \frac{1}{2}\Delta^{par}_{\lambda} \\ & = \min_{h\in \mathcal{H}} \left(\widehat{\mathcal{R}}^{p}(h) + \lambda \widehat{\mathcal{D}}^{par}(h)\right) + \frac{1}{2}\Delta^{par}_{\lambda} \\
& \leq \min_{h\in\mathcal{H}}L_{\lambda}^{par}(h) + \Delta^{par}_{\lambda}.
\end{align*}
\end{proof}

\paragraph{Bound for equal opportunity} We now show a similar result for the weighted-objective with the equal opportunity deviation measure
\begin{align*}
L_{\lambda}^{opp}(h) = \mathcal{R}(h) + \lambda \mathcal{D}^{opp}(h).
\end{align*}
Let $\mathcal{L}^{opp}_{\lambda}:\cup_{n = 1}^{\infty} (\prodspace)^n \to \mathcal{H}$ be such that $$\mathcal{L}^{opp}_{\lambda}(S^p) \in \argmin_{h\in\mathcal{H}}(\widehat{R}^p(h) + \lambda \widehat{\mathcal{D}}^{opp}(h)), \quad \text{for all } S^p.$$ That is, $\mathcal{L}^{opp}_{\lambda}$ always returns a minimizer of the $\lambda$-weighted empirical objective. Then:

\begin{theorem}
\label{thm:agnostic-upper-bound-equal-opp-app}
Let $\mathcal{H}$ be any hypothesis space with $d = VC(\mathcal{H}) < \infty$. Let $\mathbb{P}\in \mathcal{P}(\prodspace)$ be a fixed distribution and $\mathcal{A}$ be any malicious adversary of power $\alpha < 0.5$. Denote by $\mathbb{P}^{\mathcal{A}}$ the probability distribution of the poisoned data $S^p$, under the random sampling of the clean data, the marked points and the randomness of the adversary. Then for any $\delta\in(0,1)$ and $n \geq \max \left\{\frac{8\log(16/\delta)}{(1-\alpha) P_{10}}, \frac{12\log(12/\delta)}{\alpha}, \frac{d}{2}\right\}$, we have
\begin{align*}
\mathbb{P}^{\mathcal{A}}\left(L_{\lambda}^{opp}(\mathcal{L}^{opp}_{\lambda}(S^p))  \leq \min_{h\in\mathcal{H}} L_{\lambda}^{opp}(h) + \Delta^{opp}_{\lambda}\right) \leq \delta,
\end{align*}
where
\begin{align*}
\Delta^{opp}_{\lambda} = 3\alpha + 2\lambda \Delta^{opp} + \widetilde{\mathcal{O}}\left(\sqrt{\frac{d}{n}} + \lambda \sqrt{\frac{d}{P_{10}n}}\right)
\end{align*}
and 
\begin{align*}
\Delta^{opp} = \frac{2\alpha}{P_{10}/3 + \alpha} = \mathcal{O}\left(\frac{\alpha}{P_{10}}\right).
\end{align*}
\end{theorem}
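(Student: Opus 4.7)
The plan is to mirror the proof of Theorem \ref{thm:agnostic-upper-bound-demog-par-app}, substituting the demographic-parity concentration machinery with its equal-opportunity analogue. First I would establish uniform concentration of the corrupted empirical risk around the true risk: by the classical VC bound applied to the clean subsample, $\sup_{h\in\mathcal{H}}|\widehat{\mathcal{R}}^c(h)-\mathcal{R}(h)|$ is $\widetilde{\mathcal{O}}(\sqrt{d/n})$ with high probability, and since $|\poisoned|\sim\mathrm{Bin}(n,\alpha)$, a Chernoff bound gives $\sup_{h}|\widehat{\mathcal{R}}^p(h)-\widehat{\mathcal{R}}^c(h)|\le 3\alpha/2$ with probability at least $1-\delta/4$ (using $n\geq 12\log(12/\delta)/\alpha$). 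A union bound yields
\begin{equation*}
\mathbb{P}^{\mathcal{A}}\!\left(\sup_{h\in\mathcal{H}}|\widehat{\mathcal{R}}^p(h)-\mathcal{R}(h)| > \tfrac{3\alpha}{2} + \widetilde{\mathcal{O}}\!\left(\sqrt{d/n}\right)\right) \leq \tfrac{\delta}{2}.
\end{equation*}

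Next I would invoke Lemma \ref{lemma:uniform-bound-equal-opp}, which is the equal-opportunity counterpart to Lemma \ref{lemma:uniform-bound-demog-par} and supplies exactly the uniform deviation bound
\begin{equation*}
\mathbb{P}^{\mathcal{A}}\!\left(\sup_{h\in\mathcal{H}}|\widehat{\mathcal{D}}^{opp}(h)-\mathcal{D}^{opp}(h)| > \Delta^{opp} + 16\sqrt{\tfrac{2d\log(2en/d)+2\log(96/\delta)}{(1-\alpha)P_{10}n}}\right) \leq \tfrac{\delta}{2},
\end{equation*}
whose validity requires the sample-size condition already assumed in the theorem. This is the step that replaces the demographic-parity lemma in the parallel argument, and it is where the $P_{10}$-dependence (rather than $P_0$) enters.

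Finally I would conclude via the standard ERM-style chain: on the intersection of the two good events (probability $\ge 1-\delta$), for any $h\in\mathcal{H}$ and in particular for $\mathcal{L}^{opp}_\lambda(S^p)$,
\begin{align*}
L^{opp}_\lambda(\mathcal{L}^{opp}_\lambda(S^p)) &\leq \widehat{\mathcal{R}}^p(\mathcal{L}^{opp}_\lambda(S^p)) + \lambda\widehat{\mathcal{D}}^{opp}(\mathcal{L}^{opp}_\lambda(S^p)) + \tfrac{1}{2}\Delta^{opp}_\lambda \\
&\leq \min_{h\in\mathcal{H}}\bigl(\widehat{\mathcal{R}}^p(h)+\lambda\widehat{\mathcal{D}}^{opp}(h)\bigr) + \tfrac{1}{2}\Delta^{opp}_\lambda \\
&\leq \min_{h\in\mathcal{H}} L^{opp}_\lambda(h) + \Delta^{opp}_\lambda,
\end{align*}
where the middle equality uses that $\mathcal{L}^{opp}_\lambda$ minimizes the empirical weighted objective, and the outer inequalities apply the two uniform bounds to pay $\Delta^{opp}_\lambda/2$ on each side. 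Collecting constants gives the stated form of $\Delta^{opp}_\lambda$.

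There is no genuinely new obstacle here: the hard analytic work has already been done inside Lemma \ref{lemma:uniform-bound-equal-opp} (bounding the adversary's effect on conditional-probability estimates via Lemma \ref{lemma:upper-bound-delta-opp}, then extending a per-hypothesis concentration result uniformly over $\mathcal{H}$ by symmetrization and Sauer's lemma, with $P_{10}$ playing the role previously played by $P_0$). What I would be careful about is bookkeeping of the failure probabilities so that the four bad events (risk deviation on clean data, poisoned-count being too large, and the two fairness-related events inside Lemma \ref{lemma:uniform-bound-equal-opp}) together contribute at most $\delta$, which forces the $\log(16/\delta)$, $\log(12/\delta)$, and $\log(96/\delta)$ terms visible in the sample-size hypothesis and inside $\widetilde{\mathcal{O}}$.
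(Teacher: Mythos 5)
Your argument is correct and follows essentially the same route as the paper: decompose the empirical risk deviation into a clean-data VC term plus a Chernoff bound on the number of poisoned points, invoke the uniform equal-opportunity deviation bound (Lemma~\ref{lemma:uniform-bound-equal-opp}, with $P_{10}$ in place of $P_0$), and close with the standard ERM sandwich paying $\Delta^{opp}_\lambda/2$ on each side. One small remark in your favor: the paper's write-up cites the non-uniform Lemma~\ref{lemma:non-uniform-bound-equal-opp} when stating the uniform $\sup_h$ bound, which appears to be a typographical slip, whereas you correctly reference the uniform Lemma~\ref{lemma:uniform-bound-equal-opp}.
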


\begin{proof}
Similarly to the proof of Theorem \ref{thm:agnostic-upper-bound-demog-par}, we combine
\begin{equation}
\mathbb{P}^{\mathcal{A}}\left(\sup_{h\in\mathcal{H}}|\widehat{\mathcal{R}}^p(h) - \mathcal{R}(h)| > \frac{3\alpha}{2} + 2\sqrt{\frac{8d\log(\frac{en}{d}) + 2\log(16/\delta)}{n}}\right) \leq \frac{\delta}{2}.
\end{equation}
and Lemma \ref{lemma:non-uniform-bound-equal-opp}
\begin{equation}
\label{eqn:uniform-bound-demog-parity-finite-h}
\mathbb{P}^{\mathcal{A}}_{S^p} \left(\sup_{h\in\mathcal{H}}|\widehat{\mathcal{D}}^{opp}(h) - \mathcal{D}^{opp}(h)| > \Delta^{opp} +  16 \sqrt{\frac{2d\log(\frac{2en}{d}) + 2\log(96/\delta)}{(1-\alpha)P_{10}n}}\right) \leq \frac{\delta}{2}
\end{equation}
Now let $h_{\lambda} = \argmin_{h\in\mathcal{H}}(\widehat{R}^p(h) + \lambda \widehat{\mathcal{D}}^{opp}(h))$ and let
\begin{align*}
\Delta^{opp}_{\lambda} = 3\alpha + 4\sqrt{\frac{8d\log(\frac{en}{d}) + 2\log(16/\delta)}{n}} + 2\lambda \Delta^{opp} + 32\lambda \sqrt{\frac{2d\log(\frac{2en}{d}) + 2\log(96/\delta)}{(1-\alpha)P_{10}n}}.
\end{align*} 
Then we have that with probability at least $1 - \delta$
\begin{align*}
L_{\lambda}^{opp}(\mathcal{L}^{opp}_{\lambda}(S^p)) & = \mathcal{R}(\mathcal{L}^{opp}_{\lambda}(S^p)) + \lambda \mathcal{D}^{opp}(\mathcal{L}^{opp}_{\lambda}(S^p)) \\ & \leq  \widehat{\mathcal{R}}^{p}(\mathcal{L}^{opp}_{\lambda}(S^p)) + \lambda \widehat{\mathcal{D}}^{opp}(\mathcal{L}^{par}_{\lambda}(S^p)) + \frac{1}{2}\Delta^{opp}_{\lambda} \\ & = \min_{h\in\mathcal{H}}\left(\widehat{\mathcal{R}}^{p}(h) + \lambda \widehat{\mathcal{D}}^{opp}(h)\right) + \frac{1}{2}\Delta^{opp}_{\lambda} \\
& \leq \min_{h\in\mathcal{H}}L_{\lambda}^{opp}(h) + \Delta^{opp}_{\lambda}.
\end{align*}
\end{proof}

\subsubsection{Component-wise upper bounds}

We now prove the component-wise upper bound results.
\paragraph{Bound for demographic parity}
Recall our notation $\widehat{h}^{r} \in \argmin_{h\in\mathcal{H}} \widehat{\mathcal{R}}^p(h)$ and $\widehat{h}^{par} \in \argmin_{h\in\mathcal{H}} \widehat{\mathcal{D}}^{par}(h)$. Further, we define the sets
\begin{align*}
\mathcal{H}_1 & = \left\{h\in\mathcal{H}: \widehat{\mathcal{R}}^p(h) - \widehat{\mathcal{R}}^p(\widehat{h}^{r}) \leq 3\alpha + 4\sqrt{\frac{8d\log(\frac{en}{d}) + 2\log(16/\delta)}{n}} \right\} \\
\mathcal{H}_2 & = \left\{h\in\mathcal{H}: \widehat{\mathcal{D}}^{par}(h) - \widehat{\mathcal{D}}^{par}(\widehat{h}^{par}) \leq 2\Delta^{par} + 32 \sqrt{\frac{2d\log(\frac{2en}{d}) + 2\log(96/\delta)}{(1-\alpha)P_0n}} \right\}.
\end{align*}
That is, $\mathcal{H}_1$ and $\mathcal{H}_2$ are the sets of classifiers that are not far from optimal on the train data, in terms of their risk and their fairness respectively. Define the \textit{component-wise learner}:
\begin{align*}
\mathcal{L}^{par}_{cw}(S^p) = 
\begin{cases}
    \text{any }h \in \mathcal{H}_1 \cap \mathcal{H}_2, & \text{if } \mathcal{H}_1 \cap \mathcal{H}_2 \neq \emptyset\\
    \text{any }h \in \mathcal{H}, & \text{otherwise,}
\end{cases}
\end{align*}
that returns a classifier that is good in both metrics, if such exists, or an arbitrary classifier otherwise. Then we have the following:

\begin{theorem}
\label{thm:comp-wise-upper-bound-demog-par-app}
Let $\mathcal{H}$ be any hypothesis space with $d = VC(\mathcal{H}) < \infty$. Let $\mathbb{P}\in \mathcal{P}(\prodspace)$ be a fixed distribution and let $\mathcal{A}$ be any malicious adversary of power $\alpha < 0.5$. Suppose that there exists a hypothesis $h^*\in\mathcal{H}$, such that $\ObjVec(h^*) \preceq \ObjVec(h)$ for all $h\in\mathcal{H}$. Then for any $\delta\in(0,1)$ and $n \geq \max \left\{\frac{8\log(16/\delta)}{(1-\alpha)P_0}, \frac{12\log(12/\delta)}{\alpha}, \frac{d}{2}\right\}$, with probability at least $1-\delta$:
\begin{align*}
\mathbf{L}^{par}(\mathcal{L}^{par}_{cw}(S^p))  \preceq \left(6\alpha + \widetilde{\mathcal{O}}\left(\sqrt{\frac{d}{n}}\right), 4\Delta^{par} +  \widetilde{\mathcal{O}}\left(\sqrt{\frac{d}{P_0 n}}\right) \right).
\end{align*}
\end{theorem}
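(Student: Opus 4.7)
The strategy is to reduce the theorem to two uniform concentration inequalities (already available in the supplementary material) plus a short chain of triangle inequalities, in the style of an agnostic ERM argument. The calibration of the slacks defining $\mathcal{H}_1$ and $\mathcal{H}_2$ is chosen precisely so that $h^*$ is itself an ``empirically good'' candidate and hence the intersection $\mathcal{H}_1\cap\mathcal{H}_2$ is non-empty. Define
\begin{align*}
\varepsilon_R &= \tfrac{3\alpha}{2} + 2\sqrt{\tfrac{8d\log(en/d) + 2\log(16/\delta)}{n}}, \\
\varepsilon_D &= \Delta^{par} + 16\sqrt{\tfrac{2d\log(2en/d) + 2\log(96/\delta)}{(1-\alpha)P_0 n}},
\end{align*}
so that by construction $2\varepsilon_R$ and $2\varepsilon_D$ are exactly the slacks appearing in the definitions of $\mathcal{H}_1$ and $\mathcal{H}_2$.

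First I would show that on an event $E$ of probability at least $1-\delta$, both uniform bounds
\[
\sup_{h\in\mathcal{H}}|\widehat{\mathcal{R}}^p(h) - \mathcal{R}(h)| \leq \varepsilon_R \quad\text{and}\quad \sup_{h\in\mathcal{H}}|\widehat{\mathcal{D}}^{par}(h) - \mathcal{D}^{par}(h)| \leq \varepsilon_D
\]
hold. The first follows by combining a standard VC-type concentration inequality for the $0/1$ loss on the clean subsample with a Chernoff tail bound on $|\poisoned|\sim\Bin(n,\alpha)$, exactly as in the proof of Theorem~\ref{thm:agnostic-upper-bound-demog-par-app}. The second follows directly from Lemma~\ref{lemma:uniform-bound-demog-par} applied with confidence $\delta/2$. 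A union bound delivers $E$.

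On $E$, I claim $h^*\in\mathcal{H}_1\cap\mathcal{H}_2$. Indeed, since $\widehat h^r$ minimizes $\widehat{\mathcal{R}}^p$ over $\mathcal{H}$ and $h^*$ minimizes $\mathcal{R}$ over $\mathcal{H}$,
\[
\widehat{\mathcal{R}}^p(h^*) - \widehat{\mathcal{R}}^p(\widehat h^r) \leq \bigl(\widehat{\mathcal{R}}^p(h^*) - \mathcal{R}(h^*)\bigr) + \bigl(\mathcal{R}(\widehat h^r) - \widehat{\mathcal{R}}^p(\widehat h^r)\bigr) \leq 2\varepsilon_R,
\]
which is the slack defining $\mathcal{H}_1$. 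The analogous reasoning with $\mathcal{D}^{par}$ and $\widehat h^{par}$ places $h^*$ in $\mathcal{H}_2$. Hence $\mathcal{H}_1\cap\mathcal{H}_2\neq\emptyset$ on $E$, and the learner returns some $\widehat h=\mathcal{L}^{par}_{cw}(S^p)\in\mathcal{H}_1\cap\mathcal{H}_2$.

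Finally I would bound the two components of $\mathbf{L}^{par}(\widehat h)$ by chaining: (i) uniform concentration at $\widehat h$, (ii) membership of $\widehat h$ in $\mathcal{H}_1$ (resp.~$\mathcal{H}_2$), (iii) optimality of $\widehat h^r$ (resp.~$\widehat h^{par}$) against $h^*$ on the empirical objective, and (iv) uniform concentration at $h^*$. Summing the four steps yields
\[
\mathcal{R}(\widehat h) - \mathcal{R}(h^*) \leq 4\varepsilon_R = 6\alpha + \widetilde{\mathcal{O}}\!\left(\sqrt{d/n}\right), \qquad \mathcal{D}^{par}(\widehat h) - \mathcal{D}^{par}(h^*) \leq 4\varepsilon_D = 4\Delta^{par} + \widetilde{\mathcal{O}}\!\left(\sqrt{d/(P_0 n)}\right),
\]
which is precisely the claimed bound. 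The real technical difficulty is not in this skeleton but is already absorbed into Lemma~\ref{lemma:uniform-bound-demog-par}, which had to control simultaneously the adversarial perturbation of the ratio estimates $\gamma^p_a(h)$ and the usual VC-style symmetrization for an objective other than the $0/1$ loss; once that lemma is in hand, the present theorem follows by the calibration argument above.
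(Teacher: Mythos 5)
Your proof is correct and follows essentially the same route as the paper's: establish the two uniform concentration bounds on an event of probability $1-\delta$ (combining the Chernoff bound on $|\poisoned|$ with VC-type concentration for the risk, and invoking Lemma~\ref{lemma:uniform-bound-demog-par} for $\widehat{\mathcal{D}}^{par}$), show $h^*\in\mathcal{H}_1\cap\mathcal{H}_2$ so the learner's first branch fires, and chain four triangle-inequality steps to bound each coordinate of $\mathbf{L}^{par}$ by $4\varepsilon_R$ and $4\varepsilon_D$ respectively. The decomposition, the calibration of the slacks, and the final constants all match the paper's argument.
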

\begin{proof} From the proof of Theorem \ref{thm:agnostic-upper-bound-demog-par-app}, we have that with probability at least $1-\delta$, both of the following hold:
\begin{equation*}
\sup_{h\in\mathcal{H}}|\widehat{\mathcal{R}}^p(h) - \mathcal{R}(h)| \leq \frac{3\alpha}{2} + 2\sqrt{\frac{8d\log(\frac{en}{d}) + 2\log(16/\delta)}{n}},
\end{equation*}
\begin{equation*}
\sup_{h\in\mathcal{H}}\left|\widehat{\mathcal{D}}^{par}(h) - \mathcal{D}^{par}(h)\right| \leq \Delta^{par} + 16 \sqrt{\frac{2d\log(\frac{2en}{d}) + 2\log(96/\delta)}{(1-\alpha)P_0n}}.
\end{equation*}
We show that under this event, $\mathcal{H}_1 \cap \mathcal{H}_2 \neq \emptyset$ and for any $h\in \mathcal{H}_1 \cap \mathcal{H}_2$, 
\begin{equation*}
\mathbf{L}^{par}(h) \preceq \left(6\alpha + 8\sqrt{\frac{8d\log(\frac{en}{d}) + 2\log(16/\delta)}{n}} , 4\Delta^{par} + 64\sqrt{\frac{2d\log(\frac{2en}{d}) + 2\log(96/\delta)}{(1-\alpha)P_0n}}\right),
\end{equation*}
from which the result follows. Note that
\begin{align*}
\widehat{\mathcal{R}}^p(h^*) & \leq \mathcal{R}(h^*) + \frac{3\alpha}{2} + 2\sqrt{\frac{8d\log(\frac{en}{d}) + 2\log(16/\delta)}{n}} \\
& \leq \mathcal{R}(\widehat{h}^r) + \frac{3\alpha}{2} + 2\sqrt{\frac{8d\log(\frac{en}{d}) + 2\log(16/\delta)}{n}} \\
& \leq \widehat{\mathcal{R}}^p(\widehat{h}^r) + 3\alpha + 4\sqrt{\frac{8d\log(\frac{en}{d}) + 2\log(16/\delta)}{n}}
\end{align*}
and similarly
\begin{align*}
\widehat{\mathcal{D}}^{par}(h^*) \leq \widehat{\mathcal{D}}^{par}(\widehat{h}^r) + 2\Delta^{par} + 32 \sqrt{\frac{2d\log(\frac{2en}{d}) + 2\log(96/\delta)}{(1-\alpha)P_0n}}
\end{align*}
Therefore, $h^* \in \mathcal{H}_1 \cap \mathcal{H}_2$ and so $\mathcal{H}_1 \cap \mathcal{H}_2\neq \emptyset$.

Now take any $h \in \mathcal{H}_1 \cap \mathcal{H}_2$. We have that
\begin{align*}
\mathcal{R}(h) & \leq \widehat{\mathcal{R}}^p(h) +  \frac{3\alpha}{2} + 2\sqrt{\frac{8d\log(\frac{en}{d}) + 2\log(16/\delta)}{n}}\\
& \leq \widehat{\mathcal{R}}^p(\widehat{h}^r) +  3\frac{3\alpha}{2} + 6\sqrt{\frac{8d\log(\frac{en}{d}) + 2\log(16/\delta)}{n}} \\
& \leq \widehat{\mathcal{R}}^p(h^*) +  3\frac{3\alpha}{2} + 6\sqrt{\frac{8d\log(\frac{en}{d}) + 2\log(16/\delta)}{n}} \\
& \leq \mathcal{R}(h^*) +  6\alpha + 8\sqrt{\frac{8d\log(\frac{en}{d}) + 2\log(16/\delta)}{n}}.
\end{align*}
Similarly,
\begin{align*}
\mathcal{D}^{par}(h) \leq \mathcal{D}^{par}(h^*) + 4\Delta^{par} + 64\sqrt{\frac{2d\log(\frac{2en}{d}) + 2\log(96/\delta)}{(1-\alpha)P_0n}}
\end{align*}
and the result follows.
\end{proof}

\paragraph{Bound for equal opportunity}
Similarly, let $\widehat{h}^{opp} \in \argmin_{h\in\mathcal{H}} \widehat{\mathcal{D}}^{opp}(h)$. Further, we define the set
\begin{align*}
\mathcal{H}_3 & = \left\{h\in\mathcal{H}: \widehat{\mathcal{D}}^{opp}(h) - \widehat{\mathcal{D}}^{opp}(\widehat{h}^{opp}) \leq 2\Delta^{opp} + 32 \sqrt{\frac{2d\log(\frac{2en}{d}) + 2\log(96/\delta)}{(1-\alpha)P_{10}n}} \right\}.
\end{align*}
That is, $\mathcal{H}_3$ is the set of classifiers that are not far from optimal on the train data, in terms of equal opportunity fairness. Now define the \textit{component-wise learner} for equal opportunity:
\begin{align*}
\mathcal{L}^{opp}_{cw}(S^p) = 
\begin{cases}
    \text{any }h \in \mathcal{H}_1 \cap \mathcal{H}_3, & \text{if } \mathcal{H}_1 \cap \mathcal{H}_3 \neq \emptyset\\
    \text{any }h \in \mathcal{H}, & \text{otherwise,}
\end{cases}
\end{align*}
that returns a classifier that is good in both metrics, if such exists, or an arbitrary classifier otherwise. Then we have the following:

\begin{theorem}
\label{thm:comp-wise-upper-bound-equal-opp-app}
Let $\mathcal{H}$ be any hypothesis space with $d = VC(\mathcal{H}) < \infty$. Let $\mathbb{P}\in \mathcal{P}(\prodspace)$ be a fixed distribution and let $\mathcal{A}$ be any malicious adversary of power $\alpha < 0.5$. Suppose that there exists a hypothesis $h^*\in\mathcal{H}$, such that $\ObjVec(h^*) \preceq \ObjVec(h)$ for all $h\in\mathcal{H}$. Then for any $\delta\in(0,1)$ and $n \geq \max \left\{\frac{8\log(16/\delta)}{(1-\alpha) P_{10}}, \frac{12\log(12/\delta)}{\alpha}, \frac{d}{2}\right\}$, with probability at least $1-\delta$
\begin{align*}
\mathbf{L}^{opp}(\mathcal{L}^{opp}_{cw}(S^p))  \preceq \left(6\alpha + \widetilde{\mathcal{O}}\left(\sqrt{\frac{d}{n}}\right), 4\Delta^{opp} +  \widetilde{\mathcal{O}}\left(\sqrt{\frac{d}{P_{10} n}}\right) \right).
\end{align*}
\end{theorem}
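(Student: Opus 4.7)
The plan is to mirror the proof of Theorem \ref{thm:comp-wise-upper-bound-demog-par-app}, substituting the equal opportunity concentration tools for the demographic parity ones. The only ``new'' ingredient relative to that earlier proof is the use of Lemma \ref{lemma:uniform-bound-equal-opp} in place of Lemma \ref{lemma:uniform-bound-demog-par}; the rest of the argument is a routine chaining of triangle-like inequalities on empirical and population quantities.

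First I would establish the two high-probability concentration events that drive the argument. The first is the uniform loss bound from classical VC theory combined with the Chernoff bound on $|\poisoned| \sim \Bin(n,\alpha)$, giving
\begin{equation*}
\sup_{h\in\mathcal{H}}|\widehat{\mathcal{R}}^p(h) - \mathcal{R}(h)| \leq \tfrac{3\alpha}{2} + 2\sqrt{\tfrac{8d\log(en/d) + 2\log(16/\delta)}{n}}
\end{equation*}
with probability at least $1-\delta/2$, exactly as in the proof of Theorem \ref{thm:agnostic-upper-bound-demog-par-app}. The second is the uniform fairness deviation bound from Lemma \ref{lemma:uniform-bound-equal-opp}, applied with confidence $\delta/2$, giving
\begin{equation*}
\sup_{h\in\mathcal{H}}\left|\widehat{\mathcal{D}}^{opp}(h) - \mathcal{D}^{opp}(h)\right| \leq \Delta^{opp} + 16 \sqrt{\tfrac{2d\log(2en/d) + 2\log(96/\delta)}{(1-\alpha)P_{10}n}}.
\end{equation*}
A union bound ensures both hold simultaneously with probability at least $1-\delta$, and the sample-size hypothesis is precisely what both lemmas require.

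Next, under this joint good event I would verify that $\mathcal{H}_1 \cap \mathcal{H}_3 \neq \emptyset$ by showing $h^* \in \mathcal{H}_1 \cap \mathcal{H}_3$. Using the two uniform bounds and the fact that $h^*$ is simultaneously an accuracy and fairness minimizer on the true distribution (so $\mathcal{R}(h^*) \leq \mathcal{R}(\widehat{h}^r)$ and $\mathcal{D}^{opp}(h^*) \leq \mathcal{D}^{opp}(\widehat{h}^{opp})$), passing from population to empirical quantities via the uniform bounds twice (once on $h^*$, once on $\widehat{h}^r$ or $\widehat{h}^{opp}$) gives exactly the two slack terms defining $\mathcal{H}_1$ and $\mathcal{H}_3$, showing that $h^*$ lies in both sets and hence $\mathcal{L}^{opp}_{cw}(S^p) \in \mathcal{H}_1 \cap \mathcal{H}_3$.

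Finally, for any $h \in \mathcal{H}_1 \cap \mathcal{H}_3$, I would convert the membership constraints back to population-level bounds. For the risk, the chain is $\mathcal{R}(h) \leq \widehat{\mathcal{R}}^p(h) + (\text{concentration slack}) \leq \widehat{\mathcal{R}}^p(\widehat{h}^r) + (\text{def.\ of } \mathcal{H}_1) + (\text{concentration slack}) \leq \widehat{\mathcal{R}}^p(h^*) + \cdots \leq \mathcal{R}(h^*) + 6\alpha + \widetilde{\mathcal{O}}(\sqrt{d/n})$, and analogously for the fairness deviation, yielding $\mathcal{D}^{opp}(h) \leq \mathcal{D}^{opp}(h^*) + 4\Delta^{opp} + \widetilde{\mathcal{O}}(\sqrt{d/(P_{10}n)})$. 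This is a purely mechanical four-term triangle inequality in each coordinate. I do not anticipate any real obstacle: the only subtlety is keeping the constants in $\mathcal{H}_1, \mathcal{H}_3$ matched to the concentration lemmas (a factor of $2$ is baked in so that both $h^*$ and any empirical near-optimizer lie inside), and that subtlety is already resolved by the analogous demographic parity proof, so the equal opportunity version is a direct transcription.
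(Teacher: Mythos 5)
Your proposal is correct and follows essentially the same route as the paper: the paper's own proof establishes exactly the two uniform concentration events you describe (the VC-plus-Chernoff bound on the risk and the uniform equal opportunity deviation bound) and then declares the rest identical to the demographic parity case, which is the same membership-of-$h^*$ plus four-term chaining argument you lay out. Your citation of the uniform Lemma \ref{lemma:uniform-bound-equal-opp} is in fact the right one (the paper's proof text nominally points to the non-uniform version, which appears to be a typo).
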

\begin{proof}
From the proof of Theorem \ref{thm:agnostic-upper-bound-equal-opp-app} we have that with probability at least $1-\delta$:
\begin{equation*}
\sup_{h\in\mathcal{H}}|\widehat{\mathcal{R}}^p(h) - \mathcal{R}(h)| \leq \frac{3\alpha}{2} + 2\sqrt{\frac{8d\log(\frac{en}{d}) + 2\log(16/\delta)}{n}}
\end{equation*}
and Lemma \ref{lemma:non-uniform-bound-equal-opp}
\begin{equation*}
\sup_{h\in\mathcal{H}}|\widehat{\mathcal{D}}^{opp}(h) - \mathcal{D}^{opp}(h)| \leq \Delta^{opp} +  16 \sqrt{\frac{2d\log(\frac{2en}{d}) + 2\log(96/\delta)}{(1-\alpha)P_{10}n}}.
\end{equation*}
The proof proceeds in an identical manner to that of Theorem \ref{thm:comp-wise-upper-bound-demog-par-app}.
\end{proof}

\paragraph{Upper bound with fast rates}
Recall our notation:
\begin{align}
\label{eqn:estimates-zero-conditional-probs}
\bar{\gamma}^p_{1a}(h) = \frac{\sum_{i=1}^n \mathbbm{1}\{h(x^p_i) = 0, a^p_i = a, y^p_1 = 1\}}{\sum_{i=1}^n \mathbbm{1}\{a^p_i = a, y^p_i = 1\}}
\end{align}
as the empirical estimate of $\bar{\gamma}_{1a}(h) \coloneqq \mathbb{P}(h(X) = 0| A = a, Y = 1) = 0$ for $a\in\{0,1\}$. Given a (corrupted) training set $S^p$, denote by 
\begin{equation}
\label{eqn:defn-of-h-star-set}
\mathcal{H}^*  \vcentcolon = \left\{h\in\mathcal{H}\middle| \max_a \bar{\gamma}^p_{1a}(h) \leq \Delta^{opp} \land \widehat{\mathcal{R}}^p(h) \leq \frac{3\alpha}{2}\right\}
\end{equation}
the set of all classifiers that have a small loss and small values of $\bar{\gamma}^p_{1a}$ for both $a\in\{0,1\}$ on $S^p$. Consider the learner $\mathcal{L}^{fast}$ defined by
\begin{align}
\label{eqn:fancy-learner}
\mathcal{L}^{fast}(S^p) = 
\begin{cases}
    \text{any }h \in \mathcal{H}^*, & \text{if } \mathcal{H}^*\neq \emptyset\\
    \text{any }h \in \mathcal{H}, & \text{otherwise.}
\end{cases}
\end{align}
We then have the following:

\begin{theorem}
\label{thm:upper-bound-realizable-app}
Let $\mathcal{H}$ be finite and $\mathbb{P}\in \mathcal{P}(\prodspace)$ be such that for some $h^*\in \mathcal{H}$, $\mathbb{P}(h^*(X) = Y) = 1$. Denote by $P_{1a} = \mathbb{P}(Y = 1, A = a)$ for $a\in\{0,1\}$. Let $\mathcal{A}$ be any malicious adversary of power $\alpha < 0.5$.
Then for any $\delta, \eta\in(0,1)$ and any 
\begin{align*}
n & \geq \max \left\{\frac{8\log(16|\mathcal{H}|/\delta)}{(1-\alpha) P_{10}},  \frac{12\log(12/\delta)}{\alpha}, \frac{2\log(8|\mathcal{H}|/\delta)}{3\eta^2\alpha}, \frac{2\log(\frac{16|\mathcal{H}|}{\delta})}{3\eta^2 (1-\alpha) P_{10}\alpha}\right\} \\ & = \Omega\left(\frac{\log(|\mathcal{H}|/\delta)}{\eta^2 P_{10}\alpha}\right)
\end{align*}
with probability at least $1-\delta$
\begin{align*}
\mathbf{L}^{opp}(\mathcal{L}^{fast}) \preceq \left(\frac{3\alpha}{1-\eta}, \frac{2\Delta^{opp}}{1-\eta}\right).
\end{align*}
\end{theorem}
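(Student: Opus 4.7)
The plan is to establish (i) that $h^*$ lies in $\mathcal{H}^*$ with high probability, so $\mathcal{L}^{fast}$ is guaranteed to return some element of $\mathcal{H}^*$, and (ii) that every element of $\mathcal{H}^*$ satisfies both $\mathcal{R}(h)\leq 3\alpha/(1-\eta)$ and $\mathcal{D}^{opp}(h)\leq 2\Delta^{opp}/(1-\eta)$ with high probability. Since in the realizable case $\mathcal{R}(h^*)=0$ and $\mathcal{D}^{opp}(h^*)=0$, this is exactly the desired conclusion $\mathbf{L}^{opp}(\mathcal{L}^{fast}(S^p))\preceq (3\alpha/(1-\eta),2\Delta^{opp}/(1-\eta))$. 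All intermediate failure events will be combined into a single union bound tuned against the assumed lower bounds on $n$.

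For step (i), note that since $h^*(X)=Y$ almost surely no clean point contributes to $\widehat{\mathcal{R}}^p(h^*)$, so $\widehat{\mathcal{R}}^p(h^*)\leq |\poisoned|/n$; a standard Chernoff bound on $|\poisoned|\sim\Bin(n,\alpha)$ then gives $\widehat{\mathcal{R}}^p(h^*)\leq 3\alpha/2$ under the assumption $n\geq 12\log(12/\delta)/\alpha$. For the fairness condition, $\bar{\gamma}_{1a}(h^*)=\mathbb{P}(h^*(X)=0\mid A=a,Y=1)=0$ for both $a$, so Lemma \ref{lemma:realizable-non-uniform}(b) (applied with a $\delta/O(1)$ parameter, consuming the term $n\geq 8\log(16|\mathcal{H}|/\delta)/((1-\alpha)P_{10})$) yields $\bar{\gamma}^p_{10}(h^*)+\bar{\gamma}^p_{11}(h^*)\leq \Delta^{opp}$, hence each $\bar{\gamma}^p_{1a}(h^*)\leq \Delta^{opp}$, so $h^*\in\mathcal{H}^*$.

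For step (ii), the risk direction, I work with the unobservable clean empirical risk $\widehat{\mathcal{R}}^c(h)\vcentcolon=\frac{1}{n}\sum_{i\notin\poisoned}\mathbbm{1}\{h(x^c_i)\neq y^c_i\}$, which satisfies $n\widehat{\mathcal{R}}^c(h)\sim\Bin(n,(1-\alpha)\mathcal{R}(h))$ and $\widehat{\mathcal{R}}^c(h)\leq \widehat{\mathcal{R}}^p(h)$ deterministically. For any $h$ with $\mathcal{R}(h)>3\alpha/(1-\eta)$, the assumption $\alpha<1/2$ gives $(1-\eta)(1-\alpha)\mathcal{R}(h)>3\alpha/2$, so a multiplicative Chernoff bound produces $\mathbb{P}(\widehat{\mathcal{R}}^c(h)\leq 3\alpha/2)\leq \exp\bigl(-\tfrac{3\eta^2\alpha(1-\alpha)n}{2(1-\eta)}\bigr)$. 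Union-bounding over $\mathcal{H}$ and using $n\geq 2\log(8|\mathcal{H}|/\delta)/(3\eta^2\alpha)$ makes this event negligible; contraposing shows that every $h\in\mathcal{H}^*$ must have $\mathcal{R}(h)\leq 3\alpha/(1-\eta)$.

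The more delicate part is the fairness bound. Applying Lemma \ref{lemma:realizable-non-uniform}(a) to every pair $(h,a)\in\mathcal{H}\times\{0,1\}$ gives, with high probability uniformly, $\bar{\gamma}^p_{1a}(h)>(1-\eta)\bar{\gamma}_{1a}(h)-\bar{\Delta}_{1a}(h)$. For $h\in\mathcal{H}^*$, $\bar{\gamma}^p_{1a}(h)\leq \Delta^{opp}$, so rearranging yields $\bar{\gamma}_{1a}(h)\leq (\Delta^{opp}+\bar{\Delta}_{1a}(h))/(1-\eta)$. Lemma \ref{lemma:upper-bound-delta-opp}, applied uniformly over $\mathcal{H}$, then supplies $\bar{\Delta}_{1a}(h)\leq \bar{\Delta}_{10}(h)+\bar{\Delta}_{11}(h)\leq \Delta^{opp}$, giving $\bar{\gamma}_{1a}(h)\leq 2\Delta^{opp}/(1-\eta)$ for both $a$. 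The elementary inequality $\mathcal{D}^{opp}(h)=|\bar{\gamma}_{10}(h)-\bar{\gamma}_{11}(h)|\leq \max_a\bar{\gamma}_{1a}(h)$ then delivers the claim. The main obstacle is that the $h$-dependent exponential in Lemma \ref{lemma:realizable-non-uniform}(a) scales with $\bar{\gamma}_{1a}(h)$, so only those $h$ whose fairness deviation is already at least of order $\Delta^{opp}/(1-\eta)$ need to be ruled out; matching this lower bound to the per-$h$ union-bound budget is precisely what forces the sample-size condition $n=\Omega(\log(|\mathcal{H}|/\delta)/(\eta^2 P_{10}\alpha))$, and is what yields the $\mathcal{O}(1/\alpha)$ rate instead of the $\mathcal{O}(1/\alpha^2)$ rate obtained by the additive uniform-concentration arguments used in the earlier theorems.
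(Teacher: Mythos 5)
Your proposal is correct and follows essentially the same route as the paper's proof: Lemma \ref{lemma:realizable-non-uniform}(b) plus a Chernoff bound on $|\poisoned|$ to place $h^*$ in $\mathcal{H}^*$, a multiplicative Chernoff bound with a union bound over $\mathcal{H}$ to rule out hypotheses with risk above $3\alpha/(1-\eta)$, and the multiplicative lower-tail bound of Lemma \ref{lemma:realizable-non-uniform}(a) combined with the $h$-independent bound on $\bar{\Delta}_{10}+\bar{\Delta}_{11}$ to rule out hypotheses with large $\mathcal{D}^{opp}$. The only caveat is that your intermediate claim of a lower-tail bound holding uniformly over \emph{all} pairs $(h,a)$ is not literally available (the exponent in Lemma \ref{lemma:realizable-non-uniform}(a) degrades as $\bar{\gamma}_{1a}(h)\to 0$), but you correctly observe at the end that the bound is only needed for pairs with $\bar{\gamma}_{1a}(h)$ at least of order $\Delta^{opp}/(1-\eta)$ --- which is exactly how the paper restricts attention to its bad set $\mathcal{H}_2$ --- so this is a matter of presentation rather than a gap.
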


\begin{proof}
Throughout the proof we will drop the dependence of $\mathcal{H}^*$ (and other subsets of $\mathcal{H}$) of the data $S^p$. We will be interested in the probability of certain events involving $\mathcal{H}^*$ under all randomness in the generation of $S^p$: the random sampling of the clean data, the marked point and the adversary (denoted by $\mathbb{P}^{\mathcal{A}}$ as elsewhere). 

\paragraph{Step 1} First note that by Lemma \ref{lemma:realizable-non-uniform}(b), whenever $n \geq \max \left\{\frac{8\log(16/\delta)}{(1-\alpha)P_{10}}, \frac{12\log(12/\delta)}{\alpha}\right\}$
\begin{align*}
\mathbb{P}^{\mathcal{A}}\left(\left(\bar{\gamma}^p_{10}(h^*) > \Delta^{opp}\right) \lor \left(\bar{\gamma}^p_{11}(h^*) > \Delta^{opp}\right)\right) \leq \mathbb{P}^{\mathcal{A}}\left(\bar{\gamma}^p_{10}(h^*) + \bar{\gamma}^p_{11}(h^*) > \Delta^{opp}\right) \leq \frac{\delta}{4}
\end{align*}
In addition, since $|\poisoned| \sim \Bin(n,\alpha)$
\begin{align*}
\mathbb{P}^{\mathcal{A}}\left(\widehat{\mathcal{R}}^p(h^*) > \frac{3\alpha}{2}\right) \leq \mathbb{P}^{\mathcal{A}}\left(|\poisoned| \geq \frac{3\alpha}{2}n\right) \leq \exp(-\frac{\alpha n}{12}) \leq  \frac{\delta}{12}.
\end{align*}
It follows that $\mathbb{P}^{\mathcal{A}}\left(h^*\not\in\mathcal{H}^*\right) \leq \frac{\delta}{4} + \frac{\delta}{12} = \frac{\delta}{3}$.

\paragraph{Step 2} Next let $\mathcal{H}_1\subset \mathcal{H}$ be the set $\left\{h\in\mathcal{H}\middle| \mathcal{R}(h, \mathbb{P}) > \frac{3\alpha}{1-\eta}\right\}$. For any $h\in\mathcal{H}_1$
\begin{align*}
\mathbb{P}^{\mathcal{A}}\left(\widehat{\mathcal{R}}^c(h)\leq 3\alpha\right) \leq \mathbb{P}^{\mathcal{A}}\left(\Bin\left(n,\frac{3\alpha}{1-\eta}\right)\leq (1-\eta)\frac{3\alpha}{(1-\eta)}n\right) & \leq \exp\left(-\eta^2\frac{3\alpha}{2(1-\eta)}n \right) \\ & \leq \frac{\delta}{8|\mathcal{H}|},
\end{align*}
as long as $n \geq \frac{2\log(\frac{8|\mathcal{H}|}{\delta})}{3\eta^2\alpha} > \frac{2\log(\frac{8|\mathcal{H}|}{\delta})(1-\eta)}{3\eta^2\alpha}$. Taking a union bound over all $h\in\mathcal{H}_1$,
\begin{align*}
\mathbb{P}^{\mathcal{A}}\left(\min_{h\in\mathcal{H}_1}\widehat{\mathcal{R}}^c(h)\leq 3\alpha\right) \leq \frac{\delta}{8}
\end{align*}
Since also $\mathbb{P}^{\mathcal{A}}(|\poisoned|\geq \frac{3\alpha}{2}) \leq \frac{\delta}{12}$ and $\widehat{\mathcal{R}}^p(h) \geq \widehat{\mathcal{R}}^c(h) - |\poisoned|$, we obtain
\begin{align}
\label{eqn:H_1_result}
\mathbb{P}^{\mathcal{A}}\left(\min_{h\in\mathcal{H}_1}\widehat{\mathcal{R}}^p(h)\leq \frac{3\alpha}{2}\right) \leq \mathbb{P}^{\mathcal{A}}\left(\left(\min_{h\in\mathcal{H}_1}\widehat{\mathcal{R}}^c(h)\leq 3\alpha\right)\lor \left(|\poisoned| \geq \frac{3\alpha}{2}\right)\right) \leq \frac{\delta}{8} + \frac{\delta}{12} = \frac{5\delta}{24}.
\end{align}
Similarly, let $\mathcal{H}_2 = \left\{h\in\mathcal{H}\middle|\mathcal{D}^{opp}(h) > \frac{2}{1-\eta}\Delta^{opp} \right\}$. Fix any $h\in\mathcal{H}_2$. Assume without loss of generality that $\bar{\gamma}_{10} \geq \bar{\gamma}_{11} \geq 0$ (for this particular $h$ only). Then $\bar{\gamma}_{10} \geq \bar{\gamma}_{10} - \bar{\gamma}_{11} = |\bar{\gamma}_{10} - \bar{\gamma}_{11}| = |\gamma_{10} - \gamma_{11}| > \frac{2}{1-\eta}\Delta^{opp}$ (note that the $\gamma_{1a}$ are non-negative). At the same time, by Lemma \ref{lemma:realizable-non-uniform}(a), 
\begin{align*}
\mathbb{P}^{\mathcal{A}}\left(\bar{\gamma}^p_{10} \leq (1-\eta)\bar{\gamma}_{10} - \bar{\Delta}_{10}\right) \leq \frac{\delta}{8|\mathcal{H}|},
\end{align*}
whenever $$n > \max\left\{\frac{8\log(\frac{16|\mathcal{H}|}{\delta})}{(1-\alpha) P_{10}}, \frac{4\log(\frac{16|\mathcal{H}|}{\delta})}{\eta^2 (1-\alpha)P_{10}\bar{\gamma}_{10}}\right\}.$$ This is indeed the case since $n > \frac{8\log(\frac{16|\mathcal{H}|}{\delta})}{(1-\alpha) P_{10}}$ by assumption and also $$n > \frac{2\log(\frac{16|\mathcal{H}|}{\delta})}{3\eta^2 (1-\alpha) P_{10}\alpha} \geq \frac{4\log(\frac{16|\mathcal{H}|}{\delta})}{\eta^2 (1-\alpha) P_{10}\bar{\gamma}_{10}}.$$ The last inequality is obtained by observing that $\bar{\gamma}_{10} \geq \frac{2}{1-\eta}\Delta^{opp} \geq 6\alpha$, which follows by using $P_{10} \leq 0.5, \alpha \leq 0.5, \eta > 0$.

Therefore, with probability at least $1-\frac{\delta}{8|\mathcal{H}|}$, $\max_a \bar{\gamma}^p_{1a} = \bar{\gamma}^p_{10} > (1-\eta)\bar{\gamma}_{10} - \Delta_{10} \geq 2\Delta^{opp} - E_{10} \geq 2\Delta^{opp} - E_{10} - E_{11}$, with $E_{1a} = \frac{B_{1a}}{C_{1a} + B_{1a}}$, where we used inequality (\ref{eqn:yet-another-bound}).

Crucially, $2\Delta^{opp} - E_{10} - E_{11}$ does not depend on $h$. Therefore, taking a union bound over all $h \in \mathcal{H}_2$,
\begin{align*}
\mathbb{P}^{\mathcal{A}}\left(\min_{h\in\mathcal{H}_2} \max_{a} \bar{\gamma}^p_{1a}(h) \leq 2\Delta^{opp} - E_{10} - E_{11}\right) \leq \frac{\delta}{8}.
\end{align*}
Note also that since $n \geq \max \left\{\frac{8\log(16/\delta)}{(1-\alpha)P_{10}}, \frac{12\log(12/\delta)}{\alpha}\right\}$, using inequality (\ref{eqn:yet-another-bound2}),
\begin{align*}
\mathbb{P}^{\mathcal{A}}\left(E_{10} + E_{11} > \Delta^{opp}\right) \leq \frac{\delta}{4}.
\end{align*}
Therefore
\begin{align}
\label{eqn:H_2_result}
\mathbb{P}^{\mathcal{A}}\left(\min_{h\in\mathcal{H}_2}\max_{a} \bar{\gamma}^p_{1a}(h) \leq \Delta^{opp}\right) & \leq \mathbb{P}^{\mathcal{A}}\left(\min_{h\in\mathcal{H}_2} \max_{a} \bar{\gamma}^p_{1a} \leq 2\Delta^{opp} - E_{10} - E_{11}\right) \notag \\ & + \mathbb{P}^{\mathcal{A}}\left(E_{10} + E_{11} > \Delta^{opp}\right) \notag \\ & \leq \frac{3\delta}{8}.
\end{align}
Finally, using (\ref{eqn:H_1_result}) and (\ref{eqn:H_2_result}),
\begin{align*}
\mathbb{P}^{\mathcal{A}}\left(\mathcal{H}^* \cap \left(\mathcal{H}_1\cup \mathcal{H}_2\right) \neq \emptyset\right) & = \mathbb{P}^{\mathcal{A}}\left(\left(\min_{h\in\mathcal{H}_1}\widehat{\mathcal{R}}^p(h)\leq \frac{3\alpha}{2}\right) \lor \left(\min_{h\in\mathcal{H}_2}\max_{a} \bar{\gamma}^p_{1a}(h) \leq \Delta^{opp}\right)\right) \\ & \leq \frac{5\delta}{24} + \frac{3\delta}{8} \\ & < \frac{2\delta}{3}.
\end{align*}
\paragraph{Step 3} Combining steps 1 and 2, we have that with probability at least $1-\delta$, $h^* \in \mathcal{H}^*$ (and so $\mathcal{H}^*$ is non-empty) and for any $h\in\mathcal{H}$, $\mathcal{R}(h, \mathbb{P}) \leq \frac{3\alpha}{1-\eta}$ and $\mathcal{D}^{opp}(h)\leq \frac{2}{1-\eta}\Delta^{opp}$ which completes the proof.
\end{proof}

\end{document}